\numberwithin{equation}{section} 
\def\blfootnote{\gdef\@thefnmark{}\@footnotetext}
\title{Nonparametric learning of stochastic differential equations from sparse and noisy data} 
\author{Arnab Ganguly$^{1}$\thanks{Research of A. Ganguly and J. Zhou is supported in part by NSF DMS-1855788 and NSF DMS-2246815. A. Ganguly is also supported by the Simons Foundation (Travel Support for Mathematicians)}, \ Riten Mitra $^{2}$, \ Jinpu Zhou$^{1}$\footnotemark[1]}
\begin{document}
\maketitle
\footnotetext[1]{Department of Mathematics, Louisiana State University, aganguly@lsu.edu (AG), zjinpu1@lsu.edu (JZ) }
\footnotetext[2]{Department of Bioinformatics and Biostatistics, University of Louisville, 	ritendranath.mitra@louisville.edu}

\begin{abstract}
The paper proposes a systematic framework for building data-driven stochastic differential equation (SDE) models from sparse, noisy observations. Unlike traditional parametric approaches, which assume a known functional form for the drift, our goal here is to learn the entire drift function directly from data without strong structural assumptions, making it especially relevant in scientific disciplines where system dynamics are partially understood or highly complex. We cast the estimation problem as minimization of the penalized negative log-likelihood functional over  a reproducing kernel Hilbert space (RKHS). In the sparse observation regime, the presence of unobserved trajectory segments makes the SDE likelihood intractable. To address this, we develop an Expectation-Maximization (EM) algorithm that employs a novel Sequential Monte Carlo (SMC) method to approximate the filtering distribution and generate Monte Carlo estimates of the E-step objective. The M-step then reduces to a penalized empirical risk minimization problem in the RKHS, whose minimizer is given by a finite linear combination of kernel functions via a generalized representer theorem.  To control model complexity across EM iterations, we also develop a hybrid Bayesian variant of the algorithm that uses shrinkage priors to identify significant coefficients in the kernel expansion.  We establish important theoretical convergence results for both the exact and approximate EM sequences. The resulting EM-SMC-RKHS procedure enables accurate estimation of the drift function of stochastic dynamical systems in low-data regimes and is broadly applicable across domains requiring continuous-time modeling under observational constraints. We demonstrate the effectiveness of our method through a series of numerical experiments.

\noindent
{\bf MSC 2020 subject classifications:} 62G05, 62M05, 60H10, 60J60, 46E22,  65C05, 65C35 \\

\noindent
{\bf Keywords:}  Reproducing kernel Hilbert spaces (RKHS), representer theorem,  nonparametric estimation, stochastic differential equations,  diffusion processes, Bayesian methods, EM algorithm, sequential Monte Carlo, particle filtering

\end{abstract}

\vspace{.1in}

\setcounter{equation}{0}
\renewcommand {\theequation}{\arabic{section}.\arabic{equation}}
\section{Introduction.}\label{sec:intro}
 Stochastic differential equations (SDEs) of the form 
\begin{align}\label{eq:sde0}
	\procX(t) = x_0+\int_0^t \drift(\procX(s))ds+\int_0^t \diffus(\procX(s))dW(s), \quad x_0 \in \R^d,
\end{align}
provide a powerful and flexible framework for modeling systems influenced by both deterministic and random forces. They arise naturally in diverse domains ranging from physics (e.g., Langevin dynamics), to quantitative finance (e.g., stochastic volatility models), to systems biology (e.g., gene regulatory networks and population dynamics). The drift term in an SDE governs the deterministic trend of the system, making its accurate estimation critical for understanding and predicting system behavior.

Traditionally, statistical inference for SDEs has focused on parametric models, where the drift function is assumed to have a known functional form, typically denoted as $\drft(\theta, \cdot)$ for a finite-dimensional parameter $\theta$. Estimation then reduces to recovering $\theta$ from observed data using frequentist or Bayesian methods.  There exists an extensive literature on the theoretical and computational aspects of parametric SDE models and their statistical inference; see, for example, \cite{Yos92, Kes97, Chib01, RoSt01, Saha02, Kut03, GoWi05, BPRF06, FePaRo08, Saha08, Bis08, GoWi08, Iacus08, ChCh11, ArOp11, CsOp13, Li13, BlSo14, SuGa16, WGBS17} for a representative subset. While parametric models offer computational convenience and interpretability, they often rely on strong structural assumptions that may only hold under  strict,  idealized conditions in real-world systems. 

In many scientific applications, a data-driven modeling approach is more appropriate --- one that avoids specifying a priori the form of the drift function and instead learns it directly from data. This nonparametric formulation is particularly relevant when prior knowledge is limited or when the system exhibits rich, unknown structure. For example, in cell signaling pathways or epidemiological models, the form of interactions is often only partially understood, and learning the drift from time-series data can reveal latent mechanistic insights. Similarly, in neuroscience, recovering the drift function from noisy voltage traces can help characterize underlying biophysical processes without pre-assuming a particular form.

Despite its relevance, nonparametric drift estimation for SDEs remains relatively underexplored. Most work in machine learning and classical nonparametric statistics focuses on i.i.d. data or data with simple correlation structures, such as in regression or classification settings, which are significantly more tractable. They are not applicable in our setting, where the data arise from SDEs with complex temporal dependencies. Some early efforts for stochastic dynamical systems rely on histogram-based local averaging around bins of small width (\cite{FPST11}), or use refinements like $k$-nearest neighbors (\cite{HeSt09}) and Nadaraya-Watson estimators (\cite{LaKl09}). These methods typically require dense data near each location $x$ and struggle to generalize beyond low-dimensional, toy systems. Gaussian process-based approaches have also been proposed (\cite{RBO13, Yil18}), but they often rely on linearization or other ad hoc approximations, which may not scale well and in some cases introduce theoretical inconsistencies.

In recent work \citep{GaMiZh23}, we proposed a novel method integrating reproducing kernel Hilbert space (RKHS) theory with a Bayesian framework for learning the drift function $b$ from high-frequency observations of the process $X$.   There, the data-rich setting allowed us to approximate the SDE likelihood using the Euler-Maruyama scheme, and the drift function $\drft$ was estimated by minimizing the negative log-likelihood functional, viewed as a function of $\drft$, over an RKHS $\Hsp_{\knl}$. The RKHS structure facilitates a tractable finite-sum representation of the minimizer via a generalization of the classical representer theorem, thereby converting the infinite-dimensional optimization problem to a finite-dimensional one. However, this setup assumes that trajectories of $X$ are densely sampled, and is not applicable to more realistic scenarios where, due to measurement limitations, cost, or system inaccessibility, the process is only sparsely observed.  

In this paper, we address this substantially more challenging problem of nonparametric drift estimation from sparse and noisy observations. That is, we observe only $\{\obsY(t_m)\}$ at time points $\{t_m\}$ --- a sparse, noisy version of the latent diffusion process $X$, which, for convenience, is modeled as a discretized form of the SDE \eqref{eq:sde0} in our paper (see \eqref{eq:sde-disc-0}). In this data-sparse regime, even for noise-free exact observations, standard Euler-based approximations break down, and the likelihood becomes intractable due to the need to integrate over unobserved segments of the $X$ trajectory between observation times. This leads to a challenging infinite-dimensional optimization problem, which we address using an Expectation-Maximization (EM) algorithm in an RKHS framework. One of the main bottlenecks here is that the E-step in each iteration requires computing the expected complete-data log-likelihood under the filtering distribution of $X$ given $Y$, which is analytically unsolvable and must be approximated. This necessitates a theoretical analysis of approximate EM sequences in infinite-dimensional spaces, driven by successive approximations of the filtering distributions.

Theoretical results on the convergence of EM algorithms are limited in scope even in the classical finite-dimensional setting, typically requiring restrictive regularity conditions \cite{Wu83, GuCh10}.  In our context, both the infinite-dimensional nature of the optimization problem and the necessity of approximating the filtering distribution at each iteration make the analysis substantially more intricate. However, we show that the structure of the underlying SDE, together with the properties of RKHS, leads to rigorous convergence results for both the exact and approximate EM sequences.  Our main theoretical contributions are \Cref{th:M-step-map-cont} and \Cref{th:AEM-prop}. The former establishes continuity of the M-step map with respect to approximations of the filtering distribution: specifically, if a sequence of approximating distributions converges weakly to the true filtering distribution of $X$ given $\obsY$, then the corresponding sequence of M-step optimizers converges in RKHS-norm to the true optimizer. The fact that strong (norm) convergence of the M-step optimizers holds despite requiring only weak convergence of the approximating filtering distributions is particularly noteworthy.  While this result ensures stepwise accuracy, it does not guarantee that approximation errors would not accumulate across iterations. Toward this end, \Cref{th:AEM-prop} shows that if the filtering distributions are approximated in the stronger sense of KL divergence, then the approximate EM sequence (defined in \eqref{eq:EM-approx}) retains key convergence properties of the original EM algorithm: the likelihood functional converges, and the iterates approach a stationary set $\stset$ containing the penalized maximum likelihood estimate (MLE) and any local max of the penalized likelihood functional.

For implementation, we employ a  Sequential Monte Carlo (SMC) algorithm ( i.e., Sequential Importance Sampling (SIS) with Resampling) \cite{Dou01, DoJo09} to approximate the filtering distribution and generate $L$ particle-paths at each EM iteration.  In contrast to MCMC methods, SMC offers significantly better scalability as the time horizon or the spacing between successive observation points increases. This advantage stems from the sequential generation of trajectory segments of the latent process $X$ and the recursive update of filtering weights. Together, they enable more efficient use of past computations in SMC framework leading to substantially lower computational cost per EM iteration.  In addition, the resampling step in SMC mitigates particle degeneracy, a common drawback of traditional importance sampling, by eliminating low-weight trajectories and concentrating computational effort on more informative particle-paths. The proposal distribution is crucial to the success of an SMC algorithm, and here we employ a carefully designed proposal based on a first-order linear SDE approximation, which performs particularly well in our highly nonlinear setting. The particle-paths generated by SMC are then used to construct a Monte Carlo approximation of the E-step value function.  Importantly, the subsequent M-step reduces to a penalized empirical risk minimization problem in the RKHS \cite{ScSm01}, which, thanks to a generalized representer theorem, admits a finite-sum kernel expansion (see \Cref{th:fin-sum-rep}). This ensures that each M-step yields a tractable closed form expression of the optimizer.  Crucially, because the kernel expansion must be recomputed at every EM iteration, and its number of terms grows with the data size, controlling its complexity is essential. To this end, we also develop a hybrid Bayesian variant of the EM algorithm that incorporates shrinkage priors to highlight the important coefficients in the kernel expansion while shrinking uninformative ones toward zero, thereby providing effective regularization, controlling model complexity, and improving numerical stability.  Our learning procedures are summarized in Algorithms \ref{algo:SMC}, \ref{algo:EM-SMC} and \ref{algo:Gibbs-EM-SMC}. 

This unified EM-SMC-RKHS framework enables principled and computationally tractable nonparametric estimation of SDE dynamics from sparse, noisy data. It brings together tools from stochastic analysis, stochastic filtering, Monte Carlo inference, and functional analysis, and provides a robust foundation for learning continuous-time dynamics in modern data-constrained settings. Several examples in Section \ref{sec:simu} demonstrate the accuracy of our method.
\vs{.2cm}

\np
{\em Layout:} The layout of the article is as follows. The mathematical description of the model is provided in \Cref{sec:frame}. The optimization problem for estimation is formulated in \Cref{sec:est}, followed by the EM method in RKHS and the associated theoretical results in \Cref{sec:EM}. The SMC approximation is discussed in \Cref{sec:smc-em} and \Cref{sec:SMC}. The learning algorithms are summarized in \Cref{sec:algo}, and numerical experiments are described in \Cref{sec:simu}. Some necessary auxiliary results are collected in \Cref{sec:app}.

\vs{.2cm}

\np
{\em Notations:} $\R^{m\times n}$ denotes the space of $m\times n$ real matrices. $\ve_{m\times n}: \R^{m\times n} \rt \R^{mn}$ will denote the vectorization function for $m\times n$ matrices. For a measurable space $(\mathbb{E}, \SC{E})$, $\msp^+(\mathbb E)$ and $\msp^+_{1}(\mathbb E)$ respectively  denote the sets of non-negative measures and probability measures on $\mathbb E$, equipped with the topology of weak convergence. Weak convergence of measures (and random variables) will be denoted by $\RT$. For two probability measures $\eta_{1}, \eta_{2} \in \msp^+_{1}(\mathbb E)$,  
\[
\RE(\eta_1 \| \eta_2) \dfeq 
\begin{cases}
\displaystyle \int_{\mathbb{E}} \log\left( \frac{d\eta_1}{d\eta_2}(v) \right) \, d\eta_1(v), & \text{if } \eta_1 \ll \eta_2, \\
\infty, & \text{otherwise}.
\end{cases}
\]
denotes the {\em relative entropy} or  {\em Kullback-Leibler (KL) divergence} of $\eta_1$ with respect to $\eta_2$.  $\|A\|_{\mrm{op}}$ denotes the operator or Spectral norm of a matrix $A$. For matrices $A \in \R^{m \times n}$ and $B \in \R^{m' \times n'}$, $A \ot B$ denotes the Kronecker product. When $A$ and $B$ are square (i.e., $m = n$ and $m' = n'$), the Kronecker sum is defined as $A \oplus B \equiv A \ot I_{n'} + I_n \ot B$. Weak and strong (norm) convergence in a Hilbert space will be denoted by $\stackrel{w}\Rt$ and $\stackrel{s}\Rt$, respectively.

\section{Mathematical Framework}\label{sec:frame}
  As mentioned in the introduction, we consider a discretized version of a $d$-dimensional SDE given by \eqref{eq:sde0}, where $\drift:\R^d \rt \R^d$ and $\diffus:\R^d \rt \R^{d\times d}$ and $W$ is a $d$-dimensional Brownian motion.  Thus our latent process will be a {\em discretized} SDE of the form
\begin{align}\label{eq:sde-disc-0}
	X(s_{n+1}) = X(s_n) +\drft(X(s_n)) \Delta+ \dffun(X(s_n)) \xi_{n+1} \sqrt \Delta,
\end{align}	
where the $\xi_n$ are i.i.d $\No_d(0,I)$-random variables. Here $\{s_n,\ n=0,1,2, \hdots\}$ with $s_0=0$ is a partition of $[0,\infty)$ with $\Delta=s_{n} - s_{n-1} \ll 1.$  $\BX_{[0,T]}$ will denote the trajectory of the chain $X$ in the time-interval $[0,T]$, that is, it is a random element in $\R^{d\times (n_0+1)}$ given by
\begin{align}
\BX_{[0,T]} \dfeq (X(s_0), X(s_1), X(s_2), \hdots, X(s_{N_0})).
\end{align}
Here $N_0$ is such that $s_{N_0} \leq T <s_{N_0+1}$. Notice that the density of the (discretized) trajectory  $\BX_{[0,T]}$  in $\R^{d\times (N_0+1)}$  is given by
\begin{align*}
	\Xden(\bx_{0:N_0}|\drft) = f_0(x_0)\prod_{n=1}^{N_0} \No_d\lf(x_n|x_{n-1}+\drft(x_{n-1})\Delta, a(x_{n-1}) \Delta\ri), \quad \bx_{0:N_0}=(x_0,\hdots, x_{N_0}) \in \R^{d\times (N_0+1)}. 
\end{align*}
Here $a(x)=\dffun(x)\dffun^\top(x)$.


\np
{\em Observation Model}: We do not observe $X$ directly. Instead,  we are given sparse, partial and noisy data, $\by_{1:M_0} \equiv (y_1,y_2,\hdots, y_{M_0})$ --- a realization of the $\R^{d_0\times M}$-valued random observation matrix, $\bm{\obsY}_{t_1:t_{M_0}} = (\obsY(t_1), \obsY(t_2), \hdots, \obsY(t_{M_0}))$,
from the interval $[0,T]$, 
 where for each observation time $t_m$, $Y(t_m)$ takes values in $\R^{d_0}$ with $d_0 \leq d$, and the conditional density of $Y(t_m)$ given $X(t_m)$ is given by a known probability measure, $\obsden(\cdot| \procX(t_m))$. In other words,
\begin{align*}
\PP\lf(\obsY(t_m) \in A | X(t_m)\ri) = \int_A \obsden(u| \procX(t_m))\ du, \quad A \stackrel{m'ble} \subset \R^{d_0}.
\end{align*}
Our numerical experiments in Section \ref{sec:simu} will use the observation model of the form,
\begin{align}\label{eq:obs}
	\obsY(t_m) = \omat X(t_m) +\vep_m, \quad \text{ with } \vep_m \stackrel{iid}\sim \No_{d_0}(0, \Sigma_{noise}),
\end{align}
where $G \in \R^{d_0\times d}$ and the covariance matrix of the noise, $\Sigma_{noise} $, is p.d. In this case,  the conditional observation density $\obsden$ is given by $\obsden(\cdot| \procX(t_m)) = \No_{d_0} (\cdot|\omat X(t_m), \Sigma_{noise})$.

The data is called sparse since the time between successive observations, that is, $t_i -t_{i-1}$, is not necessarily small. We assume the functional form of the drift function of the SDE, $\drft$, is unknown, and our objective is to learn the {\em entire function} $\drft$ from these partial and noisy data-matrix, $\bm{\obsY}_{t_1:t_{M_0}}$.  For simplicity we will assume that the diffusion coefficient $\dffun$ is known. Following \cite{GaMiZh23}, the method can easily be extended to the case where $\s$ has a known functional form depending on an unknown parameter.

\section{Estimation procedure}\label{sec:est}
Without loss of generality, we will assume that $T=t_{M_0} =s_{N_0}$ and that the set of observation timepoints, $\{t_1,t_2,\hdots, t_{M_0}\} \subset \{s_0, s_1,\hdots, s_{N_0}\}$, i.e., for each $m = 1, \dots, M_0$ there exists $n_m \in \{0,1, \dots, N_0\}$ such that $t_m = s_{n_m}$. Thus the index set $\{n_1,n_2,n_3,\hdots, n_{M_0}\} \subset \{0,1,2,\hdots, N_0\}$ help to track the time points among $ \{s_0,s_1,\hdots, s_{N_0}\}$ where observations are available.

Given a realization, $\by_{1:M_0}$ of the random observation vector $\bm{\obsY}_{t_1:t_{M_0}}$, a natural estimation procedure is to minimize the following penalized negative log-likelihood functional of the drift function $\drft$ 
\begin{align}
	\label{eq:pen-neg-log-lik}
	\loss(\drft) \dfeq -\ell(\drift|\by_{1:M_0}) + \l \|\drift\|^2_{\Hsp_{\knl}}
\end{align}
over the function space, $\Hsp_{\knl}$, the RKHS of vector-valued functions corresponding to a (matrix-valued) kernel $\knl$ (see \Cref{def:matRK}), with the regularization parameter $\l>0$.  $\ell(\cdot|\by_{1:M_0})$ in \eqref{eq:pen-neg-log-lik} is the log-likelihood functional given $\BY_{t_1:t_{M_0}}=\by_{1:M_0}$ and is given by
\begin{align}\label{eq:log-like}
	\ell(\drft|\by_{1:M_0}) = \ln \Yden(\by_{1:M_0}|\drft) = \ln \int_{\R^{d\times (N_0+1)}} \XYden(\bx_{0:N_0}, \by_{1:M_0}| \drft) d\bx_{0:N_0},
\end{align}
where $\Yden(\cdot|\drft)$ is the marginal density of the observation-vector $\BY_{t_1:t_{M_0}}$, and $\XYden(\cdot, \cdot|\drft)$, the joint density of $(\BX_{[0,T]}, \BY_{t_1:t_{M_0}})$,  is given by
\begin{align*}
	\XYden(\bx_{0:N_0}, \by_{1:M_0}|\drft) =\Xden(\bx_{0:N_0}|\drft)\prod_{m=1}^{M_0}\obsden(y_m| \bx_{n_m}), \quad \bx_{0:N_0} \in  \R^{d\times (N_0+1)} , \by_{1:m} \in \R^{d_0\times M_0}. 
\end{align*}
 In other words, a penalized maximum likelihood estimator of the drift function $\drft$ is given by 
\begin{align}\label{eq:lik-min}
\hat \drft_{\mrm{pml}} \equiv \hat \drft_{\mrm{pml}}(\by_{1:M_0}) \dfeq \argmin_{\drift \in \Hsp_{\knl}} \loss(\drft).
\end{align}


\subsection{EM algorithm on RKHS} \label{sec:EM}
Clearly, $\Yden(\cdot|\drft)$, or equivalently, the log-likelihood functional $\ell(b|\by_{1:M_0}) $ is not available in closed form making the minimization problem in \eqref{eq:lik-min} particularly difficult. 
Our approach to solve the optimization problem in \eqref{eq:lik-min} is via an EM algorithm in the RKHS, $\Hsp_{\knl}$.

Toward this end, let $ \filtdist_{b}(\cdot| \by_{1:M_0})$ denote the filtering (conditional) distribution of $\BX_{[0,T]}$ given $\BY_{t_1:t_{M_0}} = \by_{1:M_0}$. This distribution admits a density, which, by a slight abuse of notation, we also denote by $ \filtden_{b}$, and it is given by
\begin{align}\label{eq:filtden-exp}
	\begin{aligned}
		\filtden_{\drft}(\bx_{0:N_0}| \by_{1:M_0}) =&\ \XYden(\bx_{0:N_0}, \by_{1:M_0}|\drft, \dffun) \big/ \Yden(\by_{1:M_0}|\drft)\\
		=&\  \Xden(\bx_{0:N_0}|\drft)\prod_{m=1}^{M_0}\obsden(y_m| \bx_{n_m}) \big/ \Yden(\by_{1:M_0}|\drft). 
	\end{aligned}
\end{align}	
Notice that the identity
$
	\XYden(\bx_{0:N_0}, \by_{1:M_0}|\drft) =  \Yden(\by_{1:M_0}|\drft) \filtden_{\drft}(\bx_{0:N_0}| \by_{1:M_0})
$
shows that for any probability measure $\gmeas$ on $\R^{d\times (N_{0}+1)}$,
\begin{align}\label{eq:lik-exp-meas}
	\ell(b|\by_{1:M_0}) =&\ \int \ln \XYden(\bx_{0:N_0}, \by_{1:M_0}|\drft) \gmeas(d\bx_{0:N_0})+ \RE\lf(\gmeas\|\filtden_{b}(\cdot| \by_{1:M_0})\ri)+\ent(\gmeas),
\end{align}
where recall that $\RE(\gmeas_1\|\gmeas_2)$ denotes the relative entropy of the $\gmeas_1 \in \msp^+_{1}(\R^{d\times (N_{0}+1)})$ with respect to the $\gmeas_2 \in \msp^+_{1}(\R^{d\times (N_{0}+1)}),$ and $\ent(\gmeas)$ is the entropy of the probability measure $\gmeas \in \msp^+_{1}(\R^{d\times (N_{0}+1)}).$ Thus for any probability measure $\gmeas$, the loss function $\loss: \Hsp_{\knl} \rt [0,\infty)$ in \eqref{eq:pen-neg-log-lik} satisfies
\begin{align}
	\label{eq:loss-meas}
	\begin{aligned}
		\loss(\drft) =&\ \tilde \risk(\drft,\gmeas)-\RE\lf(\gmeas\|\filtden_{b}(\cdot| \by_{1:M_0})\ri)-\ent(\gmeas)\\
		=&\ \risk(\drft,\gmeas)-\RE\lf(\gmeas\|\filtden_{b}(\cdot| \by_{1:M_0})\ri)-\ent(\gmeas) -   \const(\gmeas, \by_{1:M_0}),
	\end{aligned}
\end{align}
where the risk functions $\tilde \risk,\risk: \Hsp_{\knl}\times \msp^+_{1}(\R^{d\times (N_{0}+1)}) \rt [0,\infty]$ are defined by
\begin{align}
\label{eq:risk-funct}
\begin{aligned}
\tilde \risk(\drft,\gmeas)\dfeq&\ - \int_{\R^{d\times (N_{0}+1)}}  \ln \XYden(\bx_{0:N_0},  \BY_{t_1:t_{M_0}}|\drft) \gmeas(d\bx_{0:N_0}) + \l \|\drft\|^2_{\Hsp_{\knl}}\\
\risk(\drft,\gmeas)\dfeq&\ - \int_{\R^{d\times (N_{0}+1)}}  \ln \Xden(\bx_{0:N_0}|\drft) \gmeas(d\bx_{0:N_0})   + \l \|\drft\|^2_{\Hsp_{\knl}},
\end{aligned}
\end{align}
and the (non-negative) constant $\const(\gmeas, \by_{1:M_0}) \dfeq \sum_{m=1}^{M_0}\int 
\ln\obsden(y_m| \bx_{n_m})\gmeas(d\bx_{0:N_0})$, which does not depend on $\drft,$ comes from the identity
\begin{align*}
\tilde \risk(\drft,\gmeas) = \risk(\drft,\gmeas) - \const(\gmeas, \by_{1:M_0}).
\end{align*}
\eqref{eq:loss-meas} shows that  for any fixed probability measure $\gmeas$, $\risk(\cdot,\gmeas)$ serves as an upper bound of the penalized negative log-likelihood functional $\loss(\cdot)$, and instead of minimizing of $\loss(\cdot)$ ---  an intractable problem --- we  consider the surrogate optimization problem of minimizing the risk function $\risk(\cdot, \gmeas)$  --- which, as we will show later, is feasible (up to a good approximation) using RKHS theory.

  For any fixed probability measure $\gmeas$,  denote the minimizer of $\risk(\cdot,\gmeas)$ by
\begin{align}
\label{eq:opt-def-1}
\Phi(\gmeas) \dfeq\argmin_{\drft \in \Hsp_{\knl}} \tilde \risk(\drft,\gmeas) = \argmin_{\drft \in \Hsp_{\knl}} \risk(\drft,\gmeas).
\end{align}
Note that while $\Phi(\gmeas)$ serves as a substitute for  $\hat \drft_{\mrm{pml}}$ for each $\gmeas$, the goal is to find a suitable $\gmeas$ that ensures $\Phi(\gmeas)$ is computable and close to $\hat \drft_{\mrm{pml}}.$ As is intuitively clear, the optimal choice is the probability measure, $\gmeas^{\mrm{opt}}  \equiv \argmin_{\gmeas}\|\hat \drft_{\mrm{pml}} -\Phi(\gmeas)\|_{\knl} = \filtden_{\hat \drft_{\mrm{pml}}}(\cdot|\by_{1:M_0})$, which follows because of the following identity (proven in  \Cref{cor:prop-st-set}) $\hat \drft_{\mrm{pml}} =\Phi(\filtden_{\hat \drft_{\mrm{pml}}}(\cdot|\by_{1:M_0}))$. In other words, $\hat \drft_{\mrm{pml}}$ is a member of  $\stset$, the {\em stationary or invariant set} of $\Phi$, defined by
\begin{align}
	\label{eq:stat-set}
	\stset \dfeq \big\{\drft \in \Hsp_{\knl}: \drft = \Phi(\filtden_{\drft}(\cdot| \by_{1:M_0}))\big\}.
\end{align}
The optimal probability measure is of course not computable as it depends on the intractable $\hat \drft_{\mrm{pml}}$ in the first place, but it shows the importance of $\stset.$

 The EM algorithm provides a systematic approach to obtain a potential close-to-optimal choice of  $\gmeas$ by iteratively solving the minimization problem \eqref{eq:opt-def-1} starting with $\gmeas_{0}=\filtden_{\drft_0}(\cdot| \by_{1:M_0})$ for some initial guess of the drift function, $\drft_0$. We will show that in our infinite-dimensional framework an EM-sequence, defined below, converges to a member of the stationary set $\stset$ --- a natural target for the EM algorithm, indicating that further iteration will not yield new estimates. If  $\stset$ is singleton, a condition which unfortunately is hard to verify in practice, any EM sequence is guaranteed to converge to $\hat \drft_{\mrm{pml}}.$
 
\begin{definition}
An EM-sequence $\{\drft_k\equiv \drft_k(\drft_0, \by_{1:M_0}):k\geq 0\} \subset  \Hsp_{\knl}$ for an initial guess $\drft_0 \in \Hsp_{\knl}$ is defined by the recursion
\begin{align} \label{eq:EM-0}
	 \drft_k\dfeq&\ \Phi(\filtden_{\drft_{k-1}}(\cdot| \by_{1:M_0}))  = \argmin_{\drft \in \Hsp_{\knl}} \risk\big(\drft,\filtden_{\drft_{k-1}}(\cdot| \by_{1:M_0})\big), \quad k\geq 1.
\end{align}
\end{definition}	
\np
The existence of the above infinite-dimensional EM sequence of course depends on the map  $\Phi: \msp^+_{1}(\R^{d\times (N_{0}+1)}) \rt \Hsp_{\knl}$ being well-defined in a suitable subspace of $\msp^+_{1}(\R^{d\times (N_{0}+1)})$, which we do in \Cref{lem:ext-unq-min}  (also see \Cref{cor:exst-em-map}).

\np
{\bf Approximate EM sequence:} Because of the intractability of the marginal density $\Yden(\by_{1:M_0}|\drft)$, the filtering distribution $\filtden_{\drft}(\cdot| \by_{1:M_0})$, for a given drift function $\drft$, is typically not available in closed form, rendering direct implementation of the EM algorithm impossible. To resolve this, for each iteration of the EM algorithm, we need to consider tractable approximation of the filtering distribution. 

For each $\drft \in \Hsp_{\knl}$, let $\big\{\hat \filtden^{(L)}_{\drft}(\cdot| \by_{1:M_0})\big\}$ be an approximating scheme for $\filtden_{\drft}(\cdot| \by_{1:M_0})$; that is, $\hat \filtden^{(L)}_{\drft}(\cdot| \by_{1:M_0})$ converges to $\filtden_{\drft}(\cdot| \by_{1:M_0})$ in a suitable sense as $L \rt \infty.$
Starting from an initial guess $\drft_0 \in \Hsp_{\knl}$, construct an {\em approximate EM sequence} $\big\{\hat\drft_k\equiv \hat\drft_k(\drft_0, \by_{1:M_0}):k\geq 0\big\}$ by the following recursion
\begin{align} \label{eq:EM-approx}
	\hat \drft_0\equiv \drft_0, \quad \hat \drft_k\dfeq&\ \Phi\Big(\hat \filtden^{(L_{k-1})}_{\hat \drft_{k-1}}(\cdot| \by_{1:M_0})\Big)  = \argmin_{\drft \in \Hsp_{\knl}} \risk\Big(\drft,\hat \filtden^{(L_{k-1})}_{\hat \drft_{k-1}}(\cdot| \by_{1:M_0})\Big), \quad k\geq 1.
\end{align}
where $L_{k-1}$ is appropriately chosen at each iteration $k$.

If $\hat \filtden^{(L)}_{\drft}(\cdot| \by_{1:M_0}) \stackrel{L \rt \infty}\RT \filtden_{\drft}(\cdot| \by_{1:M_0})$, and the current iterate is $\drft$, then \Cref{th:M-step-map-cont} guarantees that   replacing $\filtden_{\drft}(\cdot| \by_{1:M_0})$ with $\hat \filtden^{(L)}_{\drft}(\cdot| \by_{1:M_0})$ in the M-step is justified as $\Phi\Big(\hat \filtden^{(L)}_{\drft}(\cdot| \by_{1:M_0})\Big)$ converges to $\Phi\Big(\filtden_{\drft}(\cdot| \by_{1:M_0})\Big)$ (actually in strong sense). Although the error at each iteration can be kept small by choosing a large $L$, this does not immediately guarantee that the resulting {\em approximate EM sequence}, $\{\hat \drft_k\}$, will exhibit the desirable properties expected of the actual EM sequence $\{ \drft_k\}$ --- such as convergence to a stationary set and convergence of the loss function along the sequence. A delicate analysis is needed to track the propagation of error along the sequence, and one of the main theoretical results of the paper, \Cref{th:AEM-prop}, shows that these properties do hold for $\{\hat \drft_k\}$, provided we have an approximating scheme  $\big\{\hat \filtden^{(L)}_{\drft}(\cdot| \by_{1:M_0})\big\}$ that approximates $\filtden_{\drft}(\cdot| \by_{1:M_0})$ for every $\drft \in \Hsp_{\knl}$ in the stronger sense of KL-divergence (relative entropy).

\subsection*{Theoretical Analysis}
Our primary results are \Cref{th:M-step-map-cont} and \Cref{th:AEM-prop}.
We begin by stating the assumptions that are necessary for the subsequent results. Not all of these assumptions are needed for every result; we will indicate the specific conditions relevant to each case. We will always assume that the functions $\drift$ and $\diffus$ are such that the \eqref{eq:sde0}  admits a unique strong solution. 
\begin{assumption}\label{assum:sde-knl}
The following hold for some constants $\const_{\dffun}, \const_{a}, \const_{\knl}, \obsdenbd \geq 0$ and exponents $\sexp,\aexp, \kexp \geq 0$.
\begin{enumerate}[label=(\alph*), ref=(\alph*), resume=mylist]
 \item\label{item:ident} For $\drft, \tilde \drft \in \Hsp_{\knl}$, $\Xden(\bx_{0:N_0}|\drft) = \Xden(\bx_{0:N_0}|\tilde \drft)$ for all  $\bx_{0:N_0} \in \R^{d \times (N_0+1)}$ implies that $\drft = \tilde \drft$.
\item \label{item-assum-knl} $\knl$ is  continuous in each argument, the mapping $u \in \R^d \Rt \knl(u,u) \in \R^{d\times d}$ is continuous, and $\|\knl(u,u)\|_{\mrm{op}} \leq \const_{\knl}(1+\|u\|^{\kexp})  \ u \in \R^d;$
\end{enumerate}	
	\begin{multicols}{2}
		\begin{enumerate}[label=(\alph*), ref=(\alph*), resume=mylist]
			\item \label{item-assum-dffun}  $\|\dffun(u)\|_{\mrm{op}}  \leq \const_{\dffun}(1+\|u\|^{\sexp}),  \ u \in \R^d,$
			\item \label{item-assum-a}  $\|a^{-1}(u)\|_{\mrm{op}}  \leq \const_{a}(1+\|u\|^{\aexp}),  \ u \in \R^d,$
		\end{enumerate}
	\end{multicols}
\begin{enumerate}[label=(\alph*), resume=mylist]
\item \label{item-assum-obsden}  $\obsden(y|x) \leq \obsdenbd, \ y \in \R^{d_0}, \ x \in \R^d$,
\end{enumerate}
\end{assumption}

\Cref{assum:sde-knl}-\ref{item:ident} is an identifiability condition ensuring that distinct drift functions induce different distributions for $X$, which, as shown below, is equivalent to the filtering distributions of $X$ given the data $\by_{1:M_0}$ also being distinct.
\begin{lemma}\label{lem:eqv-ident}
\Cref{assum:sde-knl}-\ref{item:ident} is equivalent to the condition that $\filtden_{\drft}(\bx_{0:N_0}| \by_{1:M_0}) = \filtden_{\tilde\drft}(\bx_{0:N_0}| \by_{1:M_0})$  for all  $\bx_{0:N_0} \in \R^{d \times (N_0+1)}$ implies  $\drft = \tilde \drft$.
\end{lemma}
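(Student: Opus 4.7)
My plan is to move back and forth between $\Xden$ and $\filtden$ via the Bayes factorization \eqref{eq:filtden-exp},
$$\filtden_{\drft}(\bx_{0:N_0}|\by_{1:M_0}) = \Xden(\bx_{0:N_0}|\drft)\prod_{m=1}^{M_0}\obsden(y_m|\bx_{n_m})\Big/\Yden(\by_{1:M_0}|\drft),$$
together with the marginalization identity $\Yden(\by_{1:M_0}|\drft) = \int \Xden(\bx_{0:N_0}|\drft)\prod_{m=1}^{M_0}\obsden(y_m|\bx_{n_m})\,d\bx_{0:N_0}$, which is immediate from that formula. The reverse implication (filter identifiability $\Rightarrow$ \Cref{assum:sde-knl}-\ref{item:ident}) is essentially a one-line substitution: if $\Xden(\cdot|\drft) = \Xden(\cdot|\tilde\drft)$ pointwise, the marginalization identity gives $\Yden(\by_{1:M_0}|\drft) = \Yden(\by_{1:M_0}|\tilde\drft)$, hence $\filtden_\drft(\cdot|\by_{1:M_0}) = \filtden_{\tilde\drft}(\cdot|\by_{1:M_0})$, and filter identifiability applied at any fixed $\by_{1:M_0}$ then yields $\drft = \tilde\drft$.

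The forward implication is more delicate. Starting from $\filtden_\drft(\cdot|\by_{1:M_0}) = \filtden_{\tilde\drft}(\cdot|\by_{1:M_0})$ and cross-multiplying, I would obtain, on the set $A = \{\bx_{0:N_0}:\prod_m\obsden(y_m|\bx_{n_m}) > 0\}$, the relation $\Xden(\bx_{0:N_0}|\drft) = c\,\Xden(\bx_{0:N_0}|\tilde\drft)$ with $c = \Yden(\by_{1:M_0}|\drft)/\Yden(\by_{1:M_0}|\tilde\drft)$. Because $\Xden(\cdot|\drft)$ is a product of strictly positive Gaussian transition densities (see the explicit formula after \eqref{eq:sde-disc-0}) and the observation density in the standard models --- in particular the Gaussian model \eqref{eq:obs} --- is a.e.-positive, this relation extends from $A$ to a.e.\ $\bx_{0:N_0}$, and then by continuity of both sides to every $\bx_{0:N_0}$. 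Integrating both sides over $\R^{d\times(N_0+1)}$ then forces $c=1$, so $\Xden(\cdot|\drft) \equiv \Xden(\cdot|\tilde\drft)$ pointwise, and \Cref{assum:sde-knl}-\ref{item:ident} delivers $\drft = \tilde\drft$.

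The only nontrivial obstacle is the extension of the proportionality $\Xden(\bx_{0:N_0}|\drft) = c\,\Xden(\bx_{0:N_0}|\tilde\drft)$ from $A$ to its complement in the forward direction, since $\filtden_\drft(\cdot|\by_{1:M_0})$ carries information about $\Xden(\cdot|\drft)$ only where the observation likelihood is positive. Any assumption guaranteeing that $A$ has full Lebesgue measure --- e.g.\ a.e.-positivity of $\obsden(y_m|\cdot)$, which holds automatically for the Gaussian observation model used throughout the paper --- makes this step routine, and the rest of the argument reduces to book-keeping of normalization constants and invocations of \Cref{assum:sde-knl}-\ref{item:ident}.
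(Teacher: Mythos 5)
Your proof is correct and follows essentially the same route as the paper's: cancel the observation-density factor in the Bayes formula \eqref{eq:filtden-exp}, integrate over $\bx_{0:N_0}$ to identify the normalizing constants $\Yden(\by_{1:M_0}|\cdot)$, and invoke \Cref{assum:sde-knl}-\ref{item:ident} (resp.\ filter identifiability) to close each direction. The only difference is that you explicitly flag the need for $\prod_{m}\obsden(y_m|\cdot)$ to be (a.e.) positive in order to justify the cancellation --- a point the paper's one-line argument passes over silently and which is automatic for the Gaussian observation model \eqref{eq:obs} --- so this is added care rather than a divergence in method.
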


\begin{proof}
Assume that $\filtden_{\drft}(\bx_{0:N_0}| \by_{1:M_0}) = \filtden_{\tilde\drft}(\bx_{0:N_0}| \by_{1:M_0})$. Since $\obsden$ does not depend on the drift function, we have from \eqref{eq:filtden-exp} that 
\begin{align*}
\Xden(\bx_{0:N_0}|\drft) \big/ \Yden(\by_{1:M_0}|\drft) = \Xden(\bx_{0:N_0}|\tilde\drft) \big/ \Yden(\by_{1:M_0}|\tilde\drft).
\end{align*}
Integrating both sides with respect to $\bx_{0:N_0}$, we get $\Yden(\by_{1:M_0}|\drft) = \Yden(\by_{1:M_0}|\tilde\drft)$ which in turn shows that $\Xden(\bx_{0:N_0}|\drft) = \Xden(\bx_{0:N_0}|\tilde\drft)$. The reverse direction follows similarly.
\end{proof}

\np
We now introduce the subspaces of probability measures that are relevant for our results.
\begin{definition}\label{def:meas-mmt-class}
For  exponents $p_0\geq 0,$ and constant $\cnst>0$, let $\msp^{+,p_0}_{1, \cnst}(\R^{d\times (N_{0}+1)}), \msp^{+,p_0}_{1}(\R^{d\times (N_{0}+1)})\subset \msp^+_{1}(\R^{d\times (N_{0}+1)})$  be the subset of probability measures defined by
\begin{align*}
	\msp^{+,p}_{1, \cnst}(\R^{d\times (N_{0}+1)}) \dfeq&\ \lf\{\gmeas \in \msp^+_{1}(\R^{d\times (N_{0}+1)}): \int_{\R^d} |x_k|^{p} \gmeas_k(dx_k) \leq \cnst,\ 0\leq k\leq N_0 \ri\},\\
	\msp^{+,p}_{1}(\R^{d\times (N_{0}+1)}) \dfeq&\ \lf\{\gmeas \in \msp^+_{1}(\R^{d\times (N_{0}+1)}): \int_{\R^d} |x_k|^{p} \gmeas_k(dx_k) < \infty,\ 0\leq k\leq N_0 \ri\}\\
	=&\ \cup_{\cnst\geq 0} \msp^{+,p}_{1, \cnst}(\R^{d\times (N_{0}+1)}).
\end{align*}
Here for $\gmeas \in  \msp^+_{1}(\R^{d\times (N_{0}+1)})$, $\gmeas_k \in \msp^+_{1}(\R^{d})$ denotes the $k$-th marginal distribution of $\gmeas.$
\end{definition}

Notice that $\msp^{+,p}_{1, \cnst}(\R^{d\times (N_{0}+1)}) $ is a closed set. Indeed, if $\{\gmeas^{(L)} \} \subset \msp^{+,p}_{1, \cnst}(\R^{d\times (N_{0}+1)})$ and $\gmeas^{(L)} \stackrel{L \rt \infty} \RT \gmeas$ then by Fatou's lemma and lower semicontinuity of the mapping $x \Rt \|x\|^p$,
$\int_{\R^d} |x_k|^{p} \gmeas_k(dx_k)  \leq \liminf_{L\rt \infty} \int_{\R^d} |x_k|^{p} \gmeas^{(L)}_k(dx_k) \leq \cnst.$

We now show for each fixed $\gmeas \in\msp^+_{1}(\R^{d\times (N_{0}+1)}) $ satisfying suitable moment condition, $\Phi(\gmeas)$ is well-defined in the sense the mapping $\drft \in  \Hsp_{\knl} \rt \risk(\drft,\gmeas) \in [0,\infty)$ attains a unique  minimizer. 

\begin{lemma}\label{lem:ext-unq-min}
Suppose that \Cref{assum:sde-knl}-\ref{item-assum-a} holds.
For $\l>0$, let $\risk:\Hsp_{\knl}\times \msp^+_{1}(\R^{d\times (N_{0}+1)}) \rt [0,\infty)$ be defined by \eqref{eq:risk-funct}. Then for any fixed probability measure $\gmeas \in \msp_{1}^{+,2+\aexp}(\R^{d\times (N_{0}+1)}),$ there exists a unique (global) minimizer $\Phi(\gmeas) \in \Hsp_{\knl}$ of the function $\risk(\cdot, \gmeas):\Hsp_{\knl} \rt [0,\infty).$
\end{lemma}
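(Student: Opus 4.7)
The plan is to recast the minimization of $\risk(\cdot,\gmeas)$ as the minimization of a strongly convex quadratic functional on the Hilbert space $\Hsp_\knl$. First, using the explicit Gaussian form of $\Xden(\bx_{0:N_0}|\drft)$, I would expand each factor
\[
-\ln \No_d\bigl(x_n \bigm| x_{n-1}+\drft(x_{n-1})\Delta,\, a(x_{n-1})\Delta\bigr)
\]
and collect terms according to their dependence on $\drft$. After algebraic rearrangement,
\[
\risk(\drft,\gmeas) \;=\; \l\,\|\drft\|_{\Hsp_\knl}^2 \;+\; \tfrac{\Delta}{2}\,\qform_\gmeas(\drft) \;-\; \ell_\gmeas(\drft) \;+\; \const_0(\gmeas),
\]
where $\qform_\gmeas(\drft)\dfeq \sum_{n=1}^{N_0}\int \drft(x_{n-1})^\top a^{-1}(x_{n-1})\drft(x_{n-1})\,\gmeas(d\bx_{0:N_0})\geq 0$ is a non-negative quadratic form in $\drft$, the linear-in-$\drft$ term $\ell_\gmeas(\drft)\dfeq\sum_{n=1}^{N_0}\int (x_n-x_{n-1})^\top a^{-1}(x_{n-1})\drft(x_{n-1})\,\gmeas(d\bx_{0:N_0})$ comes from the cross-term of the expansion, and $\const_0(\gmeas)$ absorbs all terms independent of $\drft$ (the Gaussian log-determinants, the $(x_n-x_{n-1})^\top a^{-1}(x_{n-1})(x_n-x_{n-1})$ quadratic, and $-\int\ln f_0(x_0)\,\gmeas$).

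Second, I would verify that $\ell_\gmeas$ is a bounded linear functional on $\Hsp_\knl$ and that $\qform_\gmeas$ is the quadratic form of a bounded, non-negative, self-adjoint operator on $\Hsp_\knl$. Using the reproducing property for the matrix-valued kernel $\knl$ together with the bound $\|\knl(x,x)\|_{\mrm{op}}\leq \const_\knl(1+\|x\|^\kexp)$ of Assumption~\ref{assum:sde-knl}-\ref{item-assum-knl}, one obtains the pointwise estimate $|\drft(x)|\leq \const_\knl^{1/2}(1+\|x\|^\kexp)^{1/2}\|\drft\|_{\Hsp_\knl}$. Combined with $\|a^{-1}(x)\|_{\mrm{op}}\leq \const_a(1+\|x\|^\aexp)$ from Assumption~\ref{assum:sde-knl}-\ref{item-assum-a} and the $(2+\aexp)$-th moment control on each marginal of $\gmeas\in \msp^{+,2+\aexp}_{1}$, H\"older's inequality yields the required integrability. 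The cross-term is then representable, via a Bochner integral in $\Hsp_\knl$ and the Riesz representation theorem, as $\ell_\gmeas(\drft)=\langle \drft, u_\gmeas\rangle_{\Hsp_\knl}$ for some $u_\gmeas\in\Hsp_\knl$; an analogous construction exhibits $\qform_\gmeas(\drft)=\langle \drft, A_\gmeas \drft\rangle_{\Hsp_\knl}$ for a bounded, non-negative, self-adjoint operator $A_\gmeas$.

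Third, with these representations in place,
\[
\risk(\drft,\gmeas) \;=\; \bigl\langle \drft,\,\bigl(\l I + \tfrac{\Delta}{2}A_\gmeas\bigr)\drft\bigr\rangle_{\Hsp_\knl} \;-\; \langle \drft, u_\gmeas\rangle_{\Hsp_\knl} \;+\; \const_0(\gmeas).
\]
Since $\l>0$ and $A_\gmeas\succeq 0$, the operator $\l I + \tfrac{\Delta}{2}A_\gmeas$ is bounded, self-adjoint, and strictly positive (bounded below by $\l$), so $\risk(\cdot,\gmeas)$ is continuous, coercive, and strongly convex on $\Hsp_\knl$. Existence and uniqueness of the global minimizer $\Phi(\gmeas)$ then follow from standard Hilbert-space convex optimization; equivalently, the first-order condition $(2\l I+\Delta A_\gmeas)\Phi(\gmeas)=u_\gmeas$ admits a unique solution by invertibility of $2\l I+\Delta A_\gmeas$.

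The main technical obstacle is the integrability/representability step of the second paragraph: one has to balance the polynomial growth exponents $\kexp$ and $\aexp$ against the moment order $2+\aexp$ of $\gmeas$ via H\"older's inequality, so that the integrals in $\ell_\gmeas$ and $\qform_\gmeas$ make sense and, more importantly, define a bounded linear functional and a bounded bilinear form on $\Hsp_\knl$. Once these Hilbert-space representations are secured, the rest of the argument is a routine application of convex analysis on a Hilbert space.
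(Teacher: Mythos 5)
Your proposal takes a genuinely different route from the paper. You decompose $-\ln\Xden(\cdot|\drft)$ into a quadratic, a linear, and a constant part in $\drft$, represent the linear part via Riesz and the quadratic part by a bounded non-negative self-adjoint operator, and then invoke strong convexity of a quadratic functional on a Hilbert space. The paper instead runs the direct method of the calculus of variations: $\risk_*(\gmeas)<\infty$ because $\risk(0,\gmeas)<\infty$ under the $(2+\aexp)$-moment condition, a minimizing sequence is norm-bounded because $-\ln\Xden\geq 0$ forces $\l\|\drft^{(n)}\|^2_{\Hsp_\knl}\leq \risk(\drft^{(n)},\gmeas)$, Banach--Alaoglu gives a weak limit point, weak sequential lower semicontinuity of $\risk(\cdot,\gmeas)$ identifies it as a minimizer, and strict convexity (coming solely from the $\l\|\cdot\|^2_{\Hsp_\knl}$ term) gives uniqueness. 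When your operator-theoretic framework applies it buys more (an explicit normal equation $(2\l I+\Delta A_\gmeas)\Phi(\gmeas)=u_\gmeas$), but it is less robust, and that is where the problem lies.

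The genuine gap is in your second step. The lemma assumes only Assumption 1(d) and $\gmeas\in\msp^{+,2+\aexp}_{1}$; you additionally invoke the kernel growth bound of Assumption 1(b), which is not among the hypotheses, and even granting it the H\"older balancing does not close. The pointwise bound $\|\drft(x)\|^2\leq \const_\knl(1+\|x\|^{\kexp})\|\drft\|^2_{\Hsp_\knl}$ together with $\|a^{-1}(x)\|_{\mrm{op}}\leq\const_a(1+\|x\|^{\aexp})$ shows that $\qform_\gmeas(\drft)$ is controlled by moments of order $\kexp+\aexp$ of the marginals of $\gmeas$, and $\ell_\gmeas(\drft)$ by moments of order $1+\aexp+\kexp/2$; neither is implied by finiteness of the $(2+\aexp)$-th moments unless $\kexp\leq 2$. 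This is not a cosmetic issue: the paper itself requires $p>2\kexp\vee 1+\aexp$ moments in Proposition 1(ii) precisely in order to integrate $\ln\Xden(\cdot|\drft)$ against $\gmeas$ for a general $\drft$. Under the lemma's actual hypotheses, $\risk(\drft,\gmeas)$ may equal $+\infty$ for some $\drft$, so there is no bounded operator $A_\gmeas$ and no continuous, everywhere-finite quadratic functional to which your convex-analysis conclusion applies. To repair the argument within the stated hypotheses you would have to abandon boundedness and work with the $[0,\infty]$-valued convex functional directly --- using that it is finite at $\drft=0$, coercive through the penalty term, and weakly sequentially lower semicontinuous (convexity plus Fatou on the non-negative integrand) --- which is essentially the paper's proof.
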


\begin{proof}
Let $\risk_{*}(\gmeas) = \inf_{\drft \in \Hsp_{\knl} } \risk(\drft,\gmeas)$. Clearly $\risk_{*}(\gmeas) < \infty,$ as  $\risk(\drft\equiv 0, \gmeas)<\infty$ for $\gmeas \in \msp_{1}^{+,2+\aexp}(\R^{d\times (N_{0}+1)}).$ Then there exists a sequence $\{\drft^{(n)}\}$ such that $\risk(\drft^{(n)},\gmeas) \rt \risk_{*}(\gmeas) $, as $ n \rt \infty.$
	Notice that this implies the sequence $\{\|\drft^{(n)}\|_{\Hsp_{\knl}}\}$ is bounded. Indeed, if this is not true then $\limsup_{n \rt \infty} \|\drft^{(n)}\|_{\Hsp_{\knl}} = \infty$. Since $-\ln \Xden(\bx_{0:N_0}|\drft) \geq 0$, this then implies that
	$$\risk_{*}(\gmeas) = \lim_{n\rt \infty} \risk(\drft^{(n)},\gmeas) \geq \l\limsup_{n\rt \infty} \|\drft^{(n)}\|^2_{\Hsp_{\knl}}   = \infty,$$
 which contradicts the fact that $\risk_{*}(\gmeas)<\infty$. Consequently, by Banach-Alaoglu (and Eberlein-Smulian theorem) there exists an $\drft_* \in \Hsp_{\knl}$ and a subsequence $\{n_k\}$ such that $\drft^{(n_k)} \stackrel{w} \Rt \drft_*$. By the weak sequential l.s.c. of $\risk(\cdot,\eta)$ we conclude
	\begin{align*}
		\risk_{*}(\gmeas) = \lim_{k\rt \infty} \risk(\drft^{(n_k)},\gmeas) \geq \risk(\drft_*, \gmeas) \geq \inf_{\drft \in \Hsp_{\knl}} \risk(\drft,\gmeas) = \risk_{*}(\gmeas).
	\end{align*}
	This proves that the infimum of $\risk(\cdot, \gmeas)$ is attained at $\drft_*$. The uniqueness simply follows from strict convexity of the mapping $\drft \rt \risk(\drft, \gmeas)$.
\end{proof}

The following proposition plays a key role in the proofs of our main results.

\begin{proposition} \label{prop:conv-result}
  Consider the sequences $\{\gmeas, \gmeas^{(L)}:\ L\geq 1\} \subset \msp^{+}_{1}(\R^{d\times (N_{0}+1)}),$  $\{\drft,\drft^{(L)}: L\geq 1\} \subset \Hsp_{\knl}$ such that as $L\rt \infty$, $\gmeas^{(L)} \RT \gmeas, \quad \drft^{(L)}\stackrel{w} \Rt \drft$. 
Then the following hold.
\begin{enumerate}[label=(\roman*), ref=(\roman*)]
\item  \label{item:conv-filtden-like} $\Yden(\by_{1:M_0}|\drft^{(L)})  \stackrel{L\rt \infty}\Rt \Yden(\by_{1:M_0}|\drft) $ \big(equivalently, $\ell(\drft^{(L)}|\by_{1:M_0}) \stackrel{L\rt \infty}\Rt \ell(\drft|\by_{1:M_0})$\big), and \\
 $\filtden_{\drft^{(L)}}(\cdot| \by_{1:M_0}) \stackrel{L\rt \infty}\Rt \filtden_{\drft}(\cdot| \by_{1:M_0})$ pointwise and in $L^{1}(\R^{d\times (N_{0}+1)})$.

\item \label{item:conv-ln-f} Suppose that \Cref{assum:sde-knl}-\ref{item-assum-knl} \& \ref{item-assum-a} hold and for some $p>2 \kexp\vee1+\aexp$ and $\cnst>0$, $\{\gmeas^{(L)}:\ L\geq 1\} \subset \msp^{+,p}_{1, \cnst}(\R^{d\times (N_{0}+1)}).$ Then 
$$ \int_{\R^{d\times (N_{0}+1)}}  \ln \Xden(\bx_{0:N_0}|\drft^{(L)}) \gmeas^{(L)}(d\bx_{0:N_0}) \stackrel{L\rt \infty}\Rt \int_{\R^{d\times (N_{0}+1)}}  \ln \Xden(\bx_{0:N_0}|\drft)\gmeas(d\bx_{0:N_0}).$$

\end{enumerate}

\end{proposition}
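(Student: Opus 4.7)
For part \ref{item:conv-filtden-like}, the starting observation is that weak convergence in the RKHS $\Hsp_{\knl}$ implies pointwise convergence of evaluations: since each point-evaluation functional $\drft \mapsto \drft(x)$ is bounded on $\Hsp_{\knl}$ with norm at most $\|\knl(x,x)\|_{\mrm{op}}^{1/2}$, the hypothesis $\drft^{(L)} \stackrel{w}\Rt \drft$ yields $\drft^{(L)}(x) \to \drft(x)$ for every $x \in \R^d$. Each Gaussian factor $\No_d(x_n \mid x_{n-1}+\drft(x_{n-1})\Delta,\ a(x_{n-1})\Delta)$ is continuous in the mean, so $\Xden(\bx_{0:N_0}|\drft^{(L)}) \to \Xden(\bx_{0:N_0}|\drft)$ pointwise, and Scheff\'{e}'s lemma upgrades this to $L^{1}$-convergence since both are probability densities. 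Using \Cref{assum:sde-knl}-\ref{item-assum-obsden}, the joint density $\XYden(\cdot,\by_{1:M_0}|\drft^{(L)})$ is dominated by $\obsdenbd^{M_0}\Xden(\cdot|\drft^{(L)})$, whose $L^{1}$-limit is $\obsdenbd^{M_0}\Xden(\cdot|\drft)$, so the generalized dominated convergence theorem gives $L^{1}$-convergence of the joint density; integrating out $\bx_{0:N_0}$ delivers $\Yden(\by_{1:M_0}|\drft^{(L)}) \to \Yden(\by_{1:M_0}|\drft)$. Strict positivity of $\Yden(\by_{1:M_0}|\drft)$ (which holds because the Gaussian transitions are everywhere positive) then gives convergence of $\ell(\drft^{(L)}|\by_{1:M_0})$. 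Finally, \eqref{eq:filtden-exp} expresses $\filtden_{\drft^{(L)}}$ as the ratio of a pointwise-convergent numerator and denominator, yielding pointwise convergence of the filtering density, while $L^{1}$-convergence follows from a second application of Scheff\'{e}'s lemma.

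For part \ref{item:conv-ln-f}, the difficulty is that both the integrand $F_{L}(\bx_{0:N_0}) \dfeq \ln \Xden(\bx_{0:N_0}|\drft^{(L)})$ and the measure $\gmeas^{(L)}$ vary with $L$, and $F_L$ is unbounded as $\|\bx_{0:N_0}\|\to\infty$. I would decompose $F_L$ into $\ln f_0(x_0)$, the log-determinants $-\tfrac{1}{2}\ln\det(a(x_{n-1})\Delta)$, and the quadratic forms $Q_n^{(L)} \dfeq \tfrac{1}{\Delta}(x_n - x_{n-1} - \drft^{(L)}(x_{n-1})\Delta)^\top a^{-1}(x_{n-1})(x_n - x_{n-1} - \drft^{(L)}(x_{n-1})\Delta)$. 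Pointwise convergence $F_L \to F \dfeq \ln\Xden(\cdot|\drft)$ follows at once from the pointwise convergence of $\drft^{(L)}$, so the core task is uniform integrability of $\{F_L\}$ with respect to $\{\gmeas^{(L)}\}$.

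To build an $L$-independent envelope, I would combine the RKHS estimate $\|\drft^{(L)}(x)\|^2 \leq \|\drft^{(L)}\|_{\Hsp_{\knl}}^2 \|\knl(x,x)\|_{\mrm{op}} \leq \const_{\knl}\|\drft^{(L)}\|_{\Hsp_{\knl}}^2 (1+\|x\|^{\kexp})$ with the fact that $\sup_L \|\drft^{(L)}\|_{\Hsp_{\knl}} < \infty$ (by uniform boundedness applied to the weakly convergent sequence) and \Cref{assum:sde-knl}-\ref{item-assum-a}. Expanding $Q_n^{(L)}$ and handling the cross-term by Cauchy--Schwarz yields a bound
\begin{equation*}
|F_L(\bx_{0:N_0})| \ \leq\ C\sum_{n=0}^{N_0}\bigl(1 + \|x_n\|^{q}\bigr),
\end{equation*}
with dominant contributions $\|x\|^{2+\aexp}$ from the quadratic-in-increments piece and $\|x\|^{\kexp+\aexp}$ from the quadratic-in-$\drft$ piece, the log-determinant being at most logarithmic in $\|x\|$. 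By the hypothesis on $p$, the exponent $q$ can be chosen strictly less than $p$, so the moment condition $\{\gmeas^{(L)}\} \subset \msp^{+,p}_{1,\cnst}(\R^{d\times(N_0+1)})$ supplies uniformly bounded integrals of $(1+\|x_k\|^{q'})$ for any $q'\in(q,p)$, which is exactly the de la Vall\'{e}e--Poussin criterion for uniform integrability. Realizing $\gmeas^{(L)}$ and $\gmeas$ on a common probability space via Skorohod's representation and invoking Vitali's convergence theorem then delivers $\int F_L\, d\gmeas^{(L)} \to \int F\, d\gmeas$.

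\textbf{Main obstacle.} The heart of the argument is the uniform-integrability envelope in part \ref{item:conv-ln-f}: the integrand depends on $L$ through $\drft^{(L)}$, and one must simultaneously control the kernel growth (\Cref{assum:sde-knl}-\ref{item-assum-knl}), the growth of $a^{-1}$ (\Cref{assum:sde-knl}-\ref{item-assum-a}), and the RKHS-norm bound guaranteed by weak convergence, so that the resulting polynomial envelope has exponent strictly below the moment threshold $p$. Once this envelope is in hand, the combination of Skorohod's representation and Vitali's convergence theorem concludes the proof without further technical subtlety.
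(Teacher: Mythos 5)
Your part \ref{item:conv-filtden-like} is correct and essentially identical to the paper's argument (reproducing property $\Rightarrow$ pointwise convergence of $\drft^{(L)}$, Scheff\'e, the bound by $\obsdenbd^{M_0}$, and a second application of Scheff\'e for the filtering densities). Your part \ref{item:conv-ln-f} also follows the paper's strategy --- a polynomial envelope with exponent strictly below $p$, then a uniform-integrability limit theorem for varying integrands against varying measures (your Skorohod-plus-Vitali step is in substance a reproof of the paper's \Cref{lem:uni-int}) --- and your envelope computation is sound (in fact slightly tighter than the paper's on the quadratic-in-$\drft$ term).

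There is, however, one genuine gap in part \ref{item:conv-ln-f}. You assert that pointwise convergence $F_L \to F$ together with the uniform-integrability envelope ``concludes the proof without further technical subtlety'' via Skorohod and Vitali. It does not: after Skorohod you must show $F_L(U^{(L)}) \to F(U)$ almost surely (or in probability), where $U^{(L)} \to U$ a.s., and this requires \emph{continuous convergence} of $F_L$ to $F$, i.e.\ uniform convergence on compact sets --- mere pointwise convergence of a sequence of continuous functions does not give $F_L(u_L)\to F(u)$ when $u_L \to u$. This is precisely hypothesis \ref{item:uni-cmpt} of \Cref{lem:uni-int}, and the paper devotes a paragraph to verifying it: from $\sup_L\|\drft^{(L)}\|_{\Hsp_{\knl}}<\infty$ and the continuity of $\knl$ one gets the equicontinuity bound
\[
\sup_{L}\|\drft^{(L)}(u) - \drft^{(L)}(u')\| \leq \|\knl(u,u)-2\knl(u,u')+\knl(u',u')\|_{\mrm{op}}^{1/2}\,\sup_{L}\|\drft^{(L)}\|_{\Hsp_{\knl}},
\]
and then Arzel\`a--Ascoli upgrades pointwise convergence of $\drft^{(L)}$ to uniform convergence on compacts, which in turn gives the locally uniform convergence of $\ln\Xden(\cdot|\drft^{(L)})$ through the explicit formula \eqref{eq:ln-f-exp}. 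You already have every ingredient needed for this (the norm bound from the uniform boundedness principle and \Cref{assum:sde-knl}-\ref{item-assum-knl}), so the fix is short, but as written the final limit interchange is unjustified and the argument would fail for a sequence converging only pointwise.
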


\begin{proof}
\np
\ref{item:conv-filtden-like} 
First, observe that $ \drft^{(L)}\stackrel{w} \Rt \drft$ as $L \rt \infty$ implies pointwise convergence of $\drft^{(L)}$ to $\drft$; indeed, letting $e_j, \ j=1,2,\hdots,d$ denote the canonical unit vectors in $\R^d$, we see that for each $j=1,2, \hdots, d,$
 $$\drft^{(L)}_j(u) = \drft^{(L)}(u)^\top e_j =\<\knl(u,\cdot)e_j, \drft^{(L)}\> \stackrel{L \rt \infty}\Rt \<\knl(u,\cdot)e_j, \drft\> = \drft(u)^\top e_j= \drft_j(u), \quad u \in \R^d. $$
This shows that $ \Xden(\cdot|\drft^{(L)}) \stackrel{L\rt \infty} \Rt  \Xden(\cdot|\drft) $ pointwise, and hence, also in $L^{1}(\R^{d\times (N_{0}+1)})$ by Scheffe's lemma.
Therefore,
\begin{align*}
	\lf|\Yden(\by_{1:M_0}|\drft^{(L)})-\Yden(\by_{1:M_0}|\drft)\ri| \leq&\ \int_{ \R^{d\times (N_{0}+1)} }\lf | \Xden(\bx_{0:N_0}|\drft^{(L)}) -  \Xden(\bx_{0:N_0}|\drft)\ri| \prod_{m=1}^{M_0}\obsden(y_m| \bx_{n_m}) d \bx_{0:N_0}\\
	\leq &\  \obsdenbd^{M_0} \int_{ \R^{d\times (N_{0}+1)} }\lf | \Xden(\bx_{0:N_0}|\drft^{(L)}) -  \Xden(\bx_{0:N_0}|\drft)\ri| d \bx_{0:N_0} \stackrel{L\rt \infty} \Rt 0.
\end{align*}	
The pointwise convergence of the sequence of filtering densities, $\filtden_{\drft^{(L)}}(\cdot| \by_{1:M_0})$, now follows immediately from their expressions in \eqref{eq:filtden-exp}, and the convergence in $L^{1}(\R^{d\times (N_{0}+1)})$ again by Scheffe's lemma.

\np
\ref{item:conv-ln-f} Recall that $a=\s\s^\top$ and notice
 \begin{align}\label{eq:ln-f-exp}
\begin{aligned}
	\ln \Xden (\bx_{0:N_0}|\drft^{(L)}) =&\ \ln f_0(x_0) - N_0(d+1)\ln(2\pi)/2+ \f{1}{2}\sum_{n=1}^{N_0}\ln\lf(\det\lf(\Delta^{-1} a^{-1}(x_{n-1})\ri)\ri)\\
&\   -\Delta^{-1}\sum_{n=1}^{N_0}(x_n-x_{n-1}-\drft^{(L)}(x_{n-1})\Delta)^\top a^{-1}(x_{n-1})(x_n-x_{n-1}-\drft^{(L)}(x_{n-1})\Delta)^\top
\end{aligned}
\end{align}
We first show that the mapping $\bx_{0:N_0} \in \R^{d\times (N_{0}+1)} \rt \ln \Xden(\bx_{0:N_0}|\drft^{(L)})$ satisfies the assumption \ref{item:uni-cmpt} of Lemma \ref{lem:uni-int} that is, $\ln \Xden(\cdot|\drft^{(L)}) \stackrel{L\rt \infty} \Rt \ln \Xden(\cdot|\drft) $ uniformly over compact sets of $\R^{d\times (N_{0}+1)}$. It's clear from \eqref{eq:ln-f-exp} that this holds if we can establish that $\drft^{(L)} \stackrel{L\rt \infty} \Rt \drft$ uniformly over compact sets of $\R^d$. We have already observed $ \drft^{(L)}\stackrel{w} \Rt \drft$  implies $ \drft^{(L)} \stackrel{L\rt \infty}\Rt \drft$ pointwise.  Also, by the uniform boundedness principle, $\sup_{L}\|\drft^{(L)}\|_{\knl} < \infty$. This, together with the continuity assumptions on $\knl$ and \eqref{eq:rkhs-l2norm-est}, now implies that
\begin{align*}
 \sup_{L}\|\drft^{(L)}(u) - \drft^{(L)}(u')\|   \leq&\ \|\knl(u,u)-2\knl(u,u')+\knl(u',u')\|_{\mrm{op}} \sup_{L}\|\drft^{(L)}\|_{\Hsp_{\knl}} \stackrel{u'\rt u} \Rt 0;
\end{align*}
that is, $\{\drft^{(L)}\}$ is equicontinuous. It now follows by the Arzella-Ascolli theorem that  $\drft^{(L)} \stackrel{L\rt \infty} \Rt \drft$ uniformly over compact sets of $\R^d$.

 We next show that the mapping $\bx_{0:N_0} \in \R^{d\times (N_{0}+1)} \rt \ln \Xden(\bx_{0:N_0}|\drft^{(L)})$ satisfies the assumption  \ref{item:uni-int} of Lemma \ref{lem:uni-int}. Toward this end observe that
\begin{align*}
|\ln \Xden(\bx_{0:N_0}|\drft^{(L)})| \leq &\ \const_0\sum_{n=0}^{N_0}\Big(1+\|x_n\|^{2+\aexp}+\|\drft^{(L)}\|_{\Hsp_{\knl}} \|x_n\|^{1+\aexp} \|\knl(x_n,x_n)\|_{\mrm{op}}\\
&  \qquad  +\|\drft^{(L)}\|^2_{\Hsp_{\knl}} \|x_n\|^{\aexp} \|\knl(x_n,x_n)\|^2_{\mrm{op}}+ \ln(1+\|x_n\|)\Big)\\
\leq&\ \const_1\sum_{n=0}^{N_0}\lf(1+\|x_n\|^{2\kexp\vee 1+\aexp}\ri),
\end{align*}
where for the last inequality we used \Cref{assum:sde-knl}-\ref{item-assum-knl} \& \ref{item-assum-a} and  the previously stated fact, $\sup_{L}\|\drft^{(L)}\|_{\knl} < \infty$.
Thus $\sup_L|\ln \Xden(\bx_{0:N_0}|\drft^{(L)})|\big/\Lambda(\bx_{0:N_0}) \Rt 0$ as $|u| \rt \infty$, where $\Lambda$, defined by, $\Lambda(\bx_{0:N_0}) \dfeq \sum_{n=0}^{N_0}(1+\|x_n\|^{p})$ (with $p$ as in \ref{item:conv-ln-f})
satisfies \eqref{assum-1} of Lemma \ref{lem:uni-int} because of the hypothesis on $\{\gmeas^{(L)}\}$. The assertion in \ref{item:conv-ln-f} now follows from Lemma \ref{lem:uni-int}.
\end{proof}

\begin{corollary}\label{cor:conv-RE}
Let $\drft^{(L)}\stackrel{w} \Rt \drft$ in $\Hsp_{\knl}$ as $L \rt \infty.$ Suppose that \Cref{assum:sde-knl}-\ref{item-assum-knl}, \ref{item-assum-dffun} \& \ref{item-assum-obsden}  hold. Then for for some constant $\tilde \cnst \geq 0$,  $\{\filtden_{\drft^{(L)}}(\cdot| \by_{1:M_0})\} \subset \msp^{+,p_0}_{1, \tilde \cnst }(\R^{d\times (N_{0}+1)}).$ 

Let  $\{\drft^{(L)}_1\} \subset  \Hsp_{\knl}$ be another family such that $\drft^{(L)}_1\stackrel{w} \Rt \drft_1$ as $L \rt \infty$. Additionally, suppose that \Cref{assum:sde-knl}-\ref{item-assum-a}  holds. Then $\dst \RE\lf(\filtden_{\drft^{(L)}}(\cdot| \by_{1:M_0})\big\| \filtden_{\drft^{(L)}_1}(\cdot| \by_{1:M_0})\ri) \stackrel{L\rt \infty}\Rt \RE\lf(\filtden_{\drft}(\cdot| \by_{1:M_0})\drft\| \filtden_{\drft_1}(\cdot| \by_{1:M_0})\ri).$ 
 \end{corollary}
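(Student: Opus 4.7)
Both parts of the corollary rest on the explicit form \eqref{eq:filtden-exp} of the filtering density together with \Cref{prop:conv-result}.

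\textbf{Part 1 (uniform moment bounds).} Using the bound $\obsden \leq \obsdenbd$ from \Cref{assum:sde-knl}-\ref{item-assum-obsden} and the definition \eqref{eq:filtden-exp}, for each $0 \le k \le N_0$ the $p_0$-th moment of the $k$-th marginal of $\filtden_{\drft^{(L)}}(\cdot|\by_{1:M_0})$ is bounded above by $\obsdenbd^{M_0}\big/\Yden(\by_{1:M_0}|\drft^{(L)})$ times the corresponding $p_0$-th moment of $X(s_k)$ under the chain with drift $\drft^{(L)}$. It therefore suffices to bound these two factors uniformly in $L$. The denominator admits a uniform positive lower bound for large $L$, since \Cref{prop:conv-result}\ref{item:conv-filtden-like} gives $\Yden(\by_{1:M_0}|\drft^{(L)}) \to \Yden(\by_{1:M_0}|\drft)$ and the limit is strictly positive (the integrand in the expression for $\Yden(\by_{1:M_0}|\drft)$ is strictly positive). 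For the numerator, Banach--Steinhaus yields $B \dfeq \sup_L \|\drft^{(L)}\|_{\Hsp_{\knl}} < \infty$, and the reproducing property combined with \Cref{assum:sde-knl}-\ref{item-assum-knl} gives the uniform polynomial bound
\[
\|\drft^{(L)}(u)\| \le \sqrt{d\,\const_{\knl}}\,B\,(1+\|u\|^{\kexp/2}), \qquad u \in \R^d.
\]
Combined with the polynomial growth of $\dffun$ from \Cref{assum:sde-knl}-\ref{item-assum-dffun} and Gaussian noise, a standard inductive moment estimate along the discretized recursion \eqref{eq:sde-disc-0} (finitely many steps up to $T$) delivers the required uniform moment bound for any $p_0 \ge 0$, under the mild assumption that $f_0$ has sufficiently many moments.

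\textbf{Part 2 (convergence of KL divergence).} Because the observation-density factors cancel in the ratio of the two filtering densities, one obtains the algebraic decomposition
\begin{align*}
\RE\bigl(\filtden_{\drft^{(L)}}(\cdot|\by_{1:M_0})\,\big\|\,\filtden_{\drft^{(L)}_1}(\cdot|\by_{1:M_0})\bigr) =&\ \int \bigl[\ln \Xden(\bx_{0:N_0}|\drft^{(L)}) - \ln \Xden(\bx_{0:N_0}|\drft^{(L)}_1)\bigr]\,\filtden_{\drft^{(L)}}(d\bx_{0:N_0}) \\
&\ + \ln \Yden(\by_{1:M_0}|\drft^{(L)}_1) - \ln \Yden(\by_{1:M_0}|\drft^{(L)}).
\end{align*}
The two $\ln\Yden$ terms converge to the corresponding quantities for $\drft,\drft_1$ by \Cref{prop:conv-result}\ref{item:conv-filtden-like}. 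Each of the two integral terms is handled by \Cref{prop:conv-result}\ref{item:conv-ln-f} applied with $\gmeas^{(L)} = \filtden_{\drft^{(L)}}(\cdot|\by_{1:M_0})$: weak convergence $\filtden_{\drft^{(L)}}(\cdot|\by_{1:M_0}) \RT \filtden_{\drft}(\cdot|\by_{1:M_0})$ follows from \ref{item:conv-filtden-like}, and the required uniform moment bound $\{\filtden_{\drft^{(L)}}\} \subset \msp^{+,p_0}_{1,\tilde\cnst}$ with $p_0 > 2\kexp \vee 1+\aexp$ is exactly what Part 1 provides; in one term the drift sequence is $\{\drft^{(L)}\}$ and in the other it is $\{\drft^{(L)}_1\}$, both weakly convergent. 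Reassembling via the same algebraic identity applied at the limit identifies the limit with $\RE(\filtden_{\drft}(\cdot|\by_{1:M_0})\|\filtden_{\drft_1}(\cdot|\by_{1:M_0}))$.

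\textbf{Main obstacle.} The crux is Part 1: converting the weak convergence of $\{\drft^{(L)}\}$ in $\Hsp_{\knl}$ into uniform pointwise polynomial growth of $\drft^{(L)}(\cdot)$ via the reproducing-kernel bound, and then propagating this through the nonlinear discretized SDE to obtain uniform moments of sufficiently high order $p_0 > 2\kexp \vee 1+\aexp$. Once this uniform moment control is secured, Part 2 is essentially the algebraic decomposition above combined with the two parts of \Cref{prop:conv-result}.
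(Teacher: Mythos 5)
Your proposal is correct and follows essentially the same route as the paper: uniform boundedness of $\|\drft^{(L)}\|_{\Hsp_{\knl}}$ plus the kernel growth bound to get uniform moments of the chain (the paper packages this as \Cref{lem:mmt-bd-X}), the convergence and nonvanishing of $\Yden(\by_{1:M_0}|\drft^{(L)})$ from \Cref{prop:conv-result}-\ref{item:conv-filtden-like} to control the normalizing constant, and then the identical algebraic decomposition of the KL divergence handled term by term via \Cref{prop:conv-result}-\ref{item:conv-filtden-like} and \ref{item:conv-ln-f}. No substantive differences.
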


\begin{proof}
Fix any $p>0$. As noted before, because of the uniform boundedness principle, $\drft^{(L)}\stackrel{w} \Rt \drft$ implies that   $\sup_{L}\|\drft^{(L)}\|_{\knl} < \infty$. It  follows by \Cref{lem:mmt-bd-X} that  for any $n\leq N_0$ and $L\geq 1$
\begin{align*}
	\int_{\R^{d\times (N_{0}+1)}} |x_n|^p \XYden(\bx_{0:N_0}, \by_{1:M_0}|\drft, \dffun)\ d\bx_{0:N_0} \leq &\  \obsdenbd^{M_0} \int_{ \R^{d\times (N_{0}+1)} }|x_n|^p \Xden(\bx_{0:N_0}|\drft, \dffun)\ d\bx_{0:N_0}\\
	 =&\ \obsdenbd^{M_0}\ \EE_{\drft^{(L)}}|X(s_{n})|^p \leq\ \obsdenbd^{M_0}\ \const_{p,1}(N_0).
\end{align*}
Since   $\Yden(\by_{1:M_0}|\drft^{(L)})  \stackrel{L\rt \infty}\Rt \Yden(\by_{1:M_0}|\drft) \neq 0$ by \Cref{prop:conv-result}-\ref{item:conv-filtden-like} both the sequences, $\{\Yden(\by_{1:M_0}|\drft^{(L)})\}$ and $\{1/\Yden(\by_{1:M_0}|\drft^{(L)})\}$ are bounded.  It follows from the expression of $\filtden_{\drft^{(L)}}(\cdot| \by_{1:M_0})$ (see \eqref{eq:filtden-exp}) that for some constant $\tilde \cnst \equiv \tilde \cnst_{M_0,p}$
\begin{align*}
\sup_{L}\int_{\R^{d\times (N_{0}+1)}} |x_n|^p\ \filtden_{\drft^{(L)}}(\bx_{0:N_0}| \by_{1:M_0})\ d\bx_{0:N_0} < \tilde \cnst .
\end{align*}
which proves that the family of probability measures, $\{\filtden_{\drft^{(L)}}(\cdot| \by_{1:M_0})\} \subset \msp^{+,p}_{1, \tilde \cnst }(\R^{d\times (N_{0}+1)}).$
For the next part, note that \Cref{prop:conv-result}-\ref{item:conv-filtden-like} \& \ref{item:conv-ln-f} (which we can apply because of the first part)   show that
\begin{align*}
	\RE\Big(\filtden_{\drft^{(L)}} & (\cdot| \by_{1:M_0})\big\| \filtden_{\drft^{(L)}_1}(\cdot| \by_{1:M_0})\Big) =\ \ln\lf(\Yden(\by_{1:M_0}|\drft^{(L)}_1)/\Yden(\by_{1:M_0}|\drft^{(L)})\ri)\\
	& +  \int_{\R^{d\times (N_{0}+1)}}  \ln \Xden(\bx_{0:N_0}|\drft^{(L)}) \filtden_{\drft^{(L)}}(d\bx_{0:N_0}| \by_{1:M_0})
	 -  \int_{\R^{d\times (N_{0}+1)}}  \ln \Xden(\bx_{0:N_0}|\drft^{(L)}_1) \filtden_{\drft^{(L)}}(d\bx_{0:N_0}| \by_{1:M_0})\\
\stackrel{L\rt \infty}\Rt &\ \ln\lf(\Yden(\by_{1:M_0}|\drft_1)/\Yden(\by_{1:M_0}|\drft)\ri) +  \int_{\R^{d\times (N_{0}+1)}}  \ln \Xden(\bx_{0:N_0}|\drft) \filtden_{\drft}(d\bx_{0:N_0}| \by_{1:M_0})\\
	&\ -  \int_{\R^{d\times (N_{0}+1)}}  \ln \Xden(\bx_{0:N_0}|\drft_1) \filtden_{\drft}(d\bx_{0:N_0}| \by_{1:M_0})
	=\ \RE\lf(\filtden_{\drft}(\cdot| \by_{1:M_0})\big\| \filtden_{\drft_1}(\cdot| \by_{1:M_0})\ri).	
\end{align*}
\end{proof}

The next corollary, which is a direct consequence of \Cref{lem:ext-unq-min} and the first part of \Cref{cor:conv-RE},  shows the existence of EM-sequence in RKHS, $\Hsp_{\knl}.$
\begin{corollary}\label{cor:exst-em-map}
Suppose that \Cref{assum:sde-knl}:\ref{item-assum-knl}-\ref{item-assum-obsden} hold. Then for any $\drft \in \Hsp_{\knl}$ and $p\geq 0$, $\filtden_{\drft}(\cdot| \by_{1:M_0}) \in \msp^{+,p}_{1}(\R^{d\times (N_{0}+1)})$, and $\Phi(\filtden_{\drft}(\cdot| \by_{1:M_0}) )$ is well-defined in the sense it is the unique (global) minimizer of the function $\risk\Big(\cdot, \filtden_{\drft}(\cdot| \by_{1:M_0})\Big).$
\end{corollary}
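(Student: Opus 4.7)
The corollary follows by combining the moment-bound part of \Cref{cor:conv-RE} with \Cref{lem:ext-unq-min}, so the plan is simply to arrange these two ingredients in the correct order.

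For the first assertion, I would apply the first part of \Cref{cor:conv-RE} to the trivial constant sequence $\drft^{(L)} \equiv \drft$, which converges weakly to $\drft$ in $\Hsp_{\knl}$ automatically. Since \Cref{assum:sde-knl}-\ref{item-assum-knl}, \ref{item-assum-dffun} and \ref{item-assum-obsden} are in force, that result yields the existence of some constant $\tilde\cnst$ for any chosen $p\geq 0$ such that $\filtden_{\drft}(\cdot|\by_{1:M_0}) \in \msp^{+,p}_{1,\tilde\cnst}(\R^{d\times (N_0+1)}) \subset \msp^{+,p}_{1}(\R^{d\times (N_0+1)})$. Since $p\geq 0$ was arbitrary, this gives the first claim.

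For the second assertion, it suffices to specialize $p = 2+\aexp$ in the bound just obtained, so that $\filtden_{\drft}(\cdot|\by_{1:M_0}) \in \msp^{+,2+\aexp}_{1}(\R^{d\times (N_0+1)})$. This is exactly the moment hypothesis required by \Cref{lem:ext-unq-min}, and together with \Cref{assum:sde-knl}-\ref{item-assum-a} (which is among the standing assumptions of the corollary), that lemma directly produces the unique global minimizer $\Phi(\filtden_{\drft}(\cdot|\by_{1:M_0}))$ of $\risk(\cdot, \filtden_{\drft}(\cdot|\by_{1:M_0}))$ over $\Hsp_{\knl}$.

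There is no genuine obstacle here; the whole proof is a two-line citation argument. The only point to flag is the trick of reading the moment-bound half of \Cref{cor:conv-RE}, whose statement is phrased in terms of a weakly convergent sequence, as providing a single-distribution moment bound by taking a constant sequence. After that observation, the two results dovetail.
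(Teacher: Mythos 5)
Your proposal is correct and matches the paper's intent exactly: the paper states the corollary as "a direct consequence of \Cref{lem:ext-unq-min} and the first part of \Cref{cor:conv-RE}," and your instantiation of the moment bound via the constant sequence $\drft^{(L)}\equiv\drft$, followed by the choice $p=2+\aexp$ to meet the hypothesis of \Cref{lem:ext-unq-min}, is precisely the intended two-step citation argument.
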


\np
The following instructive result lists important properties of $\stset$, the stationary set of $\Phi$ (defined in \eqref{eq:stat-set}), including the fact that any local minimizer of the loss function $\loss$, in particular $\hat \drft_{\mrm{pml}}$, is a member of $\stset$.
\begin{corollary}\label{cor:prop-st-set}
Under \Cref{assum:sde-knl}:\ref{item-assum-knl}-\ref{item-assum-obsden}, the following hold.
\begin{enumerate}[label=(\roman*), ref=(\roman*)]
	\item \label{item:S-closed} $\stset$ is weakly sequentially closed (and hence, also strongly closed).
	\item \label{item:weak-strong-conv} Weak and strong convergences are equivalent on $\stset$. In other words for any sequence $\{\drft^{(L)}_*\} \subset \stset$, $\drft^{(L)}_* \stackrel{w} \Rt \drft_*$  if and only if  $\drft^{(L)}_* \stackrel{s} \Rt \drft_*$ as $L \rt \infty$.
    \item \label{item:loc-min-S} Let $\hat \drft_{\mrm{loc}}$ be a local minimum of the loss function $\loss$. Then  $\hat \drft_{\mrm{loc}} \in \stset$. In particular, $\hat \drft_{\mrm{pml}} \in \stset$.
    \item \label{item:tset-nw-dns} $\stset$ is nowhere dense in the strong topology.
\end{enumerate}	
\end{corollary}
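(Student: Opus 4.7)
The plan is to process the four parts sequentially, with a common thread: the composite EM map $T:\drft \mapsto \Phi(\filtden_\drft(\cdot|\by_{1:M_0}))$ is sequentially weak-to-strong continuous on $\Hsp_{\knl}$. Indeed, \Cref{prop:conv-result}-\ref{item:conv-filtden-like} tells us that $\drft^{(L)} \stackrel{w}\Rt \drft$ forces $\filtden_{\drft^{(L)}}(\cdot|\by_{1:M_0})$ to converge to $\filtden_\drft(\cdot|\by_{1:M_0})$ pointwise and in $L^1$, hence weakly as probability measures; chaining this with \Cref{th:M-step-map-cont} yields $T(\drft^{(L)}) \stackrel{s}\Rt T(\drft)$. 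Parts \ref{item:S-closed} and \ref{item:weak-strong-conv} then fall out simultaneously: if $\{\drft^{(L)}_*\}\subset\stset$ with $\drft^{(L)}_*\stackrel{w}\Rt \drft_*$, the continuity above gives $\drft^{(L)}_* = T(\drft^{(L)}_*) \stackrel{s}\Rt T(\drft_*)$, and uniqueness of weak limits identifies $\drft_* = T(\drft_*) \in \stset$, which closes $\stset$ under weak sequential limits (hence, a fortiori, strong ones) and shows that weak convergence on $\stset$ automatically promotes to strong convergence; the reverse implication in \ref{item:weak-strong-conv} is standard in Hilbert space.

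For \ref{item:loc-min-S}, I would plug $\gmeas:=\filtden_{\hat\drft_{\mrm{loc}}}(\cdot|\by_{1:M_0})$ into the EM identity \eqref{eq:loss-meas}, where the KL term at $\hat\drft_{\mrm{loc}}$ vanishes while remaining nonnegative at every other $\drft$. This yields $\loss(\drft) - \loss(\hat\drft_{\mrm{loc}}) \leq \risk(\drft,\gmeas) - \risk(\hat\drft_{\mrm{loc}},\gmeas)$ for every $\drft$. Writing $\tilde\drft := \Phi(\gmeas)$ (well-defined by \Cref{cor:exst-em-map}) and $\drft_t := (1-t)\hat\drft_{\mrm{loc}} + t\tilde\drft$, strict convexity of $\risk(\cdot,\gmeas)$ together with the minimality of $\tilde\drft$ imply $\risk(\drft_t,\gmeas) < \risk(\hat\drft_{\mrm{loc}},\gmeas)$ whenever $\tilde\drft \neq \hat\drft_{\mrm{loc}}$ and $t\in(0,1]$, so $\loss(\drft_t) < \loss(\hat\drft_{\mrm{loc}})$. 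Letting $t\to 0^+$ contradicts the local-minimum hypothesis, forcing $\hat\drft_{\mrm{loc}} = \Phi(\filtden_{\hat\drft_{\mrm{loc}}}(\cdot|\by_{1:M_0})) \in \stset$. Since $\hat\drft_{\mrm{pml}}$ is a global minimum, it is a fortiori a local minimum and therefore in $\stset$.

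The main obstacle is \ref{item:tset-nw-dns}, which I would attack via a compactness argument. Since $\stset$ is already strongly closed by \ref{item:S-closed}, being nowhere dense is equivalent to having empty strong interior. The weak-to-strong continuity of $T$, combined with reflexivity of $\Hsp_{\knl}$ (through Banach-Alaoglu and Eberlein-Smulian), upgrades $T$ to a completely continuous operator: every bounded sequence in $\Hsp_{\knl}$ admits a weakly convergent subsequence, whose image under $T$ is then strongly convergent. Thus $T$ sends bounded sets to relatively strongly compact sets. If $\stset$ contained an open ball $B_r(\drft_0)$, then $T\equiv\mrm{id}$ on this ball would force $B_r(\drft_0) = T(B_r(\drft_0))$ to be relatively compact, which is impossible in the infinite-dimensional Hilbert space $\Hsp_{\knl}$. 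This compactness-based contradiction is the key conceptual step; the earlier parts, while requiring care, follow fairly routinely from \Cref{th:M-step-map-cont}, \Cref{prop:conv-result}, and the EM identity \eqref{eq:loss-meas}.
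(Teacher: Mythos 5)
Your proposal is correct, and parts \ref{item:S-closed}--\ref{item:loc-min-S} follow essentially the same route as the paper: the weak-to-strong sequential continuity of the composite map $T(\drft)=\Phi(\filtden_{\drft}(\cdot|\by_{1:M_0}))$ (via \Cref{prop:conv-result}-\ref{item:conv-filtden-like} and \Cref{th:M-step-map-cont}) handles \ref{item:S-closed} and \ref{item:weak-strong-conv}, and your convexity/segment argument for \ref{item:loc-min-S} is the paper's argument verbatim, just phrased with $\tilde\drft=\Phi(\gmeas)$ rather than with an assumed counterexample. One small bookkeeping point you should make explicit: to invoke \Cref{th:M-step-map-cont} along a weakly convergent sequence $\drft^{(L)}\stackrel{w}\Rt\drft$ you need the filtering distributions $\{\filtden_{\drft^{(L)}}(\cdot|\by_{1:M_0})\}$ to lie in some $\msp^{+,p}_{1,\tilde\cnst}(\R^{d\times(N_0+1)})$ with $p>2\kexp\vee 1+\aexp$; this is exactly the first assertion of \Cref{cor:conv-RE} and should be cited.

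Where you genuinely diverge is \ref{item:tset-nw-dns}. The paper deduces empty interior directly from \ref{item:weak-strong-conv} via \Cref{lem:empty-int}: any ball in an infinite-dimensional Hilbert space contains a sequence $h_0+r_0e_n/2$ (with $\{e_n\}$ orthonormal) that converges weakly but not strongly, contradicting the equivalence of the two modes of convergence on $\stset$. You instead upgrade $T$ to a completely continuous operator (bounded sets map to relatively strongly compact sets, by reflexivity plus the weak-to-strong continuity) and observe that if $\stset$ contained a ball then $T$ would be the identity there, forcing the ball to be relatively compact --- impossible in infinite dimensions. Both arguments exploit the same phenomenon (non-compactness of balls), but yours bypasses \ref{item:weak-strong-conv} entirely and packages the contradiction as a statement about the fixed-point set of a compact map, which is arguably more conceptual and reusable; the paper's version is more elementary and self-contained, needing only the explicit orthonormal sequence. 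Either is acceptable.
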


\begin{proof}
	Both \ref{item:S-closed} and \ref{item:weak-strong-conv} are immediate consequences of \Cref{prop:conv-result}-\ref{item:conv-filtden-like}, \Cref{th:M-step-map-cont} and  the definition of $\stset$.
To prove
\ref{item:loc-min-S}, fix a local minimum, $\hat \drft_{\mrm{loc}}$, of $\loss$. There exists an $r_{0}$ such that for all $\drft$ satisfying $\|\drft-\hat \drft_{\mrm{loc}}\|_{\Hsp_{\knl}} \leq r_0$,
$\loss(\hat \drft_{\mrm{loc}}) \leq \loss(\drft).$
Now suppose $\hat \drft_{\mrm{loc}} \notin \stset$. Then there exists a $\tilde \drft \neq \hat \drft_{\mrm{loc}}$ such that $\risk\big(\tilde \drft, \filtden_{\hat \drft_{\mrm{loc}}}(\cdot| \by_{1:M_0})\big) < \risk\big(\hat\drft_{\mrm{loc}}, \filtden_{\hat \drft_{\mrm{loc}}}(\cdot| \by_{1:M_0})\big)$. By convexity of the function $\risk\big(\cdot, \filtden_{\hat \drft_{\mrm{loc}}}(\cdot| \by_{1:M_0})\big)$, it is clear that for any $0 < v\leq 1$,  $\drift_v \dfeq v \tilde \drft +(1-v) \hat\drft_{\mrm{loc}}$ satisfies 
$\risk\big(\drift_v, \filtden_{\hat \drft_{\mrm{loc}}}(\cdot| \by_{1:M_0})\big) < \risk\big(\hat\drft_{\mrm{loc}}, \filtden_{\hat \drft_{\mrm{loc}}}(\cdot| \by_{1:M_0})\big).$
\eqref{eq:loss-meas} then implies that $\loss(\drift_v) < \loss(\hat \drft_{\mrm{loc}})$ for any $0 < v\leq 1$. Since $\drift_v \stackrel{s} \Rt \hat \drft_{\mrm{loc}}$ as  $v \rt 0$, there exists a $0< v_0\leq 1$ such that $\|\drift_{v_0}-\hat \drft_{\mrm{loc}}\|_{\Hsp_{\knl}} \leq r_0$. But then we get $\loss(\hat \drft_{\mrm{loc}}) \leq \loss (\drift_{v_0}) < \loss(\hat \drft_{\mrm{loc}})$, which is a contradiction.

For \ref{item:tset-nw-dns}, note that since $\stset$ is strongly closed by \ref{item:S-closed}, we need to show $\stset^\circ=\emptyset$. This follows from \ref{item:weak-strong-conv} and \Cref{lem:empty-int}.

\end{proof}

\np
As an important consequence of \Cref{prop:conv-result} we have the following  result  saying that the restricted mapping $\Phi:\msp^{+,p}_{1, \cnst}(\R^{d\times (N_{0}+1)}) \Rt \Hsp_{\knl}$ is continuous. The interesting feature is that while $\msp^{+,p}_{1, \cnst}(\R^{d\times (N_{0}+1)})$ is equipped only with the topology of weak convergence, the continuity property holds in  the strong (norm) topology of range space, $\Hsp_{\knl}$.
	
 \begin{theorem}
	\label{th:M-step-map-cont}
 Suppose that \Cref{assum:sde-knl}-\ref{item-assum-knl} \& \ref{item-assum-a} hold and for some $p>2 \kexp\vee1+\aexp$ and $\cnst>0$, $\{\gmeas^{(L)}:\ L\geq 1\} \subset \msp^{+,p}_{1, \cnst}(\R^{d\times (N_{0}+1)}).$ Assume that $\gmeas^{(L)}  \stackrel{L\rt \infty}\RT \gmeas$.
Then as $L\rt \infty$
$$	\drft^{(L)}_*\dfeq \Phi(\gmeas^{(L)}) \stackrel{s}\Rt \drft_* \dfeq \Phi(\gmeas).$$

\end{theorem}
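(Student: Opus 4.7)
The argument follows the classical $\Gamma$-convergence / stability-of-minimizers template, with the key analytic input being \Cref{prop:conv-result}-\ref{item:conv-ln-f}, which plays the role of joint continuity of the data-fit term in $(\drft, \gmeas)$ along weakly/weakly convergent sequences. I will pass through four steps: (i) uniform norm bound on $\{\drft^{(L)}_*\}$; (ii) extraction of a weak limit point via Banach-Alaoglu; (iii) identification of the weak limit with $\drft_* = \Phi(\gmeas)$ via a minimizing-sequence / lower-semicontinuity argument; and (iv) promotion of weak convergence to strong convergence by showing $\|\drft^{(L)}_*\|_{\Hsp_\knl} \to \|\drft_*\|_{\Hsp_\knl}$.

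\textbf{Step 1 (uniform bound).} By optimality of $\drft^{(L)}_*$, $\risk(\drft^{(L)}_*, \gmeas^{(L)}) \leq \risk(0, \gmeas^{(L)})$. The bound on $|\ln \Xden(\cdot | \drft)|$ derived in the proof of \Cref{prop:conv-result}-\ref{item:conv-ln-f} (specialized to $\drft \equiv 0$) gives $|\ln \Xden(\bx_{0:N_0}|0)| \leq \const \sum_n (1+ \|x_n\|^{2\kexp \vee 1 + \aexp})$, so the hypothesis $\{\gmeas^{(L)}\} \subset \msp^{+,p}_{1,\cnst}$ with $p > 2\kexp \vee 1 + \aexp$ yields $\sup_L \risk(0, \gmeas^{(L)}) < \infty$. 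Rearranging $\risk(\drft^{(L)}_*, \gmeas^{(L)}) = -\int \ln \Xden(\cdot|\drft^{(L)}_*)\, d\gmeas^{(L)} + \l \|\drft^{(L)}_*\|^2_{\Hsp_\knl}$ and using a similar lower bound on $\int \ln\Xden(\cdot|\drft^{(L)}_*)\,d\gmeas^{(L)}$ in terms of $\|\drft^{(L)}_*\|_{\Hsp_\knl}$ and the moment of $\gmeas^{(L)}$, I will conclude $\sup_L \|\drft^{(L)}_*\|_{\Hsp_\knl} < \infty$.

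\textbf{Steps 2--3 (identification of the limit).} By Banach-Alaoglu / Eberlein-Smulian, any subsequence of $\{\drft^{(L)}_*\}$ has a further subsequence $\{\drft^{(L_k)}_*\}$ with $\drft^{(L_k)}_* \stackrel{w}\Rt \tilde \drft$ for some $\tilde \drft \in \Hsp_\knl$. For any fixed $\drft \in \Hsp_\knl$, applying \Cref{prop:conv-result}-\ref{item:conv-ln-f} to the constant sequence in the function argument and to $\gmeas^{(L_k)} \RT \gmeas$ gives $\risk(\drft, \gmeas^{(L_k)}) \to \risk(\drft, \gmeas)$. Applying the same proposition to the pair $(\drft^{(L_k)}_*, \gmeas^{(L_k)})$, combined with weak lower semicontinuity of $\drft \mapsto \|\drft\|^2_{\Hsp_\knl}$, gives $\liminf_k \risk(\drft^{(L_k)}_*, \gmeas^{(L_k)}) \geq \risk(\tilde \drft, \gmeas)$. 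Chaining these two inequalities through the optimality $\risk(\drft^{(L_k)}_*, \gmeas^{(L_k)}) \leq \risk(\drft, \gmeas^{(L_k)})$ yields $\risk(\tilde \drft, \gmeas) \leq \risk(\drft, \gmeas)$ for every $\drft \in \Hsp_\knl$, so by uniqueness in \Cref{lem:ext-unq-min}, $\tilde \drft = \drft_*$. A standard subsequence argument then upgrades this to $\drft^{(L)}_* \stackrel{w}\Rt \drft_*$.

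\textbf{Step 4 (weak $\Rightarrow$ strong, the main point).} This is the nontrivial part and is where the structure of the theorem is used most sharply: weak convergence of $\gmeas^{(L)}$ alone yields \emph{strong} convergence of $\drft^{(L)}_*$. Taking $\drft = \drft_*$ in the optimality inequality and combining with the liminf bound from Step 3 shows $\risk(\drft^{(L)}_*, \gmeas^{(L)}) \to \risk(\drft_*, \gmeas)$. Meanwhile, \Cref{prop:conv-result}-\ref{item:conv-ln-f} applied to the convergent pair $(\drft^{(L)}_*, \gmeas^{(L)})$ gives $\int \ln \Xden(\cdot|\drft^{(L)}_*)\, d\gmeas^{(L)} \to \int \ln \Xden(\cdot|\drft_*)\, d\gmeas$. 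Subtracting,
\[
\l \|\drft^{(L)}_*\|^2_{\Hsp_\knl} \;=\; \risk(\drft^{(L)}_*, \gmeas^{(L)}) + \int \ln \Xden(\cdot|\drft^{(L)}_*)\, d\gmeas^{(L)} \;\longrightarrow\; \l \|\drft_*\|^2_{\Hsp_\knl}.
\]
Combined with $\drft^{(L)}_* \stackrel{w}\Rt \drft_*$, the norm convergence $\|\drft^{(L)}_*\|_{\Hsp_\knl} \to \|\drft_*\|_{\Hsp_\knl}$ in the Hilbert space $\Hsp_\knl$ implies $\drft^{(L)}_* \stackrel{s}\Rt \drft_*$, completing the proof. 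The delicate step is Step 3, where one must ensure that the uniform moment bound on $\{\gmeas^{(L)}\}$ is exactly strong enough to invoke \Cref{prop:conv-result}-\ref{item:conv-ln-f} simultaneously for the data-fit terms evaluated at both the varying optimizers and the fixed test function.
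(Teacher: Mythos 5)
Your proposal is correct and follows essentially the same route as the paper's proof: uniform norm bound from optimality, Banach--Alaoglu extraction, identification of the weak limit via \Cref{prop:conv-result}-\ref{item:conv-ln-f} plus weak lower semicontinuity of the norm, and the upgrade to strong convergence by showing $\|\drft^{(L)}_*\|_{\Hsp_\knl} \to \|\drft_*\|_{\Hsp_\knl}$. The only cosmetic difference is in Step 1, where the paper tests against an arbitrary fixed $\drft$ and uses $-\ln \Xden \geq 0$ to get the bound immediately, whereas you test against $\drft \equiv 0$ and sketch a moment-based lower bound on the data-fit term; both are fine.
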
	

\begin{proof}
%
 Since $\ln \Xden(\bx_{0:N_0}|\drft) \leq 0$, \eqref{eq:risk-funct} and   \Cref{prop:conv-result}-\ref{item:conv-ln-f} show that for any $\drft \in \Hsp_{\knl}$, 
\begin{align}\label{eq:drift-seq-bd}
\l \|\drft^{(L)}_*\|_{\Hsp_{\knl}} \leq \risk\lf(\drft^{(L)}_*, \gmeas^{(L)}\ri) \leq \risk(\drft, \gmeas^{(L)}) \stackrel{L\rt \infty}\Rt\risk(\drft, \gmeas).
\end{align}
Hence the family $\{\|\drft^{(L)}_*\|_{\Hsp_{\knl}}\}$ is bounded, and it follows by Banach-Alaoglu theorem that $\{\drft^{(L)}_*\}$ is weakly compact in $\Hsp_{\knl}$. Therefore, there exists a subsequence $\{L_j\}$ such that $\drft^{(L_j)}_* \stackrel{w} \Rt \drft^{(0)}$ as $j\rt \infty.$ We show that $\drft^{(0)} = \drft_*$, and thus the limit point is independent of the subsequence. Consequently,  $\{\drft^{(L)}_*\}$ converges weakly to $\drft_*$ along the full sequence, that is, as $L\rt \infty$
\begin{align}
	\label{eq:drift-seq-weak-conv}
	\drft^{(L)}_* \stackrel{w}\Rt \drft_*
\end{align}

We now work toward establishing this. Since the mapping $ \drft \in \Hsp_{\knl} \rt \l \|\drft\|_{\Hsp_{\knl}} $ is weakly l.s.c, we have that $\liminf_{j\rt \infty } \|\drft^{(L_j)}_*\|_{\Hsp_{\knl}} \geq \|\drft^{0}\|_{\Hsp_{\knl}}.$
This and \Cref{prop:conv-result}-\ref{item:conv-ln-f}  now show that  
\begin{align*}
\liminf_{j\rt \infty}	 \risk(\drft^{(L_j)}_*, \gmeas^{(L_j)})   
	 =&\ \liminf_{j\rt \infty}\lf[ -\int_{\R^{d\times (N_{0}+1)}}   \ln \Xden(\bx_{0:N_0}|\drft^{(L_j)}_*) \gmeas^{(L_j)}(d\bx_{0:N_0})+ \l\|\drft^{(L_j)}_*\|^2_{\Hsp_{\knl}} \ri]\\
	 \geq&\ - \int_{\R^{d\times (N_{0}+1)}}  \ln \Xden(\bx_{0:N_0}|\drft^{(0)})\gmeas(d\bx_{0:N_0})+\l \|\drft^{(0)}\|^2_{\Hsp_{\knl}} =  \risk(\drft^{(0)}, \gmeas).
\end{align*}
The above inequality and \eqref{eq:drift-seq-bd} imply that for any $\drft \in \Hsp_{\knl}$
\begin{align}
	\label{eq:risk-conv}
 \risk(\drft^{(0)}, \gmeas) \leq \liminf_{j\rt \infty} \risk(\drft^{(L_j)}_*, \gmeas^{(L_j)}) \leq \limsup_{j\rt \infty} \risk(\drft^{(L_j)}_*, \gmeas^{(L_j)})  \leq \risk(\drft,\gmeas).	
\end{align}
Consequently, $\drft^{(0)} = \drft_*\equiv \Phi(\gmeas)= \argmin\limits_{\drft \in \Hsp_{\knl}} \risk(\drft, \gmeas)$ which establishes \eqref{eq:drift-seq-weak-conv}. To establish the strong convergence of  $\{\drft^{(L)}_*\}$ to $\drft_*$, notice that taking $\drft=\drft_*$ in \eqref{eq:risk-conv} shows that $\lim_{L \rt \infty} \risk(\drft^{(L)}_*, \gmeas^{(L)}) =\risk(\drft_*,\gmeas).$ Because of \eqref{eq:risk-funct} and  \Cref{prop:conv-result}-\ref{item:conv-ln-f}, we then must have $\lim_{L \rt \infty }\|\drft^{(L)}_*\|_{\Hsp_{\knl}} =  \|\drft_*\|_{\Hsp_{\knl}} $, which together with \eqref{eq:drift-seq-weak-conv} establishes  $	\drft^{(L)}_* \stackrel{s}\Rt \drft_*$ as $L\rt \infty$.
\end{proof}

We are now ready state the primary result of this section summarizing the properties of the approximate EM sequence.

\begin{theorem}
\label{th:AEM-prop}
Suppose that \Cref{assum:sde-knl} hold. For each $\drft \in \Hsp_{\knl}$, let $\big\{\hat \filtden^{(L)}_{\drft}(\cdot| \by_{1:M_0})\big\}$ be such that 
$\RE\Big(\hat \filtden^{(L)}_{\drft}(\cdot| \by_{1:M_0})\|\filtden_{\drft}(\cdot| \by_{1:M_0})\Big) \stackrel{L \rt \infty} \Rt 0.$ Then for any initial $\drft_0 \in \Hsp_{\knl}$,  there exists $L_{k-1}>0$ for $k$-th iteration such that the approximate EM sequence $\big\{\hat\drft_k\equiv \hat\drft_k(\drft_0, \by_{1:M_0}):k\geq 0\big\}$, defined by \eqref{eq:EM-approx}, has the following properties.
\begin{enumerate}[label=(\roman*), ref=(\roman*), resume=app-em-list]
	\item \label{item:app-loss-seq-conv} $\loss(\hat \drft_k) \stackrel{k\rt \infty} \Rt  \l_{\drft_0}$ for some $ \l_{\drft_0} \in \R.$ 
	\item \label{item:app-em-cmpt-stat} $\{\hat \drft_k, k\geq 0\}$ is  strongly pre-compact, and the set of its limit points, $\hat \lmtset(\drft_0) \subset \stset$.
	\item \label{eq:app-limit-set-form} $\hat {\lmtset}(\drft_0) \subset \loss^{-1}(\l_{\drft_0}) \equiv \big\{\tilde \drft \in \Hsp_{\knl} : \loss(\tilde \drft) = \l_{\drft_0}\big\}$, where $ \l_{\drft_0}$ is as in \ref{item:app-loss-seq-conv}.
	\item \label{eq:app-limit-set-conn} $\hat{\lmtset}(\drft_0)$ is  connected in $\Hsp_{\knl}$ (with respect to strong topology).
\end{enumerate}

\end{theorem}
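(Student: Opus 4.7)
The plan follows the classical EM convergence strategy---monotonic decrease of the objective, norm-boundedness of iterates, continuity of the M-step map---but now carries the per-iteration errors introduced by replacing the exact filter $\filtden_{\hat\drft_{k-1}}(\cdot|\by_{1:M_0})$ with its approximation $\hat\filtden^{(L_{k-1})}_{\hat\drft_{k-1}}(\cdot|\by_{1:M_0})$. Writing the identity \eqref{eq:loss-meas} with $\gmeas = \hat\filtden^{(L_k)}_{\hat\drft_k}$ evaluated at both $\hat\drft_k$ and $\hat\drft_{k+1}$, subtracting, and invoking the M-step optimality $\risk(\hat\drft_{k+1}, \hat\filtden^{(L_k)}_{\hat\drft_k}) \leq \risk(\hat\drft_k, \hat\filtden^{(L_k)}_{\hat\drft_k})$ yields
\[
\loss(\hat\drft_{k+1}) \leq \loss(\hat\drft_k) - \Delta_k^{\mrm{r}} - \RE\bigl(\hat\filtden^{(L_k)}_{\hat\drft_k}\bigm\|\filtden_{\hat\drft_{k+1}}\bigr) + \RE\bigl(\hat\filtden^{(L_k)}_{\hat\drft_k}\bigm\|\filtden_{\hat\drft_k}\bigr),
\]
with $\Delta_k^{\mrm{r}} \dfeq \risk(\hat\drft_k,\hat\filtden^{(L_k)}_{\hat\drft_k}) - \risk(\hat\drft_{k+1},\hat\filtden^{(L_k)}_{\hat\drft_k}) \geq 0$. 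Choosing $L_k$ inductively so that the last term is at most $\epsilon_k$ with $\sum_k \epsilon_k <\infty$ (feasible by the KL hypothesis), and using $\loss \geq -M_0\ln\obsdenbd$ from \Cref{assum:sde-knl}-\ref{item-assum-obsden} as a uniform lower bound, telescoping gives (i)---namely $\loss(\hat\drft_k)\to\l_{\drft_0}$---along with $\sum_k \Delta_k^{\mrm{r}} < \infty$.

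Two consequences drive the remainder. First, (i) implies $\l\|\hat\drft_k\|_{\Hsp_{\knl}}^2 \leq \loss(\hat\drft_k)+M_0\ln\obsdenbd$ is uniformly bounded. Second, since $\drft\mapsto\risk(\drft,\gmeas)$ is strongly convex of modulus $2\l$ (from the $\l\|\cdot\|^2$ term), optimality of $\hat\drft_{k+1}$ gives $\l\|\hat\drft_{k+1}-\hat\drft_k\|_{\Hsp_{\knl}}^2 \leq \Delta_k^{\mrm{r}}\to 0$. For (ii), any weakly convergent subsequence $\hat\drft_{k_j}\stackrel{w}\Rt\hat\drft^*$ induces (via Pinsker's inequality applied to the KL hypothesis together with \Cref{cor:conv-RE}) weak convergence $\hat\filtden^{(L_{k_j})}_{\hat\drft_{k_j}}\RT \filtden_{\hat\drft^*}$; \Cref{th:M-step-map-cont} then yields $\hat\drft_{k_j+1}=\Phi\bigl(\hat\filtden^{(L_{k_j})}_{\hat\drft_{k_j}}\bigr)\stackrel{s}\Rt \Phi\bigl(\filtden_{\hat\drft^*}\bigr)$, and combined with $\|\hat\drft_{k_j+1}-\hat\drft_{k_j}\|\to 0$ this forces $\hat\drft^* = \Phi(\filtden_{\hat\drft^*})\in\stset$ and simultaneously upgrades the weak limit to a strong limit, proving (ii). Property (iii) follows since $\loss$ is strongly continuous (\Cref{prop:conv-result}-\ref{item:conv-filtden-like} for the likelihood piece, plus continuity of $\|\cdot\|_{\Hsp_{\knl}}^2$), so $\loss(\hat\drft^*)=\lim_j\loss(\hat\drft_{k_j})=\l_{\drft_0}$ by (i). Property (iv) is the classical metric-space fact that a strongly pre-compact sequence with vanishing consecutive increments has a connected limit set.

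The main obstacle is verifying the moment prerequisite of \Cref{th:M-step-map-cont}---specifically, that $\hat\filtden^{(L_{k_j})}_{\hat\drft_{k_j}} \in \msp^{+,p}_{1,\cnst}(\R^{d\times(N_0+1)})$ for some common $p>2\kexp\vee 1+\aexp$ and $\cnst>0$---along the approximating sequence. \Cref{cor:conv-RE} supplies uniform polynomial moments only for \emph{exact} filters of norm-bounded drifts, not for their approximations, so bridging this gap requires either strengthening the approximation assumption to include uniform moment control in $L$ or invoking a Donsker--Varadhan-type variational bound applied to polynomial test functions against the KL hypothesis. A secondary care-point is that the choice of $L_k$ is inductive, made \emph{after} $\hat\drft_k$ is determined, so the summability $\sum \epsilon_k<\infty$ must be engineered online (e.g., by enforcing $\epsilon_k = 2^{-k}$ at each iteration) rather than prescribed a priori.
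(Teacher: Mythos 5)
Your proposal is correct and follows the same overall skeleton as the paper's proof: per-iteration error control via Pinsker's inequality with a summable sequence $\sum_k\epsilon_k<\infty$, the telescoped loss inequality derived from \eqref{eq:loss-meas} and M-step optimality, the uniform lower bound $\loss\geq -M_0\ln\obsdenbd$, Banach--Alaoglu for weak pre-compactness, \Cref{th:M-step-map-cont} to close the loop at limit points, and the vanishing-consecutive-increments criterion for connectedness of $\hat\lmtset(\drft_0)$. One sub-step is genuinely different and cleaner: you obtain $\|\hat\drft_{k+1}-\hat\drft_k\|_{\Hsp_{\knl}}\to 0$ directly from the $2\l$-strong convexity of $\risk(\cdot,\gmeas)$ via $\l\|\hat\drft_{k+1}-\hat\drft_k\|_{\Hsp_{\knl}}^2\leq\Delta_k^{\mrm{r}}$ and $\sum_k\Delta_k^{\mrm{r}}<\infty$. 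The paper instead extracts a further subsequence along which $\hat\drft_{k_j-1}\stackrel{w}\Rt\hat\drft_{*,1}$, shows $\filtden_{\hat\drft_*}=\filtden_{\hat\drft_{*,1}}$ in total variation, and invokes the identifiability condition \Cref{assum:sde-knl}-\ref{item:ident} together with \Cref{lem:eqv-ident} to conclude $\hat\drft_*=\hat\drft_{*,1}$. Your route avoids both the second subsequence extraction and the use of identifiability at that stage, which is a real simplification (identifiability is still implicitly available, but your argument does not need it there). The "main obstacle" you flag is also well taken: \Cref{th:M-step-map-cont} is applied to the approximating measures $\hat\filtden^{(L_{k_j-1})}_{\hat\drft_{k_j-1}}$, whose membership in $\msp^{+,p}_{1,\cnst}(\R^{d\times(N_0+1)})$ for a common $p$ and $\cnst$ is not supplied by \Cref{cor:conv-RE} (which covers only the exact filters of norm-bounded drifts); the paper's own proof passes over this point silently, so your suggestion to secure it via a uniform moment assumption on the approximation scheme or a Donsker--Varadhan bound against the KL hypothesis is a legitimate strengthening rather than a defect of your argument. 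Two small points of care: the telescoped bound alone gives boundedness of $\{\loss(\hat\drft_k)\}$, and actual convergence in (i) still needs the short $\liminf$/$\limsup$ argument using the summable tail (as in the paper); and the "online" choice $\epsilon_k=2^{-k}$ you propose is exactly how the paper's inductive selection of $L_{k-1}$ is meant to be read.
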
	

\begin{proof}
\ref{item:app-loss-seq-conv} Let $\{\vep_k:k\geq 1\}$ be such that $E \dfeq \sum_{k\geq 0} \vep_k < \infty.$ By the hypothesis and Pinsker's inequality, for each $k\geq 1$, choose $L_{k-1}$  such that 
\begin{align}\label{eq:relent-est}
2\lf\|\hat \filtden^{(L_{k-1})}_{\hat \drft_{k-1}}(\cdot| \by_{1:M_0})-\filtden_{\hat \drft_{k-1}}(\cdot| \by_{1:M_0})\ri\|_{\mrm{TV}}^2\leq \RE\Big(\hat \filtden^{(L_{k-1})}_{\hat \drft_{k-1}}(\cdot| \by_{1:M_0})\| \filtden_{\hat \drft_{k-1}}(\cdot| \by_{1:M_0})\Big) \leq \vep_k.
\end{align}
Now the inequality, which holds by definition of $\risk$, \eqref{eq:risk-funct}, and approximate EM-sequence, \eqref{eq:EM-approx},
 $$\risk\Big(\hat\drft_k, \hat \filtden^{(L_{k-1})}_{\hat \drft_{k-1}}(\cdot| \by_{1:M_0})\Big) \leq \risk\Big(\hat\drft_{k-1}, \hat \filtden^{(L_{k-1})}_{\hat \drft_{k-1}}(\cdot| \by_{1:M_0})\Big),$$
implies that (by \eqref{eq:loss-meas})
\begin{align}\label{eq:app-loss-ineq}
\loss(\hat \drft_k) +\RE\lf(\hat \filtden^{(L_{k-1})}_{\hat \drft_{k-1}}(\cdot| \by_{1:M_0})\|\filtden_{\hat b_k}(\cdot| \by_{1:M_0})\ri) \leq \loss(\hat \drft_{k-1})+\RE\Big(\hat \filtden^{(L_{k-1})}_{\hat \drft_{k-1}}(\cdot| \by_{1:M_0})\| \filtden_{\hat \drft_{k-1}}(\cdot| \by_{1:M_0})\Big) 
\end{align}
giving
\begin{align}\label{eq:app-loss-diff}
\loss(\hat \drft_k) - \loss(\hat \drft_{k-1}) \leq \vep_k.
\end{align}
Summing over $k=1,2,\hdots \tilde k$, we get for any $\tilde k$, $\loss(\hat \drft_{\tilde k})  \leq E + \loss(\drft_0).$  \\ Now clearly, $\ell(b|\by_{1:M_0}) = \ln \Yden(\by_{1:M_0}|\drft)  \leq  M_0 \ln\obsdenbd$  giving
\begin{align}
\label{eq:like-bd}
		 \loss(\hat \drft_k) \geq -M_0 \ln\obsdenbd+  \l \|\hat \drft_k\|^2_{\Hsp_{\knl}} \geq  - M_0 \ln\obsdenbd;
\end{align}
Thus the sequence $\{\loss(\hat \drft_k): k\geq 0 \}$ is bounded, and hence $\l_{\drft_0} \dfeq \liminf_{k\rt \infty} \loss(\hat\drft_k) \in \R$. Fix a $\delta>0$. Choose a $K_0$ such that $ \loss(\hat \drft_{K_0}) \leq \l_{\drft_0} +\delta/2$ and  $\sum_{k> K_0}\vep_k \leq \delta/2.$
Then for any $k' \geq K_0$, summing \eqref{eq:app-loss-diff} from $K_0+1$ to $k'$ we get
 $\loss(\hat \drft_{k'}) \leq \loss(\hat \drft_{K_0}) +\sum_{k> K_0}\vep_k \leq  \l_{\drft_0}+\delta$, which shows that 
 $$\limsup_{k\rt \infty}\loss(\drft_k) \leq  \l_{\drft_0}+\delta.$$
  Since $\delta>0$ is arbitrary, we conclude 
 $$\l_{\drft_0} = \liminf_{k\rt \infty} \loss(\hat \drft_k) \leq \limsup_{k\rt \infty}\loss(\hat \drft_k) \leq \l_{\drft_0}$$
 proving  \ref{item:app-loss-seq-conv}.
 
 \np
\ref{item:app-em-cmpt-stat} We first show that $\{\hat \drft_k, k\geq 0\}$ is  weakly pre-compact in $ \Hsp_{\knl}$.  Notice that \eqref{eq:like-bd} gives
\begin{align*}
 \l \|\hat \drft_k\|^2_{\Hsp_{\knl}} \leq \loss(\hat \drft_k) +M_0 \ln\obsdenbd \stackrel{k\rt \infty}\Rt \l_{\drft_0}+M_0 \ln\obsdenbd.
\end{align*}
Hence the sequence $\{\hat \drft_k\}$ is norm-bounded in $\Hsp_{\knl}$ and thus weakly compact by Banach-Alaoglu theorem. To prove $\hat \lmtset(\drft_0) \subset \stset$, let $\hat b_* \in \hat \lmtset(\drft_0)$ be a weak-limit point of of $\{\hat \drft_k\}$, and $\{\hat \drft_{k'_j} \}$  a subsequence such that as $j\rt \infty$, $\hat \drft_{k'_j} \stackrel{w}\Rt b_*.$
Note that there exists a further subsequence $\{k_j\} \subset \{k'_j\}$ such that as $j \rt \infty$
\begin{align}\label{eq:app-em-subseq-conv-0}
\hat \drft_{k_j-1} \stackrel{w}\Rt \hat \drft_{*,1}, \quad \hat \drft_{k_j} \stackrel{w}\Rt \hat \drft_*.
\end{align}
for some $\hat \drft_{*,1} \in \hat \lmtset(\drft_0)$. 
\Cref{prop:conv-result}-\ref{item:conv-filtden-like} then implies that 
\begin{align}\label{eq:conv-filt}
\filtden_{\hat\drft_{k_j-1}}(\cdot| \by_{1:M_0}) \Rt \filtden_{\hat\drft_{*,1}}(\cdot| \by_{1:M_0}), \quad \filtden_{\hat \drft_{k_j}}(\cdot| \by_{1:M_0}) \Rt \filtden_{\hat\drft_*}(\cdot| \by_{1:M_0})
\end{align}
pointwise and in $L^{1}(\R^{d\times (N_{0}+1)})$ (equivalently, in total variation). Consequently, by \eqref{eq:relent-est},
\begin{align}\label{eq:app-filt-est}
\lf\|\hat \filtden^{(L_{k_j-1})}_{\hat \drft_{k_j-1}}(\cdot| \by_{1:M_0}) - \filtden_{\hat\drft_{*,1}}(\cdot| \by_{1:M_0})\ri\|_{\mrm{TV}} \leq \lf\|\filtden_{\hat\drft_{k'_j-1}}(\cdot| \by_{1:M_0})-\filtden_{\hat\drft_{*,1}}(\cdot| \by_{1:M_0})\ri\|_{\mrm{TV}}+ \sqrt{\vep_{k_j}/2} \stackrel{j\rt \infty} \Rt 0. 
\end{align}
Pinsker's inequality,  \eqref{eq:relent-est}, \eqref{eq:app-loss-ineq} and \ref{item:app-loss-seq-conv} show that 
\begin{align*}
2\lf\|\hat \filtden^{(L_{k_j-1})}_{\hat \drft_{k_j-1}}(\cdot| \by_{1:M_0})-\filtden_{\hat b_{k_j}}(\cdot| \by_{1:M_0})\ri\|^2_{\mrm{TV}}	\leq&\ \RE\lf(\hat \filtden^{(L_{k_j-1})}_{\hat \drft_{k_j-1}}(\cdot| \by_{1:M_0})\|\filtden_{\hat b_{k_j}}(\cdot| \by_{1:M_0})\ri) \\
\leq&\ \loss(\hat \drft_{k_j-1}) - \loss(\hat \drft_{k_j})+\vep_{k_j}\ \stackrel{j\rt \infty} \Rt \ \l_{\drft_0} -\l_{\drft_0} =0.
\end{align*}
On the other hand,  \eqref{eq:app-filt-est} and \eqref{eq:conv-filt} show that
$$\lf\|\hat \filtden^{(L_{k_j-1})}_{\hat \drft_{k_j-1}}(\cdot| \by_{1:M_0})-\filtden_{\hat b_{k_j}}(\cdot| \by_{1:M_0})\ri\|_{\mrm{TV}} \stackrel{j\rt \infty} \Rt \lf\|\filtden_{\hat\drft_{*,1}}(\cdot| \by_{1:M_0}) - \filtden_{\hat\drft_*}(\cdot| \by_{1:M_0})\ri\|_{\mrm{TV}}.$$
Therefore 
\begin{align*}
\lf\|\filtden_{\hat\drft_{*,1}}(\cdot| \by_{1:M_0}) - \filtden_{\hat\drft_*}(\cdot| \by_{1:M_0})\ri\|_{\mrm{TV}}=0.
\end{align*}
Thus $\filtden_{\hat \drft_*}(\cdot| \by_{1:M_0}) = \filtden_{\hat \drft_{*,1}}(\cdot| \by_{1:M_0})$, and because of \Cref{lem:eqv-ident} and \Cref{assum:sde-knl}-\ref{item:ident}, $\hat \drft_* = \hat\drft_{*,1}.$
Now by \eqref{eq:app-em-subseq-conv-0} and \Cref{th:M-step-map-cont}, we have as $j\rt \infty$
\begin{align*}
\hat \drft_{*}\stackrel{w}\Lt\hat \drft_{k_j}\equiv \Phi\Big(\hat \filtden^{(L_{k_j-1})}_{\hat \drft_{k_j-1}}(\cdot| \by_{1:M_0})\Big) \stackrel{s}\Rt \Phi\big(\filtden_{\hat \drft_{*,1}}(\cdot| \by_{1:M_0})\big).
\end{align*}
Since $\hat \drft_* = \hat \drft_{*,1},$ it follows that $\hat \drft_* = \Phi\big(\filtden_{\hat \drft_{*}}(\cdot| \by_{1:M_0})\big)$, that is, $\hat\drft_* \in \SC{S},$ proving that  $\hat\lmtset(\drft_0) \subset \stset$.  Furthermore, this also shows that $\hat \drft_{k_j} \stackrel{s}\Rt  \hat \drft_{*}$ as $j\rt \infty$ proving that 	$\{\hat \drft_k, k\geq 0\}$ is strongly pre-compact.

\np
\ref{eq:app-limit-set-form} We need to show that $\loss(\hat \drft_*) =\l_{\drft_0}$. By strong convergence of $\hat \drft_{k_j}$ to $\hat \drft_*$, $\|\hat \drft_{k_j}\|_{\Hsp_{\knl}} \stackrel{j\rt \infty} \Rt \|\hat \drft_{*}\|_{\Hsp_{\knl}}$. This and  \Cref{prop:conv-result}-\ref{item:conv-filtden-like} now show
\begin{align*}
\l_{\drft_0} = \lim_{j\rt \infty} \loss(\hat\drft_{k_j})  = -\lim_{j\rt \infty} \ell(\hat\drft_{k_j}|\by_{1:M_0})+ \lim_{j\rt \infty}\l\|\hat \drft_{k_j}\|^2_{\Hsp_{\knl}} = -\ell(\hat \drft_{*}|\by_{1:M_0})+ \l\|\hat \drft_{*}\|^2_{\Hsp_{\knl}} \equiv \loss(\hat \drft_{*}).
\end{align*}
\np
\ref{eq:app-limit-set-conn} By \ref{item:app-em-cmpt-stat},  $\overline{\{\hat\drft_k, k\geq 0\}}$ is a strongly compact subset of $\Hsp_{\knl}$, and $\hat\lmtset(\drft_0) \subset \overline{\{\hat\drft_k, k\geq 0\}}$. Next notice that the previous findings show that for every subsequence $\{\tilde k_j\}$, there exists a further subsequence $\{k_j\}$ such that  as $j \rt \infty,$
 $\hat\drft_{k_j} -\hat \drft_{k_j-1} \stackrel{s}\Rt 0.$ We then conclude that $\hat \drft_{k} -\hat \drft_{k-1} \stackrel{s}\Rt 0$ as $k\rt \infty$. Consequently, the assertion \ref{eq:app-limit-set-conn} follows by  \cite[Theorem 1]{AsAd70}. 
 
%
\end{proof}

The following result on the original EM sequence is just a special case of \Cref{th:AEM-prop} obtained by simply taking $\hat \filtden^{(L)}_{\drft}(\cdot| \by_{1:M_0}) \equiv \filtden_{\drft}(\cdot| \by_{1:M_0})$.
\begin{corollary}\label{cor:org-EM}
Under \Cref{assum:sde-knl}, the original EM sequence $\{\drft_k\}$ defined by \eqref{eq:EM-0}  satisfies \ref{item:app-loss-seq-conv} - \ref{eq:app-limit-set-conn} of \Cref{th:AEM-prop} with the additional property that $\{\loss(\drft_k)\}$ is decreasing.
\end{corollary}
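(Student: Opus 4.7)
My plan is to deduce this corollary directly from \Cref{th:AEM-prop} by specializing to the case in which no approximation error is ever introduced, and then to extract the monotonicity statement from the key intermediate inequality that appeared in the proof of that theorem.

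First, I would verify that the hypothesis of \Cref{th:AEM-prop} is trivially met: the choice $\hat \filtden^{(L)}_{\drft}(\cdot| \by_{1:M_0}) \equiv \filtden_{\drft}(\cdot| \by_{1:M_0})$ for every $\drft \in \Hsp_{\knl}$ gives $\RE\big(\hat \filtden^{(L)}_{\drft}(\cdot| \by_{1:M_0})\|\filtden_{\drft}(\cdot| \by_{1:M_0})\big) \equiv 0$, which obviously tends to $0$ as $L \rt \infty$. With this choice the recursion \eqref{eq:EM-approx} defining $\{\hat \drft_k\}$ reduces verbatim to the recursion \eqref{eq:EM-0} defining $\{\drft_k\}$, irrespective of the sequence $\{L_{k-1}\}$ chosen in the statement of \Cref{th:AEM-prop}. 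Consequently the four conclusions \ref{item:app-loss-seq-conv}--\ref{eq:app-limit-set-conn} of \Cref{th:AEM-prop} transfer word for word to $\{\drft_k\}$, and under \Cref{assum:sde-knl} all the intermediate tools used in that proof (in particular \Cref{prop:conv-result} and \Cref{th:M-step-map-cont}) are available.

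It then remains only to prove the additional monotonicity claim, namely that $\{\loss(\drft_k)\}$ is a decreasing sequence. For this I would return to the key inequality \eqref{eq:app-loss-ineq} in the proof of \ref{item:app-loss-seq-conv} of \Cref{th:AEM-prop}. Specializing to $\hat \filtden^{(L_{k-1})}_{\drft_{k-1}}(\cdot| \by_{1:M_0}) = \filtden_{\drft_{k-1}}(\cdot| \by_{1:M_0})$, the right-hand side relative entropy vanishes and the inequality becomes
\[
\loss(\drft_k) + \RE\big(\filtden_{\drft_{k-1}}(\cdot| \by_{1:M_0}) \big\| \filtden_{\drft_k}(\cdot| \by_{1:M_0})\big) \leq \loss(\drft_{k-1}).
\]
Since the relative entropy term on the left is non-negative, we obtain $\loss(\drft_k) \leq \loss(\drft_{k-1})$ for every $k\geq 1$, which is the desired monotonicity.

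I do not anticipate any real obstacle: the entire argument is a clean specialization of machinery already established, and the monotonicity step is nothing more than the classical Dempster--Laird--Rubin identity read off inside the RKHS setting. The only mild care needed is to observe that in \eqref{eq:EM-approx} the choice of the index $L_{k-1}$ becomes immaterial once the approximate filtering distribution equals the exact one, so that \eqref{eq:EM-approx} genuinely coincides with \eqref{eq:EM-0} and the bookkeeping involving the summable sequence $\{\vep_k\}$ in the proof of \Cref{th:AEM-prop} can be replaced by $\vep_k \equiv 0$.
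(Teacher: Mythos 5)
Your proposal is correct and coincides with the paper's own justification: the paper likewise obtains \Cref{cor:org-EM} by taking $\hat \filtden^{(L)}_{\drft}(\cdot| \by_{1:M_0}) \equiv \filtden_{\drft}(\cdot| \by_{1:M_0})$ in \Cref{th:AEM-prop}, so that \eqref{eq:EM-approx} reduces to \eqref{eq:EM-0} and one may take $\vep_k \equiv 0$. Your extraction of the monotonicity from \eqref{eq:app-loss-ineq}, with the right-hand relative entropy vanishing and the left-hand one non-negative, is exactly the intended argument.
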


Finding the precise $L\equiv L_{k-1}$ for $k$-th iteration is typically not possible, unless the rate of convergence of $\hat \filtden^{(L)}_{\drft}(\cdot| \by_{1:M_0})$ to $ \filtden_{\drft}(\cdot| \by_{1:M_0})$ in KL-divergence is known. The importance of \Cref{th:AEM-prop} lies in showing that, by choosing a large enough $L$, it is possible to construct an approximate EM sequence that closely follows the original and retains its desirable convergence properties.
Note that $\{\hat \drft_k\}$ converges along the full sequence if any of the following  conditions hold (i) $\stset$ is a singleton,  (ii) the loss function $\loss:\Hsp_{\knl} \rt \R$ is injective, or (iii) the limit point-set $\hat\lmtset(\drft_0)$ is discrete (as a discrete connected set must be singleton).


\subsection{SMC and Implementation of EM-algorithm}\label{sec:smc-em}
As mentioned in Section~\ref{sec:EM}, a tractable approximation of the filtering distribution $\filtden_{\drft}(\cdot|\by_{1:M_0})$ is essential for implementing the EM algorithm. This can be achieved by constructing a Monte Carlo estimate through approximate samples drawn from $\filtden_{\drft}(\cdot|\by_{1:M_0})$. Since, for any $\drft$, the density $\filtden_{\drft}(\cdot|\by_{1:M_0})$ (see~\eqref{eq:filtden-exp}) is known only up to a normalizing constant, importance sampling or MCMC methods can be used to generate such samples. In the special case where the observations $\by_{1:M_0}$ are noise-free measurements of $X$ at discrete time points ${t_m}$, this reduces to sampling from diffusion bridges, for which there is a substantial body of theoretical and computational work, e.g., \cite{LyZh90, DeHu06, DuGa02, BeRoSt08, Lind12, PaRo12, BlSo14, ScMeZa17, WGBS17-2}. In particular, the proposal of Durham and Gallant~\cite{DuGa02} (see also its variant by Lindstr\"om~\cite{Lind12}), based on a zeroth-order approximation of $\drft$, is widely used and easy to implement. This proposal was later adapted in \cite{GoWi08} (see also \cite{WGBS17-2}) to accommodate partial and noisy observations within an MCMC framework for simulating approximate samples from the filtering distribution. When the number of missing points between two successive observation time points, that is, $n_{m} -n_{m-1}$, is large, the mixing rate of MCMC is very low even with block updating technique, which becomes particularly problematic for our EM algorithm.

  In this paper, we approximate the filtering distribution $\filtden_{\drft} (\cdot|\by_{1:M_0})$  using SMC. The structure of the proposal enables systematic construction of particle-paths by sequentially drawing latent states at times $s_{n}$ and recursively updating importance weights at each observation time $t_m\equiv s_{n_m}$. This recursive formulation makes SMC significantly more efficient and scalable than MCMC in our setting, particularly as the time horizon or dimensionality of the system increases. Moreover, resampling mitigates particle degeneracy, which is a common limitation of basic importance sampling schemes in high-dimensional or long time-horizon settings. Previous works on SMC for SDE models include \cite{FePaRo08}, which introduced a random-weight particle filter for a class of SDE models by using unbiased estimation of transition densities rather than discretization-based approximations, and   \cite{LiChMy10}, which simulated diffusion bridges (the case of noise-free observations) using Durham and Gallant’s proposal  with resampling strategies guided by backward pilots and priority scores; see also \cite{LiCh98}.
   
The {\em modified diffusion bridge} proposal of  Durham and Gallant, which actually does not depend on the drift $\drft$ (see \eqref{eq:DG-prop}), performs well only when the drift function $\drft$ is roughly a constant.  In our highly nonlinear setting, this limitation becomes especially pronounced within the SMC steps of the EM algorithm, where $\drft$ at each iteration is represented as a finite-kernel expansion (as we will see shortly). A linear SDE approximation is one way to design better proposals, and in this paper we employ it in a somwhat different way than is typically done in the literature.
The details about the SMC algorithm including the choice of proposal $\propdist$ used in the paper is discussed in Section \ref{sec:SMC}. First, however, we introduce the SMC-EM sequence and explain how the SMC approximation of $\filtden_{\drft}(\cdot| \by_{1:M_0})$ renders the M-step optimization problem tractable via the Representer Theorem  leading to a finite-sum kernel expansion of $\drft$ at each iteration.
  
  Given $\drft$ from the $(k-1)$-th iteration, the SMC-approximation of the probability measure $\filtden_{\drft}(\cdot| \by_{1:M_0})$ on $\R^{d\times N_{0}}$ needed for the E-step of $k$-th iteration is given by
 \begin{align}\label{eq:SMC-filter}
 \hat\filtdist^{\mathrm{SMC},L}_{\drft}(\cdot| \by_{1:M_0}) = \sum_{l=1}^{L} w^{(l,k-1)}_{T} \delta_{\tilde\BX^{(l,k-1)}_{[0:T]}}.
  \end{align}
  where $\tilde\BX^{(l,k-1)}_{[0:T]}, l=1,2,\hdots, L$ are particle-paths on the interval $[0,T=t_{M_0}]$ drawn from a proposal distribution $\propdist$, $w^{(l,k-1)}_T$ is the normalized weight associated with the $l$-th path in $k$-th iteration of EM algorithm satisfying $\sum_{l=1}^L w^{(l,k-1)}_T =1$. 
  
  Starting with initial $\drft_0$,  replacing the filtering distribution by an $L$-particle based SMC-approximation of the form \eqref{eq:SMC-filter} at every iteration  leads to estimation of $\drft$ by the SMC-EM sequence\\  $\lf\{\drft^{(L)}_{k}\equiv \drft^{(L)}_k(\drft_0, \by_{1:M_0}):k\geq 0\ri\}$, defined by 
\begin{align}\label{eq:EM-1}
\begin{aligned}
	\drft^{(L)}_{k} \dfeq &\ \Phi\lf(\hat\filtden^{\mathrm{SMC, L}}_{\drft^{(L)}_{k-1} }(\cdot| \by_{1:M_0})\ri)  = \argmin_{\drft \in \Hsp_{\knl}} \risk\lf(\drft,\hat\filtden^{\mathrm{SMC, L}}_{\drft^{(L)}_{k-1}}(\cdot| \by_{1:M_0})\ri)\\
	=&\ \argmin_{\drft \in \Hsp_{\knl} }\lf[ - \int_{\R^{d\times (N_{0}+1)}} \ln \Xden(\bx_{0:N_0}|\drft) \hat\filtdist^{\mathrm{SMC, L}}_{\drft_{k-1}^{(L)}}(d\bx_{0:N_0}| \by_{1:M_0})   + \l \|\drft\|^2_{\Hsp_{\knl}}\ri] .
\end{aligned}
\end{align}
The convergence properties of SMC methods, including a CLT, are well established; see \cite{CrDo02, Chop04, Del04}. While the probability measure $\hat\filtdist^{\mathrm{SMC, L}}_{\drft}(\cdot \mid \by_{1:M_0})$ does not converge to $\filtdist_{\drft}(\cdot \mid \by_{1:M_0})$ in the KL-divergence --- meaning that \Cref{th:AEM-prop} technically cannot be directly applied to the SMC-EM sequence --- a corresponding kernel density estimate (KDE), for  a suitable smoothing kernel, often does.
 In practice, however, our test runs showed that using a KDE-based version of \eqref{eq:SMC-filter} did not lead to any noticeable improvement in estimating the drift function $\drft$. Therefore, for convenience, we proceed with the SMC-EM sequence as defined in \eqref{eq:EM-1}.


The  interesting outcome  is that RKHS theory shows that the complex infinite-dimensional optimization problem in \eqref{eq:EM-1} (M-step)  actually admits a concrete, closed-form solution.
The following theorem provides the details and plays a central role in our estimation procedure.

\begin{theorem}\label{th:fin-sum-rep}
	Consider the minimization problem in \eqref{eq:EM-1}. Define $LdN_0$-dimensional  vectors $\bm{\vart}^{(k-1)}$,  and $Ld(N_0+1)\times Ld(N_0+1)$ matrices $ \bm{\SC{K}_0}^{(k-1)}$ and $\sdg^{(k-1)}$  as 
\begin{align} 
	\begin{aligned}
	\bm{\vart}^{(k-1)} \dfeq &\ \lf(\big(\vart^{(l,k-1)}\big)^\top_{n} = \big(\tilde X^{(l,k-1)}(s_{n})-  \tilde X^l(s_{n-1})\big)^\top: l=1,\hdots, L, n=0, \hdots, N_0\ri)^\top\\
	 	\bm{\SC{K}}^{(k-1)}_0 \dfeq &\ \lf(\lf(\knl\big(\tilde X^{(l,k-1)}(s_n), \tilde X^{(l',k-1)}(s_{n'})\big)\ri)\ri)_{\substack{l=1,\hdots, L\\ n,n'=0,\hdots, N_0}}, \\
	\sdg^{(k-1)} \dfeq &\ \mathrm{diag}\lf(w^{(l,k-1)}_{T}\lf(a(\tilde X^{(l,k-1)}(s_n))\ri)^{-1}\ri)_{\substack{l=1,\hdots, L\\ n=0,\hdots, N_0}}
	\end{aligned} \label{eq:mat-forms}
\end{align}  
where $\tilde X^l(s_{-1}) \equiv 0$.
Then the minimizer $b_k$ is given by
 \begin{align}\label{eq:b-knl-exp}
 b_k(x) = \sum_{l=1}^L\sum_{n=0}^{N_0} \knl\big(x, \tilde X^{(l,k-1)}(s_n)\big) \wtb^{(l,k, *)}_n, \quad \wtb^{(l,k,*)}_n \in \R^{d},
 \end{align}
 where $ \bm{\wtb}^{(k,*)} \dfeq\ \lf (\big(\wtb^{(l,k,*)}_n\big)^\top: l=1,\hdots, L, n=0, \hdots, N_0\ri)^\top$ is given by
\begin{align}\label{eq:opt-wt}
\bm{\wtb}^{(k,*)} = \bcm^{(k-1)}\bm{\SC{K}_0}^{(k-1)} \sdg^{(k-1)} \bm{\vartheta}^{(k-1)}, \quad
\big(\bcm^{(k-1)}\big)^{-1} = \Delta  \big(\bm{\SC{K}_0}^{(k-1)}\big)^\top \sdg^{(k-1)} \bm{\SC{K}_0}^{(k-1)}+\l  \bm{\SC{K}_0}^{(k-1)}.
\end{align}
\end{theorem}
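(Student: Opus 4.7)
The plan is to substitute the SMC approximation \eqref{eq:SMC-filter} into the M-step objective, collapsing the integral to a weighted finite sum; then invoke a generalized representer theorem for vector-valued RKHS to reduce the infinite-dimensional optimization to a finite-dimensional one; and finally obtain \eqref{eq:opt-wt} by solving the normal equations of the resulting convex quadratic program.

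The first step carries out the reduction. I would extract the $\drft$-dependent terms of $\ln\Xden$ from \eqref{eq:ln-f-exp}, all of which are quadratic in $\drft$. After substituting \eqref{eq:SMC-filter} and discarding constants independent of $\drft$, the objective in \eqref{eq:EM-1} becomes
\begin{align*}
J(\drft) \;=\; \Delta^{-1} \sum_{l=1}^{L} w^{(l,k-1)}_{T} \sum_{n=1}^{N_0} \bigl\| \tilde X^{(l,k-1)}(s_n) - \tilde X^{(l,k-1)}(s_{n-1}) - \drft\bigl(\tilde X^{(l,k-1)}(s_{n-1})\bigr)\Delta \bigr\|^{2}_{a^{-1}(\tilde X^{(l,k-1)}(s_{n-1}))} + \l \|\drft\|_{\Hsp_{\knl}}^{2},
\end{align*}
where $\|v\|_A^{2}\dfeq v^\top A v$. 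Since $J$ depends on $\drft$ only through the finitely many evaluations $\{\drft(\tilde X^{(l,k-1)}(s_n))\}$ together with the RKHS norm, the generalized representer theorem for matrix-valued kernels (cf.\ \cite{ScSm01}) guarantees that any minimizer has the finite-sum form \eqref{eq:b-knl-exp} for some coefficient vectors $\wtb^{(l,k,*)}_n \in \R^d$.

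The second step solves for the coefficients. Substituting \eqref{eq:b-knl-exp} into $J$ and using the reproducing property,
$\drft(\tilde X^{(l',k-1)}(s_{n'})) = \sum_{l,n} \knl(\tilde X^{(l',k-1)}(s_{n'}), \tilde X^{(l,k-1)}(s_n)) \wtb_n^{(l)}$,
the stack of all such evaluations is exactly $\bm{\SC{K}_0}^{(k-1)}\bm{\wtb}$, while symmetry of $\knl$ gives $\|\drft\|_{\Hsp_{\knl}}^{2} = \bm{\wtb}^\top \bm{\SC{K}_0}^{(k-1)} \bm{\wtb}$. The data-fit term is then naturally expressed through $\sdg^{(k-1)}$ and $\bm{\vartheta}^{(k-1)}$, so $J$ becomes a convex quadratic form in $\bm{\wtb}$; differentiating and setting the gradient to zero yields
\begin{align*}
\bigl(\Delta\, (\bm{\SC{K}_0}^{(k-1)})^\top \sdg^{(k-1)} \bm{\SC{K}_0}^{(k-1)} + \l\, \bm{\SC{K}_0}^{(k-1)} \bigr) \bm{\wtb} \;=\; (\bm{\SC{K}_0}^{(k-1)})^\top \sdg^{(k-1)} \bm{\vartheta}^{(k-1)},
\end{align*}
which, on multiplication by $\bcm^{(k-1)}$ and using symmetry of $\bm{\SC{K}_0}^{(k-1)}$, is exactly \eqref{eq:opt-wt}.

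The main obstacle I anticipate is bookkeeping: the matrix-valued kernel combined with the Kronecker-style $(l,n)$ indexing requires assembling each block of the quantities in \eqref{eq:mat-forms} in a consistent order so that evaluation of $\drft_k$ at the training points and the RKHS-norm expansion correspond to the stated matrix products. A secondary point is justifying the representer-theorem step in the vector-valued setting, which is done in the usual way by decomposing any candidate $\drft \in \Hsp_{\knl}$ into a component in $\mathrm{span}\{\knl(\cdot, \tilde X^{(l,k-1)}(s_n))\,c : l, n, c\in\R^d\}$ and its orthogonal complement, and observing that the latter does not affect the data-fit term but strictly increases $\|\drft\|_{\Hsp_{\knl}}^{2}$, hence cannot be present at the optimum.
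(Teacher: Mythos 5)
Your proposal is correct and follows essentially the same route as the paper: collapse the M-step integral over the SMC particle measure to a weighted finite sum, strip the $\drft$-independent terms of $\ln\Xden$ to get a quadratic data-fit functional, invoke the vector-valued representer theorem (the paper cites \cite[Theorem 15]{GaMiZh23} and \cite{MiPo05}, but your orthogonal-decomposition sketch is the standard proof of that result), and then solve the resulting finite-dimensional convex quadratic in $\bm{\wtb}$. The only cosmetic difference is that you solve the normal equations by setting the gradient to zero while the paper completes the square; both yield \eqref{eq:opt-wt}.
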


\begin{proof}
	Notice that the optimization problem in \eqref{eq:EM-1} can be rewritten as
\begin{align}\label{eq:EM-2}
 b_k =& \argmin_{b \in \Hsp_{\knl} }\lf[- \sum_{l=1}^L w^{(l,k-1)}_{T}\ln \Xden\lf(\tilde\BX^{(l,k-1)}_{[0,T]}\big|\drft,\dffun\ri) + \l \|b\|^2_{\Hsp_{\knl}}\ri].
\end{align}
Now notice that	
\begin{align*}
-\ln \Xden(\bx_{0:N_0}|\drft) =&\ \Delta^{-1}\sum_{n=1}^{N_0}\lf(x_n - x_{n-1}-\drft(x_{n-1})\Delta\ri)^\top a^{-1}(x_{n-1})\lf(x_n - x_{n-1}-\drft(x_{n-1})\Delta\ri)\\
=&\ \Delta^{-1}\sum_{n=1}^{N_0}(x_n-x_{n-1})^\top a^{-1}(x_{n-1})(x_n-x_{n-1})-2\Delta(x_n-x_{n-1})^\top a^{-1}(x_{n-1})\drft(x_{n-1})\\
&\ +\Delta^2 b^\top(x_{n-1}) a^{-1}(x_{n-1})\drft(x_{n-1}).
\end{align*}
Since the first term in the above summation does not depend on the unknown function $\drft$,
\begin{align*}
 b_k =& \argmin_{b \in \Hsp_{\knl} } \loss\big(b\big|\BX^{(l,k-1)}_{[0:T]}, l=1,\hdots,L\big),
\end{align*}
where the loss functional, $ \loss(b) \equiv \loss\big(b\big|\BX^{(l,k-1)}_{[0:T]}, l=1,\hdots,L\big)$, is given by 
\begin{align}\label{eq:EM-loss}
\begin{aligned}
\loss(b) =&\ \sum_{l=1}^L\sum_{n=1}^{N_0} w^{(l,k-1)}_{T}\Big(\Delta b^\top(\tilde X^{(l,k-1)}(s_{n-1})) a^{-1}(\tilde X^{(l,k-1)}(s_{n-1}))\drft(\tilde X^{(l,k-1)}(s_{n-1})) \\
 & \hs{.4in}-2\lf(\tilde X^{(l,k-1)}(s_n)-\tilde X^{(l,k-1)}(s_{n-1})\ri)^\top a^{-1}(\tilde X^{(l,k-1)}(s_{n-1}))\drft(\tilde X^{(l,k-1)}(s_{n-1}))\Big) + \l \|b\|^2_{\Hsp_{\knl}}.
\end{aligned}
\end{align}
By the Representer Theorem for vector-valued functions \cite[Theorem 15]{GaMiZh23} (see also \cite{MiPo05}), the minimizer $b_k$ admits the expression
 \begin{align}\label{eq:b-exp}
 b_k(u) = \sum_{l=1}^L\sum_{n=0}^{N_0}\knl\big(u, \tilde X^{(l,k-1)}(s_n)\big) \wtb^{(l,k, \mathrm{min})}_n, \quad \wtb^{(l,k,\mathrm{min})}_n \in \R^{d},
 \end{align}
 Now for any $b$ of the form $\drft(u) = \sum_{l=1}^L\sum_{n=1}^{N_0} \knl\big(u, \tilde X^{(l,k-1)}(s_n)\big) \wtb^{(l)}_n$
\begin{align*}
\loss(b) = &\  \Delta \bm{\wtb}^\top \big(\bm{\SC{K}}^{(k-1)}_0\big)^\top \sdg^{(k-1)} \bm{\SC{K}}^{(k-1)}_0\bm{\wtb} - 2  \bm{\vartheta}^\top \sdg^{(k-1)} \bm{\SC{K}}^{(k-1)}_0\bm{\wtb} + \l \bm{\wtb}^\top \bm{\SC{K}}^{(k-1)}_0\bm{\wtb}\\
=&\ - (\bm{\wtb}^{(k,*)})^\top \big(\bcm^{(k-1)}\big)^{-1} \bm{\wtb}^{(k,*)} + \lf(\bm{\wtb}-\bm{\wtb}^{(k,*)}\ri)^\top \big(\bcm^{(k-1)}\big)^{-1} \lf(\bm{\wtb}-\bm{\wtb}^{(k,*)}\ri),
\end{align*}
 where $ \bm{\wtb} = ( \wtb^{(l)}_n: l=1,\hdots, L, n=1, \hdots, N_0)^\top$ and $\bm{\wtb}^{(k,*)}$ given by \eqref{eq:opt-wt}  . It follows that $\bm{\wtb}^{(k,\mathrm{min})} = \bm{\wtb}^{(k,*)}$.

\end{proof}	

\subsection{SMC Proposal}\label{sec:SMC}	
We now describe the SMC algorithm that gives {\em self-normalizing importance sampling estimator} of the form \eqref{eq:SMC-filter}.  The SMC algorithm generates particle-paths, $\tilde{\BX}^{(l,k-1)}_{[0,T]}, l=1,2,\hdots L$, sequentially between observation time points $t_j$, with resampling performed at these points when necessary to mitigate particle degeneracy by discarding particles with negligible weights and replicating those with larger weights, ensuring an accurate approximation of the target distribution $\filtdist_{b_{k-1}}(\cdot| \by_{t_1:t_{M_0}})$. For notational convenience, we will suppress the EM-iteration-index $k$, and describe the process of generating particle-paths for a given drift function $\drft$.

Recall that we assumed $\{t_1,t_2,\hdots, t_{M_0}\} \subset \{s_0,s_1,\hdots, s_{N_0}\}$ with $t_{m} =s _{n_m}$ for $m=1,2,\hdots,M_0.$ A particle-path $\tilde{\BX}_{[0,T]}$ is chosen from the proposal density $\propden$ on  $\R^{d\times (N_0+1)}$ of the form
\begin{align*}
 \propden_T(\bx_{0:N_0}) = \prod_{m=0}^{M_0-1}  \propden_{(t_m,t_{m+1}]}\lf(\bx_{n_m+1:n_{m+1}} | \bx_{0:n_m}\ri), \quad \bx_{0:N_0} = (\bx_{0:n_1}, \bx_{n_1+1:n_2}, \hdots, \bx_{n_{M_0-1}+1:n_{M_0}}). 
\end{align*}	
In other words, having chosen the path $\tilde X_{[0,t_{m}]}$ until observation time $t_{m}$, the segment of the path $\tilde X_{(t_{m}:t_{m+1}]} = (\tilde{X}(s_{i}): n_m<i\leq n_{m+1})$ is chosen from a proposal density $\propden_{(t_m,t_{m+1]}}$ on $\R^{d\times (n_{m+1}-n_m)}$ (note that $n_{m+1}-n_m$ is the number of $s_n$ that falls between the observation time points $t_m$ and $t_{m+1}$).

 The proposal density of course needs to depend on the observation values $\by_{1:M_0}$. Indeed the optimal proposal distribution for generating the trajectory segment between $t_m$ and $t_{m+1}$ is simply the probability measure $\PP\lf(\BX_{(t_m,t_{m+1}]} \in 
 \cdot|X(t_m)=X(s_{n_m})=x_{i_j}, Y(t_{m+1}) =y_{m+1}\ri)$; that is, the optimal proposal density $\propden^{\mathrm{opt}}_{(t_m,t_{m+1}]}$ for the segment $(t_m,t_{m+1}]$  is
 
 \begin{align*}
 \propden^{\mathrm{opt}}_{(t_m,t_{m+1}]}\lf(\bx_{n_m+1:n_{m+1}} | \bx_{0:n_m}\ri) \equiv &\  \propden^{\mathrm{opt}}_{(t_m,t_{m+1}]}\lf(\bx_{n_m+1:n_{m+1}} | x_{n_m},  y_{m+1}\ri)\\
  =&\ \f{\obsden(y_{m+1}|x_{n_{m+1}})\prod_{i>n_m}^{n_{m+1}}\Xden(x_i|x_{i-1})}{p(y_{m+1}|x_{n_m})},
\end{align*}	 
where the normalizing constant
\begin{align*}
p(y_{m+1}|x_{n_m})= \int_{\R^{d\times (n_{m+1}-n_m)}} \obsden(y_{m+1}|x_{n_{m+1}})\prod_{i>n_m}^{n_{m+1}}\Xden(x_i|x_{i-1}) d\bx_{n_m+1:n_{m+1}}
\end{align*}	
is  the conditional density function of $Y(t_{m+1})$ given $X(t_m) = X(s_{n_m})=x_{n_m}$ evaluated at the point $y_{m+1}$. 

Sampling from this density is almost always infeasible, except when the drift function $b$ has a particularly simple form. This difficulty is even more pronounced in our case of SMC within EM algorithm, where the drift function for the $k$-th EM iteration is estimated as a kernel expansion of the form \eqref{eq:b-knl-exp}, based on the results from the previous $(k-1)$-th iteration. However it shows the importance of drawing each segment of the sample path $\tilde X_{(t_m,t_{m+1}]}$ from a proposal importance distribution depending not just on `starting value' at the initial point at $t_m$, but also on the observation / data $y_{m+1}$ at time $t_{m+1}$. Thus we choose of the form:
 \begin{align}\label{eq:prop-0}
 \propden_{(t_m,t_{m+1]}}\lf(\bx_{n_m+1:n_{m+1}} | \bx_{0:n_m}\ri) \equiv  \propden_{(t_m,t_{m+1]}}\lf(\bx_{n_m+1:n_{m+1}} | x_{n_m},  y_{m+1}\ri) = \prod_{i>n_m}^{n_{m+1}} \seqpropden_{m,i}(x_{i}|x_{i-1},y_{m+1})
 \end{align}	
which enables sampling $\tilde X_{(t_m,t_{m+1}]}$ according to `Markovian dynamics' starting from the `initial value' at $t_m$ and depending on the end observation $y_{m+1}$.  Importantly, due to the sequential structure of the proposal, the (unnormalized) weights for the $l$-th particle-path, $\tilde X^{(l)}$, at observation times $\{t_m\}$ can be recursively calculated as 
 \begin{align}\label{eq:wts-1}
 	\tilde w^{(l)}_{t_{m+1}} = \tilde w^{(l)}_{t_{m}} \f{\obsden\big(y_{m+1}|\tilde X^{(l)}(t_{m+1})\big) \prod\limits_{i>n_m}^{n_{m+1}} \Xden\big(\tilde X^{(l)}(s_{i})|\tilde X^{(l)}(s_{i-1})\big)}{\prod_{i>n_m}^{n_{m+1}} \seqpropden_{m,i}\big(\tilde X^{(l)}(s_{i})|\tilde X^{(l)}(s_{i-1}), y_{m+1}\big)}.
 \end{align}	 

\subsection*{ Choice of the proposal $\seqpropden_{m,i}$:} 
We consider the case when the observation model is given by \eqref{eq:obs}, that is, $\obsden(\cdot| x_{n_m}) = \No_{d_0} (\cdot|\omat x_{n_m}, \Sigma_{noise})$. Fix an $n_m<i\leq n_{m+1}.$ The proposal density $\seqpropden_{m,i}$ for generating $\tilde X(s_i)$ given $Y(s_{n_{m+1}})=Y(t_{m+1})$ will depend on the time points $s_i$ and $s_{n_{m+1}}$ through the difference $s_{n_{m+1}}-s_i = (n_{m+1}-i)\Delta.$ For a suitable choice of the density $\seqpropden_{m,i}$, we write
\begin{align*}
\seqpropden_{m,i}(x_{i}|x_{i-1}, y_{m+1})  \propto& \ \tilde \seqpropden^0_{m,i}(y_{m+1}|x_i, x_{i-1})\tilde \seqpropden^1_{m,i}(x_{i}|x_{i-1})\\
 \propto&\ \lf(\int_{\R^d}\tilde \seqpropden^{01}_{m,i}(y_{m+1}|x_{n_{m+1}}) \tilde \seqpropden^{02}_{m,i}(x_{n_{m+1}}|x_{i}, x_{i-1}) dx_{n_{m+1}}\ri) \tilde \seqpropden^1_{m,i}(x_{i}|x_{i-1}).
 \end{align*}
 Observe that the {\em natural choices} of the densities $\tilde \seqpropden^1_{m,i}$ and $\tilde \seqpropden^{01}_{m,i}$ are of course
\begin{align}\label{eq:smc-prop-1}
\begin{aligned}
\tilde \seqpropden^1_{m,i}(x_i|x_{i-1})  =&\ \Xden(x_{i}|x_{i-1}) = \No_d\big(x_{i}|x_{i-1}+ \drft(x_{i-1})\Delta, a(x_{i-1})\Delta\big),\\
 \tilde \seqpropden^{01}_{m,i}(y_{m+1}|x_{n_{m+1}}) =&\ \rho_{obs}(y_{m+1}|x_{n_{m+1}}) = \No_{d_0} (\cdot|\omat x_{n_{m+1}}, \Sigma_{noise})),
\end{aligned}
\end{align}
where recall $a=\dffun\dffun^\top$. 
Thus the  choice of the  proposal $\seqpropden_{m,i}(x_{i}|x_{i-1}, y_{m+1})$ for $n_m<i\leq n_{m+1}$ is simply determined by our choice of the density, $\tilde \seqpropden^{02}_{m,i}(x_{n_{m+1}}|x_{i}, x_{i-1})$. For our SDE model, computational efficiency motivates the use of a Gaussian distribution as the proposal density $\seqpropden_{m,i}$. The following result, which follows directly from the properties of conditional distributions of the multivariate normal (see \Cref{lem:normal}), plays a key role in its specific form.

\begin{lemma}
\label{lem:prop-gaussian}
Assume that $\tilde\seqpropden^1_{m,i}$, $\tilde\seqpropden^{01}_{m,i}$ are given by \eqref{eq:smc-prop-1}, and
\begin{align}
\label{eq:q02-choice}
\tilde \seqpropden^{02}_{m,i}(x_{n_{m+1}}|x_{i}, x_{i-1}) = \No_d\lf(x_{n_{m+1}}\big| x_{i}+\propmean_{m,i}^{02}(x_{i-1}), \propvar_{m,i}^{02}(x_{i-1})\ri),
\end{align}
where $\propmean_{m,i}^{02}(x_{i-1})$ and $\propvar_{m,i}^{02}(x_{i-1})$  depend on $x_{i-1}$, but do not depend on $x_i$.
Then the  proposal density $q_{m,i}$ is given by
\begin{align}
\label{eq:seq-prop-choice}
\seqpropden_{m,i}(x_{i}|x_{i-1}, y_{m+1}) = \No_{d}\big(x_i | x_{i-1} +\propmean_{m,i}(x_{i-1}, y_{m+1})\Delta, \propvar_{m,i}(x_{i-1})\Delta\big),
\end{align}
where
\begin{align} \label{eq:q-prop-para}
\begin{aligned}
\propvar_{m,i}(x_{i-1}) =&\  a(x_{i-1})\lf[I -\omat^\top\lf(\Sigma_{noise}+\omat \propvar_{m,i}^{02}\omat^\top+\Delta\  \omat a(x_{i-1})\omat^\top\ri)^{-1}\omat  \Delta a(x_{i-1}) \ri]\\
\propmean_{m,i}(x_{i-1}, y_{m+1})=&\ \drft(x_{i-1}) +  a(x_{i-1})\omat^\top\lf(\Sigma_{noise}+\omat \propvar_{m,i}^{02}\omat^\top+\Delta\  \omat a(x_{i-1})\omat^\top\ri)^{-1}\\
&\ \times \lf(y_{m+1}-\omat\lf(x_{i-1}+\propmean_{m,i}^{02}+\drft(x_{i-1})\Delta\ri)\ri).
\end{aligned}
\end{align} 
\end{lemma}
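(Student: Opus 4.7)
The plan is to treat this as two successive Gaussian operations: first marginalizing out $x_{n_{m+1}}$ to obtain $\tilde\seqpropden^0_{m,i}(y_{m+1}|x_i,x_{i-1})$, and then applying a Kalman-style conditioning to normalize $\tilde\seqpropden^0_{m,i}\cdot\tilde\seqpropden^1_{m,i}$ as a density in $x_i$. Both are standard linear-Gaussian manipulations and can be justified using the conditional/marginal Gaussian identities in \Cref{lem:normal}.

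For the first step, notice that under the hypothesis, the joint law (in $x_{n_{m+1}}, y_{m+1}$) given $(x_i, x_{i-1})$ specified by $\tilde\seqpropden^{02}_{m,i}$ and $\tilde\seqpropden^{01}_{m,i}$ is jointly Gaussian: $x_{n_{m+1}}\mid x_i,x_{i-1}\sim \No_d(x_i+\propmean^{02}_{m,i}(x_{i-1}),\propvar^{02}_{m,i}(x_{i-1}))$, and $y_{m+1}\mid x_{n_{m+1}}\sim \No_{d_0}(\omat x_{n_{m+1}},\Sigma_{noise})$. Marginalizing out $x_{n_{m+1}}$ (a convolution of Gaussians) yields
\begin{align*}
\tilde\seqpropden^{0}_{m,i}(y_{m+1}\mid x_i,x_{i-1}) \;=\; \No_{d_0}\!\Big(y_{m+1}\,\Big|\,\omat\big(x_i+\propmean^{02}_{m,i}(x_{i-1})\big),\;\Sigma_{noise}+\omat\propvar^{02}_{m,i}(x_{i-1})\omat^\top\Big).
\end{align*}

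For the second step, view $(x_i, y_{m+1})\mid x_{i-1}$ as a linear-Gaussian state/observation pair: the ``prior'' $\tilde\seqpropden^1_{m,i}(x_i\mid x_{i-1}) = \No_d(x_i\mid x_{i-1}+\drft(x_{i-1})\Delta,\, a(x_{i-1})\Delta)$ and the ``likelihood'' $\tilde\seqpropden^0_{m,i}(y_{m+1}\mid x_i,x_{i-1})$ just computed, whose mean is affine in $x_i$ with slope $\omat$ and effective noise covariance $\tilde R(x_{i-1})\dfeq \Sigma_{noise}+\omat\propvar^{02}_{m,i}(x_{i-1})\omat^\top$. Their normalized product in $x_i$ is the posterior of $x_i$ given $(x_{i-1},y_{m+1})$, which by the standard Kalman update (\Cref{lem:normal} applied to the joint Gaussian of $(x_i,y_{m+1})\mid x_{i-1}$, whose mean of $y_{m+1}$ is $\omat(x_{i-1}+\drft(x_{i-1})\Delta+\propmean^{02}_{m,i})$, whose covariance of $y_{m+1}$ is $\tilde R(x_{i-1})+\Delta\,\omat a(x_{i-1})\omat^\top$, and whose cross-covariance with $x_i$ is $\Delta\, a(x_{i-1})\omat^\top$) is Gaussian with
\begin{align*}
\text{mean} &\;=\; x_{i-1}+\drft(x_{i-1})\Delta + \Delta\,a(x_{i-1})\omat^\top\!\big(\tilde R+\Delta\omat a\omat^\top\big)^{-1}\!\big(y_{m+1}-\omat(x_{i-1}+\propmean^{02}_{m,i}+\drft(x_{i-1})\Delta)\big),\\
\text{cov} &\;=\; \Delta\,a(x_{i-1}) - \Delta a(x_{i-1})\omat^\top\!\big(\tilde R+\Delta\omat a\omat^\top\big)^{-1}\omat\,\Delta a(x_{i-1}).
\end{align*}
Factoring a $\Delta$ out of each expression identifies the mean with $x_{i-1}+\propmean_{m,i}(x_{i-1},y_{m+1})\Delta$ and the covariance with $\propvar_{m,i}(x_{i-1})\Delta$ exactly as in \eqref{eq:q-prop-para}, proving \eqref{eq:seq-prop-choice}.

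There is no real obstacle here beyond careful bookkeeping; the only point that requires a moment's attention is that $\propmean^{02}_{m,i}$ and $\propvar^{02}_{m,i}$ are assumed not to depend on $x_i$, which is precisely what guarantees that $\tilde\seqpropden^0_{m,i}(y_{m+1}\mid x_i,x_{i-1})$ remains log-quadratic in $x_i$ after the marginalization, so the Kalman-update formulas apply verbatim and the product $\tilde\seqpropden^0_{m,i}\cdot\tilde\seqpropden^1_{m,i}$ is indeed (unnormalized) Gaussian in $x_i$.
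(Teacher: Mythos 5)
Your proof is correct and matches the route the paper intends: the paper gives no detailed argument, stating only that the lemma "follows directly from the properties of conditional distributions of the multivariate normal (see \Cref{lem:normal})," and your two-step computation — Gaussian marginalization of $x_{n_{m+1}}$ followed by the conditioning identity of \Cref{lem:normal} with $P=\Delta\,a(x_{i-1})$, $Q=\Sigma_{noise}+\omat\propvar^{02}_{m,i}\omat^\top$, $\mfk{f}=x_{i-1}+\drft(x_{i-1})\Delta$, $\mfk{g}=\omat\propmean^{02}_{m,i}$ — is exactly the intended instantiation and reproduces \eqref{eq:q-prop-para} after factoring out $\Delta$. Your closing remark correctly identifies why the hypothesis that $\propmean^{02}_{m,i}$ and $\propvar^{02}_{m,i}$ are independent of $x_i$ is needed.
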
 
\np
A  zeroth-order approximation of $\drft$ around $X(s_i)$ leads to the crude Euler-approximation of the form 
\begin{align}\label{eq:SDE-approx-0}
\begin{aligned}
X(s_{n_{m+1}}) =&\ X(s_i)+\int_{s_i}^{s_{n_{m+1}}}\drft(X(r))\ dr+ \int_{s_i}^{s_{n_{m+1}}}\dffun(X(r))\ dW(r)\\
\approx&\  X(s_i)+\drft(X(s_i))(s_{n_{m+1}}-s_i)+ \dffun(X(s_i))\ (W(s_{n_{m+1}})-W(s_i))
\end{aligned}
\end{align}
and a further approximation in the form of $\drft(X(s_i)) \approx \drft(X(s_{i-1}))$ and $\dffun(X(s_i)) \approx \dffun(X(s_{i-1}))$ to satisfy the requirement of \Cref{lem:prop-gaussian} leads to  $\tilde \seqpropden^{02}_{m,i}(x_{n_{m+1}}|x_{i}, x_{i-1})$ given by \eqref{eq:q02-choice} with 
\begin{align*}
\propmean_{m,i}^{02} = \drft(x_{i-1})(s_{n_{m+1}}-s_i), \quad \propvar_{m,i}^{02} = a(x_{i-1})(s_{n_{m+1}}-s_i)= a(x_{i-1})(n_{m+1}-i)\Delta.
\end{align*}
In the case of noise-free observations $\{y_m\equiv x_{n_m}\}$, this results in the {\em modified diffusion bridge} proposal of Durham and Gallant where for $n_m<i\leq n_{m+1}$ the proposal density density $q_{m,i}$ is given by \eqref{eq:seq-prop-choice} with
\begin{align}
\label{eq:DG-prop}
\propmean_{m,i}(x_{i-1}, y_{m+1} \equiv x_{n_{m+1}}) = \f{x_{n_{m+1}}-x_{i-1}}{s_{n_{m+1}}-s_{i-1}}, \quad  \propvar_{m,i}(x_{i-1}) = \f{s_{n_{m+1}}-s_i}{s_{n_{m+1}}-s_{i-1}}a(x_{i-1}).
\end{align}

\vs{.1cm}
\np
{\em Linear SDE approximation:} 
 As mentioned, proposals based on zeroth-order approximation of $\drft$  are not effective in our nonlinear setup. 
 To address this,  for $r \in (s_i, s_{n_{m+1}}]$, rather than approximating $b(X(r))$ by $b(X(s_i))$  as in \eqref{eq:SDE-approx-0}, we employ a first-order Taylor expansion of $b(X(r))$ around $X(s_i)$ to better capture the local behavior of the drift function. This gives 
\begin{align*}
	\drft(X(r)) \approx \drft(X(s_i)) + \deriv \drft(X(s_i)) (X(r)-(X(s_i)),
\end{align*}
where $Db(x) = ((\partial_j b_i(x): i,j=1,2,\hdots, d))$.
Making further approximations in the form of $\drft(X(s_i)) \approx \drft(X(s_{i-1}))$,  $\deriv \drft((X(s_i)) \approx \deriv \drft((X(s_{i-1}))$ and  $\dffun(X(s_i)) \approx \dffun(X(s_{i-1}))$  to satisfy the requirement of \Cref{lem:prop-gaussian} leads to an approximation of $X$ on the time interval $(s_i, s_{n_{m+1}}]$ with $X(s_{i-1})=x_{i-1}, \ X(s_i)=x_i$ by the process $x_i+\tilde X$, where $\tilde X$ satisfies the following linear SDE:
\begin{align*}
	\tilde X(t) =  \int_{s_i}^t \big(\drft(x_{i-1}) + \deriv \drft(x_{i-1})\tilde  X(r) \big)\ dr +  \s(x_{i-1}) (W(t)-W(s_i)), \quad \tilde X(s_i)=0.
\end{align*}
Since $\tilde X$ satisfies a linear SDE, its marginal distributions are  Gaussian given by, 
$$\tilde X(t) \sim \No_d\lf(\tilde \propmean(t-s_i, x_{i-1}), \tilde \propvar(t-s_i, x_{i-1})\ri),$$
where $\tilde \propmean(\cdot)\equiv \tilde \propmean(\cdot,x_{i-1}) $ and $ \tilde \propvar(\cdot)\equiv  \tilde \propvar(\cdot,x_{i-1})$ satisfy the ODE system:
\begin{align}\label{eq:prop-mean-var-ode}
	\begin{aligned}
	\f{d \tilde \propmean(t)}{dt} =&\ \drft(x_{i-1}) + \deriv \drft(x_{i-1})\tilde \propmean (t), \quad  \tilde \propmean(0) = 0\\
	\f{d  \tilde \propvar(t)}{dt}=&\ \deriv \drft(x_{i-1}) \tilde \propvar(t)+ \tilde \propvar(t)\deriv^\top\drft(x_{i-1})+a(x_{i-1}), \quad\tilde \propvar(0)=0
   \end{aligned}
\end{align}
The last equation is a differential Sylvester equation and can be solved by a variety of numerical methods. If $\deriv \drft(x)$ is invertible for each $x$, the solution (see \eqref{eq:lsde-mv}) can be expressed as
\begin{align}\label{eq:prop-mean-var-exp}
	\begin{aligned}
  \tilde \propmean(t, x_{i-1}) =&\ \lf(e^{\deriv \drft(x_{i-1})t}-I\ri))\lf(\deriv \drft(x_{i-1})\ri)^{-1}\drft(x_{i-1})\\
  \tilde \propvar(t, x_{i-1}) = &\ \int_0^t e^{\deriv \drft(x_{i-1})(t-r)}a(x_{i-1})e^{\deriv ^\top\drft (x_{i-1})(t-r)}\ dr.
  \end{aligned}
\end{align}
Using the vectorization operator, the integral for $\tilde \propvar$ can be computed in closed form to give 
\begin{align*}
 \ve\lf(\tilde \propvar(t, x_{i-1})\ri) = \lf(e^{\deriv \drft(x_{i-1}) \oplus \deriv \drft(x_{i-1})t}-I\ri) \lf(\deriv \drft(x_{i-1}) \oplus \deriv \drft(x_{i-1})\ri)^{-1}\ve(a(x_{i-1})),
\end{align*}
where recall $\oplus$ denotes the Kronecker sum.
Thus a first-order linear SDE-approximation of $X$ leads to $\tilde \seqpropden^{02}_{m,i}(x_{n_{m+1}}|x_{i}, x_{i-1})$ given by \eqref{eq:q02-choice} with 
\begin{align}\label{eq:prop-choice-lin-sde}
	\propmean_{m,i}^{02}(x_{i-1}) = \tilde \propmean(s_{n_{m+1}}-s_i, x_{i-1}), \quad \propvar_{m,i}^{02}(x_{i-1}) = \tilde \propvar(s_{n_{m+1}}-s_i, x_{i-1}).
\end{align}

\begin{remark} {\rm
Consider the case where  $\drft$ is given by a finite-sum kernel expansion of the form 
$ \drft(x) = \sum_{j=1}^J \knl(x,u_l) \beta_l, \ \beta_l \in \R^d.$
When the matrix-valued kernel, $\knl$, is of the form $ \knl(x,u) = \kappa_{0}(x,u)I_{d}$, where $\kappa_0$ is a real valued kernel (that is, $\drft(x) = \sum_{l=1}^L \kappa_0(x,u_l) \beta_l$),  the derivative, $\deriv\drft$, admits the  expression $ \deriv\drft(x) = \sum \limits_{l=1}^L \beta_L^\top \ot \nabla_x \kappa(x, u_l)$.
Notice that for a Gaussian kernel, $\kappa_0$, given by $\kappa_0(x,u) = \exp\lf(-\f{\|x-u\|^2}{c}\ri),\ x, u \in \R^d$, $\nabla_x \kappa_0(x,u) = -\f{2}{c} \kappa_0(x,u) (x-u).$
}
\end{remark}

\subsection{Inference Algorithms} \label{sec:algo}

We summarize our learning methodology in the following EM and SMC algorithms. In this paper, we perform resampling when the {\em effective sample size} (ESS) at an observation times $t_m$, defined by 
$$ESS_{m} =\f{1}{\sum_{l=1}^L (w^{(l)}_{t_{m}})^2} = \f{\lf(\sum_{l=1}^L \tilde w^{(l)}_{t_{m}}\ri)^2}{ \sum_{l=1}^L (\tilde w^{(l)}_{t_{m}})^2}, $$
 falls below a threshold $L_T$ (often chosen to be $L/2$), that is, if  $	ESS_m \leq L_T$; see \cite{Liu08, DoJo09}. Here $w^{(l)}_{t_m}$ and $\tilde w^{(l)}_{t_m}$ are normalized and unnormalized weights of the $l$-th particle-path at time $t_m$. Other resampling criteria can of course be incorporated.

\vspace{0.2in}
\begin{algorithm}[H]
\DontPrintSemicolon
\KwIn{The data $\BY_{t_1:t_{M_0}} =(Y(t_1), Y(t_2), \hdots, Y(t_{M_0}))$, drift function $\drft$ and diffusion coefficient $\dffun$, 
a discretization time step \(\Delta\), number of particles $L$. }
\KwOut{ 
$L$ sample paths $\{\BX^l_{[0,T]}: l =1,2,\hdots,L\}$ and their corresponding weights $\{w^l_{T}: l =1,2,\hdots,L\}.$
}
Initialize $X^l(0)$, where $l=1,2,\hdots,L$.\;
\While{$0\leq m<M_0$}{
\For{  \(l=1,2\hdots, L\)} {
      \While{$n_m< i \leq n_{m+1}$}{
     generate $X^l(s_{i})$  from the proposal transition density $\seqpropden_{m,i}(\cdot|X^l(s_{i-1}), Y(t_{m+1}))$ given by \eqref{eq:seq-prop-choice} with 
  $\mu_{m,i}^{02}(X^l(s_{i-1}))$ and $S^{02}_{m,i}(X^l(s_{i-1}))$ given in \eqref{eq:prop-choice-lin-sde}). \;
        Set $\BX^l_{[0,s_i]} = (\BX^l_{[0,s_{i-1}]}, X^l(s_i))$. \;
          Set $m=m+1$.\;
     }

     Compute the weight $w^l_{t_{m+1}} \propto \tilde w^l_{t_{m+1}}$ of each particle-path up to $t_{m+1}$  by \eqref{eq:wts-1}. \;
     }
     {\em Resample step:} If the weights of most particles are too low according to some degeneracy criterion (for example, if $ESS_{m+1} \leq L/2$),  resample $L$ particle-paths $X^l_{[0,t_{m+1}]} $ with replacement using the weights $\{w^l_{t_{m+1}}: l=1,2,\hdots,L\}$. Set $w^l_{t_{m+1}} =1/L$ after resampling.\;
   Set $ i= i+1$.\;
     
     }
     Get $L$ sample paths $\{X^l_{[0,t_{M_0}]}\equiv X^l_{[0,T]}: l =1,2,\hdots,L\}$ and their corresponding weights $\{w^l_{t_{M_0}}\equiv w^l_T: l =1,2,\hdots,L\}.$

\caption{ SMC Algorithm with proposal induced by linear SDE approximation }
\label{algo:SMC}
\end{algorithm}

\vspace{0.2in}
\begin{algorithm}[H]
\DontPrintSemicolon
\KwIn{The data $\BY_{t_1:t_m} =(Y(t_1), Y(t_2), \hdots, Y(t_{M_0}))$,
a discretization time step \(\Delta\), number of particle-paths $L$, number of EM-iterations $K$. }
\KwOut{ Drift function $b$. }
Initialize $X^l(0)$, where $l=1,2,\hdots,L$.\;
Initialize a drift function $b_0:\R^d \rt \R^d$.\;
\While{$1\leq k \leq K$}{
     Use SMC Algorithm (Algorithm \ref{algo:SMC}) with $\drft_{k-1}$ to get  $L$ sample paths $\{\tilde\BX^{(l,k-1)}_{[0:T]}: l =1,2,\hdots,L\}$ and their corresponding weights $\{w^{(l,k-1)}_{T}: l =1,2,\hdots,L\}$. \;
     Compute $\bm{\wtb}^{(k,*)}$ by \eqref{eq:opt-wt}.\;
     Compute $\drift_k$ by \eqref{eq:b-knl-exp}. \;
     Set $k=k+1$.
     }
  \caption{EM Algorithm for estimating $b$. }
\label{algo:EM-SMC}
\end{algorithm}

\np
{\bf A Bayesian approach:}
We also develop a hybrid Bayesian variant of the EM algorithm that allows the incorporation of different prior distributions on the coefficients $\bm{\wtb} =(\wtb^{(l,*)}_n:l=1,\hdots,L,n=0,1,\hdots,N_0)$ in the finite-kernel sum representation. Placing various priors on $\bm{\wtb}$ effectively broadens the class of regularization schemes applicable to the estimation problem. This connection between penalized optimization and Bayesian inference---well established in the literature \cite{Mack92}---relies on the observation that the negative log-posterior of $\bm{\wtb}$ under a given prior acts as a cost function. For instance, a Gaussian prior corresponds to an $L^2$-penalty in \eqref{eq:EM-1}, making the MAP estimate equivalent to a ridge-type regularized solution.

In Bayesian analysis, priors can serve multiple roles---incorporating external knowledge, regularizing ill-posed problems, and reducing complexity through sparsity or shrinkage, the latter being our primary focus here. A  survey of shrinkage and sparsity inducing priors can be found in \cite{VOM19}. In our setting, the drift function $\drft_k(\cdot)$ at $k$-th EM iteration is estimated via a kernel expansion centered at trajectory points $\BX^{l,k}_{[0,T]} \equiv \{X^{(l,k)}(s_n):n=1,2,\hdots, N_0, l=0,1,\hdots,L\}$ drawn (approximately) from the filtering distribution $\filtden_{\drft_{k-1}}(\cdot|\by_{1:M_0})$ using the previous iterate $\drft_{k-1}$. The coefficients  $\wtb^{(l,k)}_n$ determine the influence of each kernel center based on the  simulated high-frequency trajectory-points. While sparsity-inducing priors like spike-and-slab aim to discard irrelevant basis functions by driving many  $\wtb^{(l,k)}_n$ to exact zero (for a given iteration $k$), such aggressive selection may be suboptimal for drift estimation in SDEs, where smoothness and local adaptivity are crucial. In particular, hard sparsity may suppress informative local structure in regions of the state space that are visited infrequently, leading to underfitting or unstable estimates.

Shrinkage priors, such as the Student-t or Horseshoe, offer a more flexible alternative. They allow all simulated trajectory points to contribute to the estimate while adaptively shrinking negligible coefficients toward zero. This yields smoother, more robust estimates of $\drft(\cdot)$ in SDE learning. Importantly, shrinkage controls model complexity without the instability introduced by hard thresholding, allowing the retention of weak signals that may still play a critical role in the system’s dynamics.

From a computational and modeling standpoint, shrinkage priors also offer important advantages over spike-and-slab-type sparsity. In our EM framework, each iteration involves drawing $L$ full trajectories from the filtering distribution corresponding to the current drift estimate, and fitting the updated drift $\drft_k$ via kernel expansions over these points. A sparsity prior like spike-and-slab would potentially select a different subset of active coefficients $\wtb^{(l,k)}_n$ across different iterations and simulated trajectories, leading to inconsistent function representations across iterations. This lack of continuity in the estimated drift is not only statistically unstable but also physically unrealistic for dynamical systems governed by SDEs, where the drift typically vary smoothly with the state.

Shrinkage priors, by contrast, retain contributions from all basis elements while adaptively reducing the influence of less informative ones. This avoids erratic shifts in the estimated drift across iterations and better respects the underlying structure of the continuous-time dynamics. Moreover, shrinkage models bypass the combinatorial burden of selecting inclusion indicators, enabling faster and more stable EM updates. 

In this paper we develop a Bayesian-EM algorithm based on $t$-prior on $\bm{\wtb}$, which has decent shrinkage properties. More advanced global-local shrinkage priors like Horseshoe and its variants can easily be implemented by following the recipe of \cite{GaMiZh23}. We use the following setup: 
\begin{align*}
\wtb^{(k,l,*)}_n|\l^l_n \sim \No(\cdot| 0,\lambda^l_n I_d), \quad \lambda^l_n \sim \SC{IG}(\hpa,\hpb),
\end{align*}
where $\SC{IG}(\hpa,\hpb)$ denotes the inverse-gamma distribution with shape and scale parameters $\hpa, \hpb > 0$.
For each iteration $k$, the parameter $\lambda^l_n$ controls the degree of shrinkage applied to the coefficient $\wtb^{(l,k)}_n$, thereby determining the influence of the trajectory point $X^{(l,k)}(s_n)$ sampled in the SMC algorithm. With the above prior, the conditional distribution of each parameter given the rest admits closed-form expressions, enabling a straightforward Gibbs sampler. However, the high dimensionality of the parameter space makes full posterior exploration at every EM iteration unnecessarily burdensome and inefficient---especially when the primary goal is to obtain a shrunk estimate rather than full posterior uncertainty quantification. To address this, at each EM iteration we compute the posterior mean of $\bm{\wtb}$ and use it as the updated estimate.
The resulting Bayeasin-EM algorithm is  summarized below. 

\vspace{0.2in}
\begin{algorithm}[H]
	\DontPrintSemicolon
	\KwIn{The data $\BY_{t_1:t_m} =(Y(t_1), Y(t_2), \hdots, Y(t_m))$,  
		a discretization time step \(\Delta\), number of particle-paths $L$. }
	\KwOut{ Drift function $b$. }
	Initialize $X^l(0)$, and draw $\l^{l,0}_{n}$ from $\SC{IG}(\hpa,\hpb)$ for $l=1,2\hdots,L$ and $n=0,1,\hdots,N_0$. \;
	Initialize a drift function $b_0:\R^d \rt \R^d$.\;
	\While{$1\leq k \leq K$}{
		Use SMC Algorithm to get $\nu^{SMC}_{\drift_{k-1}}(\cdot| \BY_{t_1:t_m}) = \sum_{l=1}^{L} w^{(l,k-1)}_{T} \delta_{\tilde\BX^{(l,k-1)}_{[0:T]}}$. \;
		Compute $	\bm{\wtb}^{(k,*)} = \bcm^{(k-1)}\bm{\SC{K}_0}^{(k-1)} \sdg^{(k-1)} \bm{\vartheta}^{(k-1)}, $ where $	 \big(\bcm^{(k-1)}\big)^{-1} = \Delta  \big(\bm{\SC{K}_0}^{(k-1)}\big)^\top \sdg^{(k-1)} \bm{\SC{K}_0}^{(k-1)}+\text{diag}(1/\l^{(l,k-1)})\ot I_d$. \;
		Compute $\drift_k$ by \eqref{eq:b-knl-exp}.\; 
		Generate $\l^{(l,k)}_n \sim \SC{IG}(\hpa + \frac{d}{2}, \hpb + \frac{1}{2} (\wtb^{(l,k,*)}_n)^\top \knl(X^{(l,k-1)}(s_n), X^{(l,k-1)}(s_{n})) \wtb^{(l,k,*)}_n)$.\;
		$k=k+1$.
	}
	\caption{A Bayesian-EM Algorithm for estimating $b$. }
	\label{algo:Gibbs-EM-SMC}
\end{algorithm}

\subsection{Simulation studies} \label{sec:simu}
We demonstrate the effectiveness of our algorithm on three one-dimensional and two multi-dimensional  SDE models. For each model, synthetic data is generated as follows: a discrete trajectory is simulated over the interval $[0, 40]$ with step size $\Delta = 0.025$; additive noise is then applied to this path, and a subsample---e.g., $1/3$, $1/5$, or $1/10$ of the total points---is selected to serve as the observed noisy and sparse data. Using these observations, we apply our algorithm to estimate the coefficient vector $\bm{\wtb}$, yielding an estimated drift function $\hat \drft$, which we visually compare to the true drift $\drft$.

In addition to visual inspection, the small mean squared error (MSE) between $\hat \drft$ and $\drft$ quantitatively supports the accuracy of our method. Further validation comes from comparing the stationary distributions of the true and estimated SDEs through {\em Kolmogorov metric}, $\sup_{x}|F_{st}(x) - \hat F_{st}(x)|$, where $F_{st}$ and $\hat F_{st}$ are the CDFs of the true and estimated SDEs (driven by $\hat \drft$), respectively. The close agreement between these distributions demonstrates the predictive capability of the learned model even beyond the range of observed data, and indicates that the alignment of $\hat \drft$ with $\drft$ stems from genuine learning via accurate algorithm rather than overfitting.

We used Gaussian kernels for our simulation studies. Specifically, for the 1-D models, we used the kernel $\kappa_0(x,y) = 10\exp(-(x-y)^2/2)$ and for the multidimensional Michaelis-Menten kinetics in Model 4, we used $\knl_0 = \kappa_0 I_{3}$, for SIR in Model 5, we used  $\knl_0 = \kappa_0 I_{2}$. And for the SMC algorithm, we choose the number of particles to be 6 and use 3 of them with highest weight to simplify the calculation. 

\subsubsection{One-dimensional SDE models}
We consider three one-dimensional SDE models
\begin{enumerate}
\item {\bf Model 1 - Double-well potential SDE:} This is an overdamped Langevin SDE describing the motion of a particle in a double-well potential given by $u(x) = x^4 - 2x^2$. The particle's dynamics are influenced by two components: a  drift term $b(x) = -u'(x) = 4(x - x^3)$ and random fluctuations modeled by additive Brownian noise leading to the SDE 
$$dX(t) = 4X(t)(1 - X^2(t)),dt + \vas dW(t).$$
 The potential $u(x)$ has two wells (i.e., local minima) located at $\pm 1$, and the stochastic noise occasionally drives transitions between these wells. As a result, the dynamics are strongly non-linear. SDEs of this type also arise in mathematical finance, molecular dynamics, and climate science, where systems exhibit metastable behavior and rare transitions between stable states.  The double-well structure induces a bimodal stationary distribution with density: $\pi_{st}(x) \propto \exp\left(\frac{2x^2 - x^4}{2\vas^2}\right)$.

\item {\bf Model 2 - Variant of Double-well potential SDE:} This is a variant of the double-well potential SDE, incorporating multiplicative noise. The SDE for this model is given by:
\begin{align*}
	dX(t) = X(t)(1-X^2(t))dt + \vas \sqrt{1+X(t)^2} dW(t).
\end{align*}
The multiplicative noise introduces additional complexity to the nonlinear dynamics of the original double-well process. The stationary density of this SDE is given by 
\begin{align}
	\pi_{st}(x) \propto \vas^{-2}(1+x^2)^{2\vas^{-2}-1}\exp\lf(-x^2/\vas^2\ri).
\end{align}
The stationary distribution is bimodal when $\vas < 1$ and becomes unimodal if $\vas \geq 1$, with sharper peaks as $\vas$ increases. 

\item {\bf Model 3 - Gamma SDE:} Our last one-dimensional SDE model is given by 
\begin{align*}
	dX(t) = \lf(\frac{9}{x} - 5\ri)dt + \vas dW(t).
\end{align*}
Its stationary density is given by $\pi_{st}(x) \propto \vas^{-2} x^{18}\exp\lf(-10x/\vas^2\ri)$
which corresponds to the density function of a Gamma distribution that has a shape parameter of 19 and a rate parameter of $10/\vas^2$, leading to a unimodal distribution that exhibits a peak at a $1.8\vas^2$. 

\end{enumerate}

\np
For each of the these SDEs, our observation model is given by 
\begin{align*}
    \obsY(t) = X(t) +\vep_i, \quad \text{ with } \vep_i \stackrel{iid}\sim \No(0, \sigma^2_{noise}),
\end{align*}
We take $ \sigma_{noise} = 0.01$. We also take the diffusion parameter, $\vas = 1$ for all our experiments.

Figures \ref{fig:sparse-dw}, \ref{fig:sparse-dw-v-1} and \ref{fig:sparse-gamma} demonstrate the accurate performance of our algorithms. In each picture, panels {\bf a}, {\bf b}, {\bf c} compare $\drft$ with $\hat \drft$ with data comprising of a noisy sample of 1/3, 1/5, 1/10 of the total trajectory-points, respectively, respectively; panels {\bf d}, {\bf e}, {\bf f} compare the  stationary densities the SDEs driven by  $\drft$ and $\hat \drft$ under the same sampling fractions; and panels {\bf g}, {\bf h}, {\bf i}  compare the corresponding CDFs for the three SDE models. The MSE and Kolmogorov metrics obtained using Algorithm \ref{algo:Gibbs-EM-SMC} can be found in Table \ref{table:obs}. We also compared the Bayesian-EM algorithm (Algorithm \ref{algo:Gibbs-EM-SMC}) with the standard EM algorithm (Algorithm \ref{algo:EM-SMC}) and the results are reported in Table \ref{table:modified-em-method}.  While the overall performance is comparable, the Bayesian version often has an edge over the standard EM. Importantly, as shown in Figure \ref{fig:modified-em-hist}, the Bayesian EM algorithm with a t-prior achieves significantly better shrinkage. As expected, the performance improves as the sparsity of data decreases, that is, as more samples are observed. 

\begin{figure}[H]
	\centering
	\includegraphics[width=0.78\textwidth]{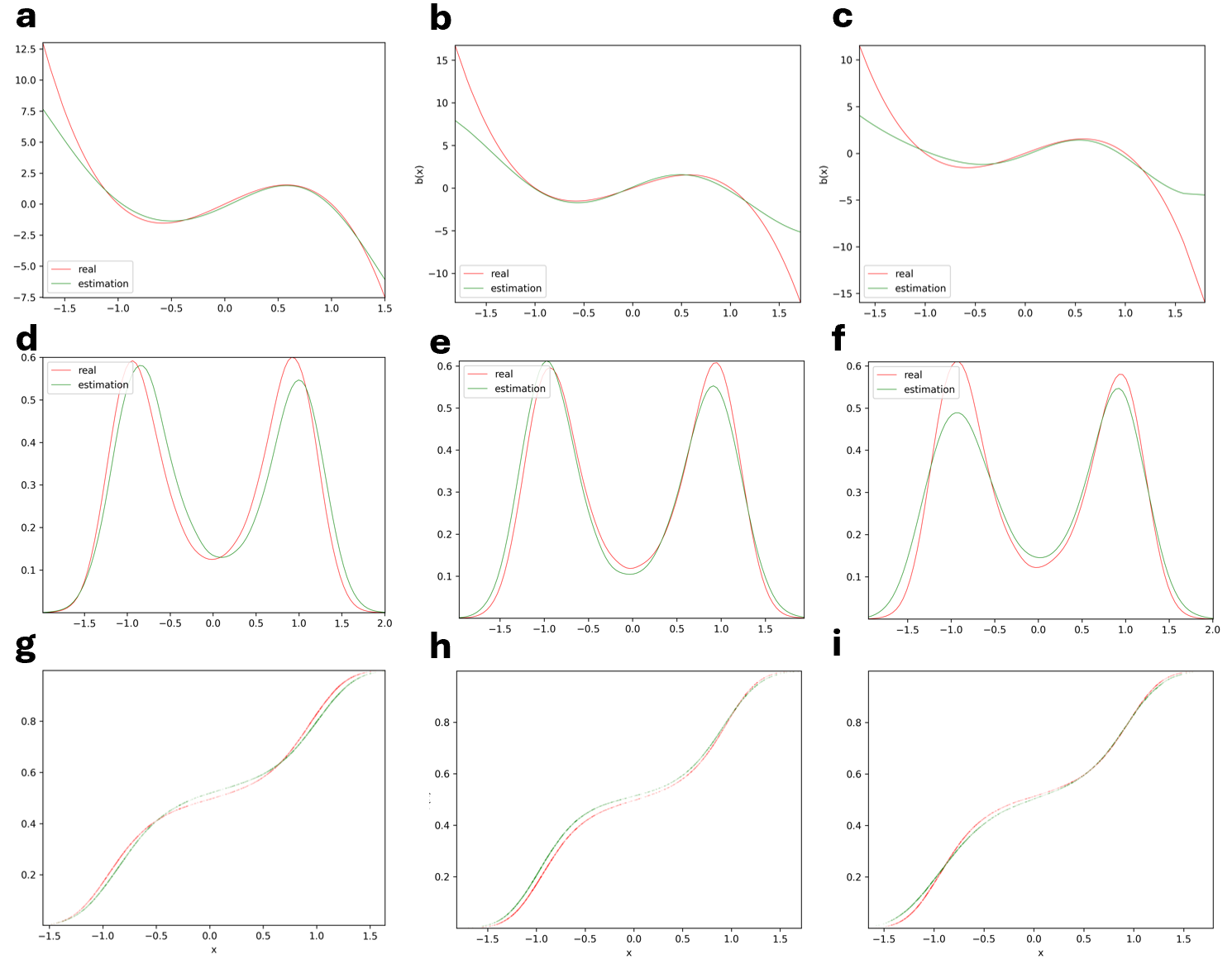}
	\caption{Model 1 - Double-well potential SDE. }
	\label{fig:sparse-dw}
\end{figure}

\begin{figure}[H]
	\centering
	\includegraphics[width=0.78\textwidth]{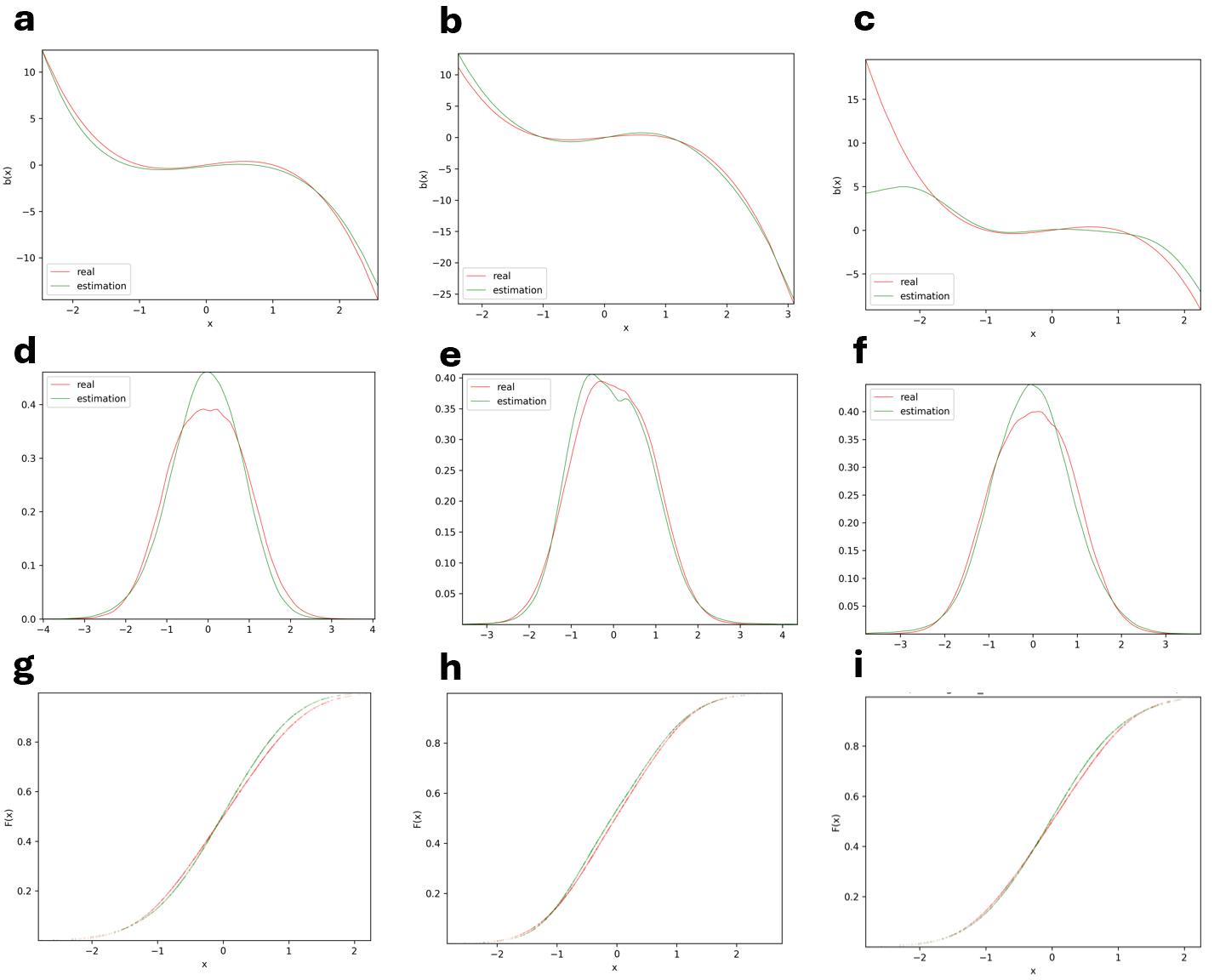}
	\caption{Model 2 - Variant of Double-well potential SDE.
	}
	\label{fig:sparse-dw-v-1}
\end{figure}

\begin{figure}[h!]
	\centering
	\includegraphics[width=0.78\textwidth]{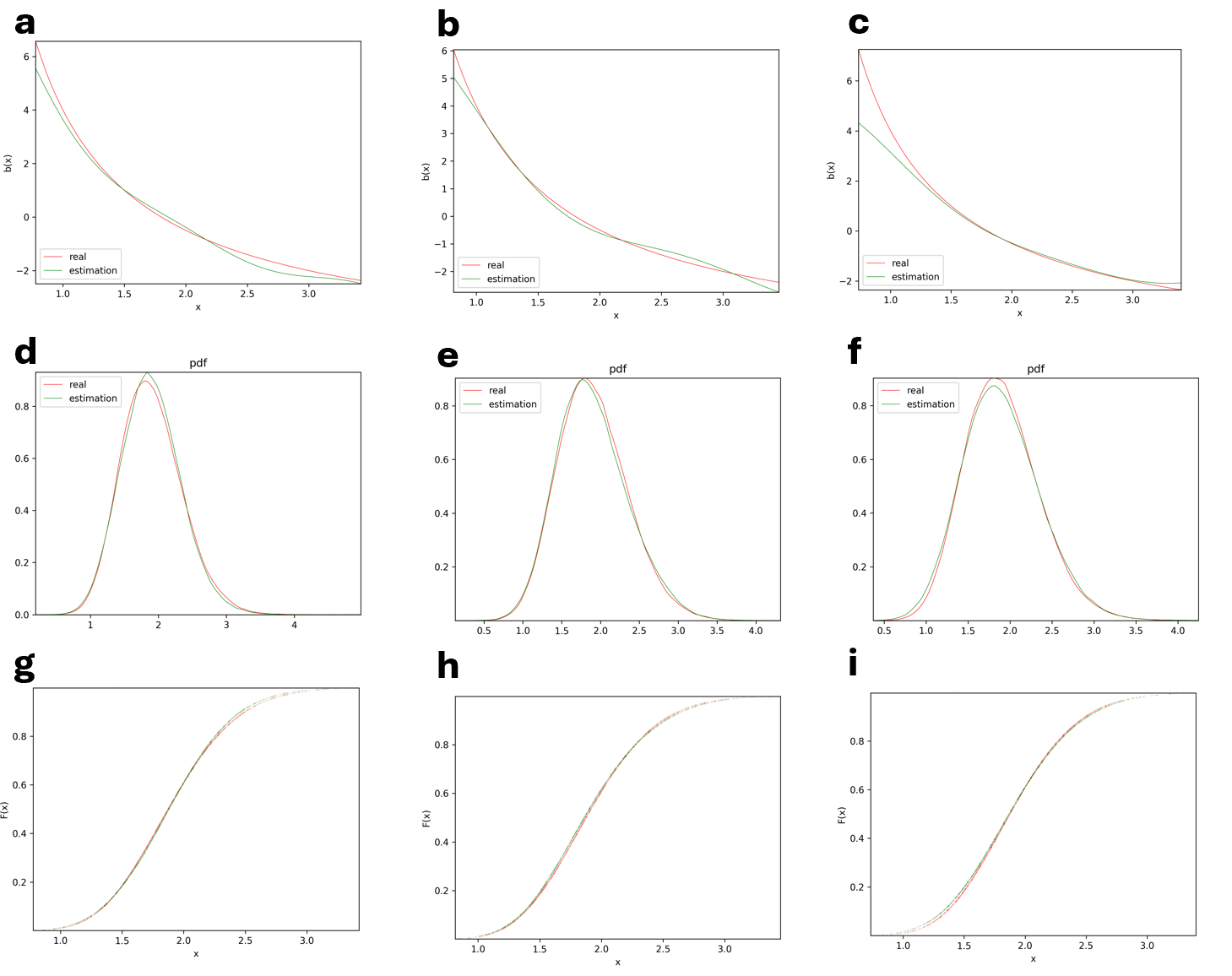}
	\caption{Model 3 - Gamma SDE.}
	\label{fig:sparse-gamma}
\end{figure}

\begin{table}[H]
\centering
\caption{MSE of $\hat \drft$ and Kolmogorov metric with different amounts of observed data}
\setlength\extrarowheight{6pt}
\begin{tabular}{|c|c|c|c|c|c|}
\hline
Model & Metric & 1/3 & 1/5 & 1/10 & 1/20 \\ \hline \hline
\multirow{2}{*}{Model 1} & MSE               & 0.286 & 0.488 & 0.86 & 1.424 \\
                         & Kolmogorov & 0.143 & 0.161 & 0.158 & 0.189 \\ \hline
\multirow{2}{*}{Model 2} & MSE               & 0.455 & 0.498 & 0.55 & 0.662 \\
                         & Kolmogorov  & 0.092 & 0.086 & 0.086 & 0.083 \\ \hline
\multirow{2}{*}{Model 3} & MSE               & 0.094 & 0.136 & 0.139 & 0.252 \\
                         & Kolmogorov  & 0.053 & 0.046 & 0.055 & 0.063 \\ \hline
\end{tabular}
\label{table:obs}
\end{table}

\begin{table}[H]
\centering
\caption{MSE of $\hat \drft $ and Kolmogorov metric using EM and Bayesian-EM algorithms}
\begin{tabular}{|c|c|cc|cc|}
\hline
\multirow{2}{*}{Model} & \multirow{2}{*}{Metric} & \multicolumn{2}{c|}{1/5 Observations} & \multicolumn{2}{c|}{1/10 Observations} \\
\cline{3-6}
                       &                         & EM & Bayesian-EM & EM & Bayesian-EM \\
\hline \hline
\multirow{2}{*}{Model 1} 
  & MSE               & 0.478 & 0.488    & 0.968    & 0.86 \\
  & Kolmogorov        & 0.169          & 0.161 & 0.188    & 0.158 \\
\hline
\multirow{2}{*}{Model 2} 
  & MSE               & 0.646          & 0.498 & 0.743   & 0.55 \\
  & Kolmogorov        & 0.083 & 0.086    & 0.072 & 0.086 \\
\hline
\multirow{2}{*}{Model 3} 
  & MSE               & 0.106 & 0.136   & 0.095 & 0.139 \\
  & Kolmogorov        & 0.05          & 0.046 & 0.046 & 0.055 \\
\hline
\end{tabular}
\label{table:modified-em-method}
\end{table}

\begin{figure}[h!]
	\centering
	\includegraphics[width=0.7\textwidth]{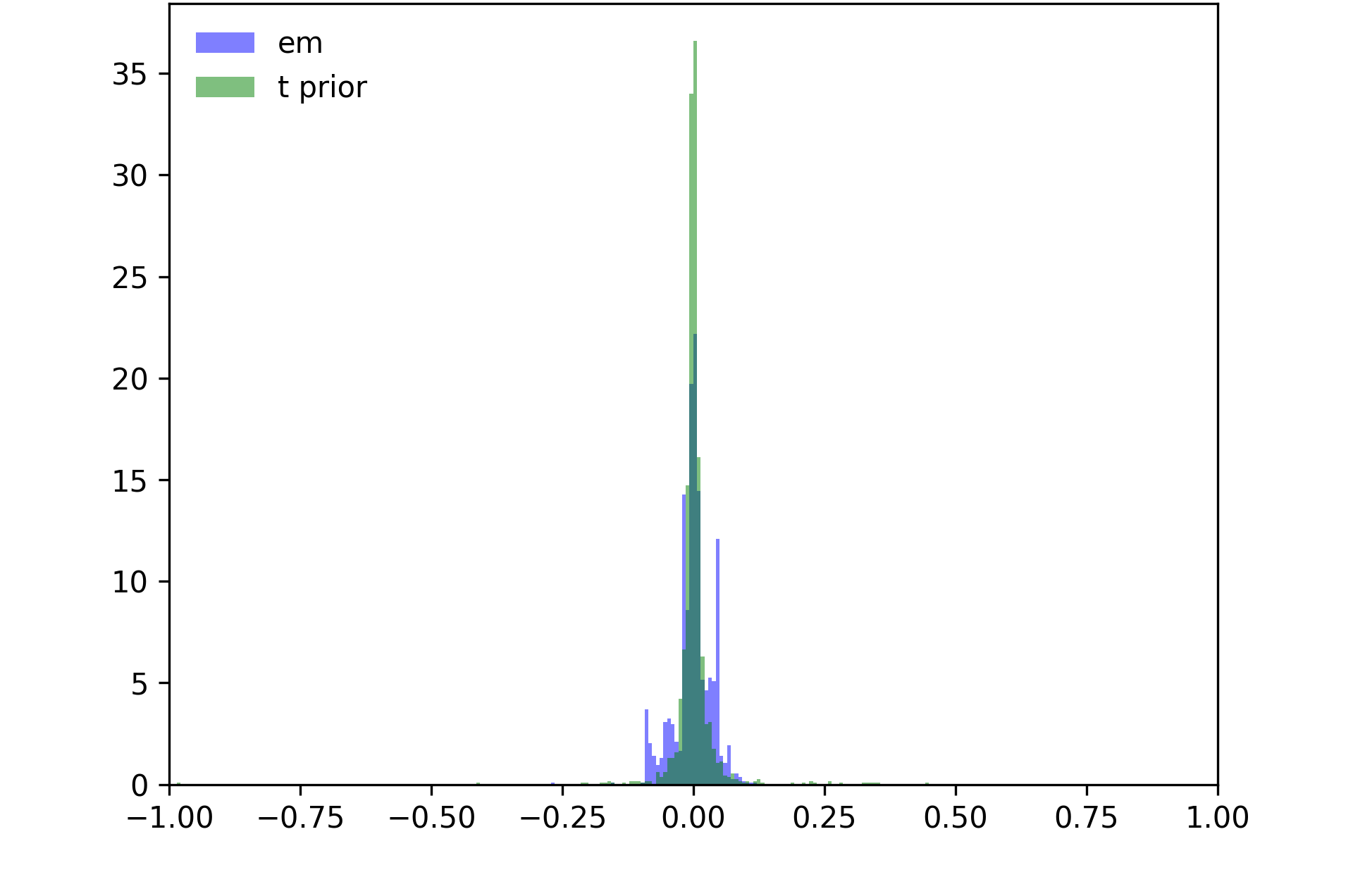}
	\caption{Histogram of $\beta_i$’s with EM and modified EM algorithms.}
	\label{fig:modified-em-hist}
\end{figure}

\subsubsection{Model 4: Michaelis-Menten Kinetics}
The Michaelis--Menten model is a foundational framework in enzymatic kinetics that describes the rate of enzymatic conversion of a substrate into a product via an intermediate enzyme--substrate complex (\cite{MiMe1913, Sri21}). This model captures essential nonlinearities in biochemical reactions and serves as a basis for more complex regulatory mechanisms in systems biology. The reaction system can be represented as
\begin{equation*}
	E + S \xrightleftharpoons[k_{2}]{k_{1}} ES, \quad ES \xrightleftharpoons[k_{m2}]{k_{m1}} E + P,
\end{equation*}
where \( E \) is the free enzyme, \( S \) is the substrate, \( ES \) is the enzyme--substrate complex, and \( P \) is the product. The system’s full state at time \( t \) is described by the vector
$
X(t) = \big(X_E(t),\, X_S(t),\, X_{ES}(t),\, X_P(t)\big)^\top.
$

Due to conservation of the total enzyme, the system satisfies
$
X_E(t) + X_{ES}(t) = X_E(0) + X_{ES}(0) \equiv C,
$
allowing for a reduced three-dimensional state space. We continue to denote the reduced state by
$
X(t) = \big(X_E(t),\, X_S(t),\, X_P(t)\big).
$
Accurate estimation of the parameters in this model is essential for understanding enzyme efficiency and designing effective inhibitors. Under idealized conditions, including the assumption that the system is well-mixed, the dynamics follow the law of mass action kinetics. This results in the drift function \( \drft(\cdot) \), given by (as a column vector)
\begin{align*}
	b(x) = (-k_1 x_E x_S - k_{m2} x_E x_P + (k_{m1} + k_2) x_{ES},\ -k_1 x_E x_S + k_{m1} x_{ES},\ k_2 x_{ES} - k_{m2} x_E x_P)^\top,
\end{align*}
with \( x = (x_E, x_S, x_P) \in \mathbb{R}^3 \), and \( x_{ES} = C - x_E \) due to conservation of total enzyme. We generate a discretized trajectory from the corresponding SDE with diffusion coefficient $\dffun(x)\equiv 0.1 I_3$ and conservation constant $C=2$ over time interval $[0,60]$ using the step size $\Delta=0.025$. Our observation model is given by 
$\obsY(t) = X(t) +\vep_i$ with   $\vep_i \stackrel{iid}\sim \No(0, \Sigma_{noise})$. We keep the measurement noise low, $\Sigma_{noise} = 10^{-10}I$, and use Algorithm \ref{algo:Gibbs-EM-SMC} to estimate the entire drift function $b$ from partially observed data comprising of 1/5 of the total (noisy) trajectory-points.

Figure \ref{fig:spares-mm} illustrates the performance of the estimation. Panels \textbf{a}, \textbf{b}, and \textbf{c} plot slices of the first, second, and third components of the estimated drift function $\hat{b}$ alongside the true drift function $b$, with the $z$-coordinate fixed at $1.073$. Panel \textbf{d} shows a two-dimensional slice of \textbf{a} at $y = 1.060$, providing further insight into the accuracy of the estimation.  The strong performance of our algorithm  is corroborated by the low MSE value of $0.003787$.

\begin{figure}[H]
	\centering
	\includegraphics[width=0.9\textwidth]{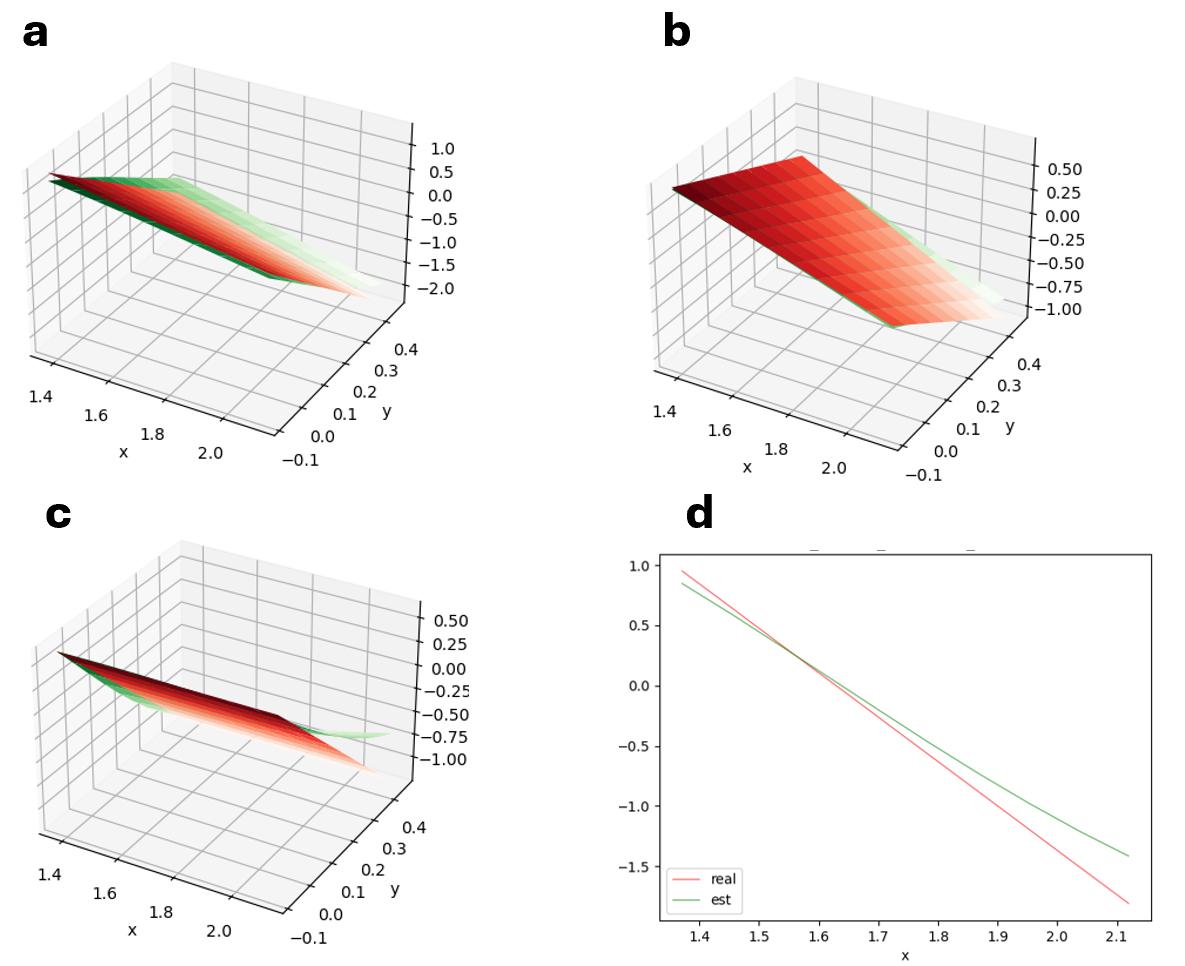}
	\caption{Estimation of Michaelis-Menten Kinetics model with t-prior.}
	\label{fig:spares-mm}
\end{figure}

\subsubsection{Model 5: SIR Model}
The SIR model is a classical and widely-used framework in epidemiology for modeling the spread of infectious diseases in a closed population. It captures the transition of individuals between three compartments: susceptible (S), infected (I), and recovered (R). Accurate estimation of the model, such as the transmission and recovery rates, is crucial for understanding epidemic dynamics, predicting disease spread, and informing public health interventions.

The system can be expressed via the following reaction network:
\begin{equation*}
    S + I \xrightarrow{\beta} 2I, \quad I \xrightarrow{\gamma} R,
\end{equation*}
where \( \beta \) denotes the transmission rate, and \( \gamma \) is the recovery rate. Letting \( X_S(T), X_R(t), X_I(t) \) denote the state of the system, that is proportion of individuals in each component, at time \( t \),  the model satisfies the conservation law \( X_S(t) + X_I(t) + X_R(t) = 1 \). This allows a reduced two-dimensional representation \( X(t) = (X_S(t), X_I(t)) \). The drift function $\drft$ of the system, when law of mass action holds, is given by $\drft(x)=(-\beta x_S x_I,  \beta x_S x_I - \gamma x_I)^\top$ with \( x = (x_S, x_I) \in \mathbb{R}^2 \). 

We generate a discretized trajectory from the corresponding SDE with $\beta=0.5, \gamma=0.6$ and diffusion coefficient $\dffun(x)\equiv 10^{-6} I_2$ over time interval $[0,60]$ using the step size $\Delta=0.025$. Our observation model is given by 
$\obsY(t) = X(t) +\vep_i$ with   $\vep_i \stackrel{iid}\sim \No(0, \Sigma_{noise})$. We keep the measurement noise low, $\Sigma_{noise} = 10^{-100}I$, and use Algorithm \ref{algo:Gibbs-EM-SMC} to estimate the entire drift function $b$ from partially observed data comprising of 1/5 of the total (noisy) trajectory-points.

Figure \ref{fig:sparse-sir} demonstrates the performance of the estimation. Panels \textbf{a} and \textbf{b} show the first and second components of the estimated drift function $\hat{b}$ compared with the true drift function $b$; panels \textbf{c} and \textbf{d} present heatmaps of the first and second components of the estimated and true drift functions, $\hat{b}$ and $b$, highlighting regions where the estimates closely match the true values. The accuracy of the algorithm in recovering the underlying dynamics is evident from the low MSE value of $5.577 \times 10^{-8}$.

\begin{figure}[h!]
	\centering
	\includegraphics[width=0.85\textwidth]{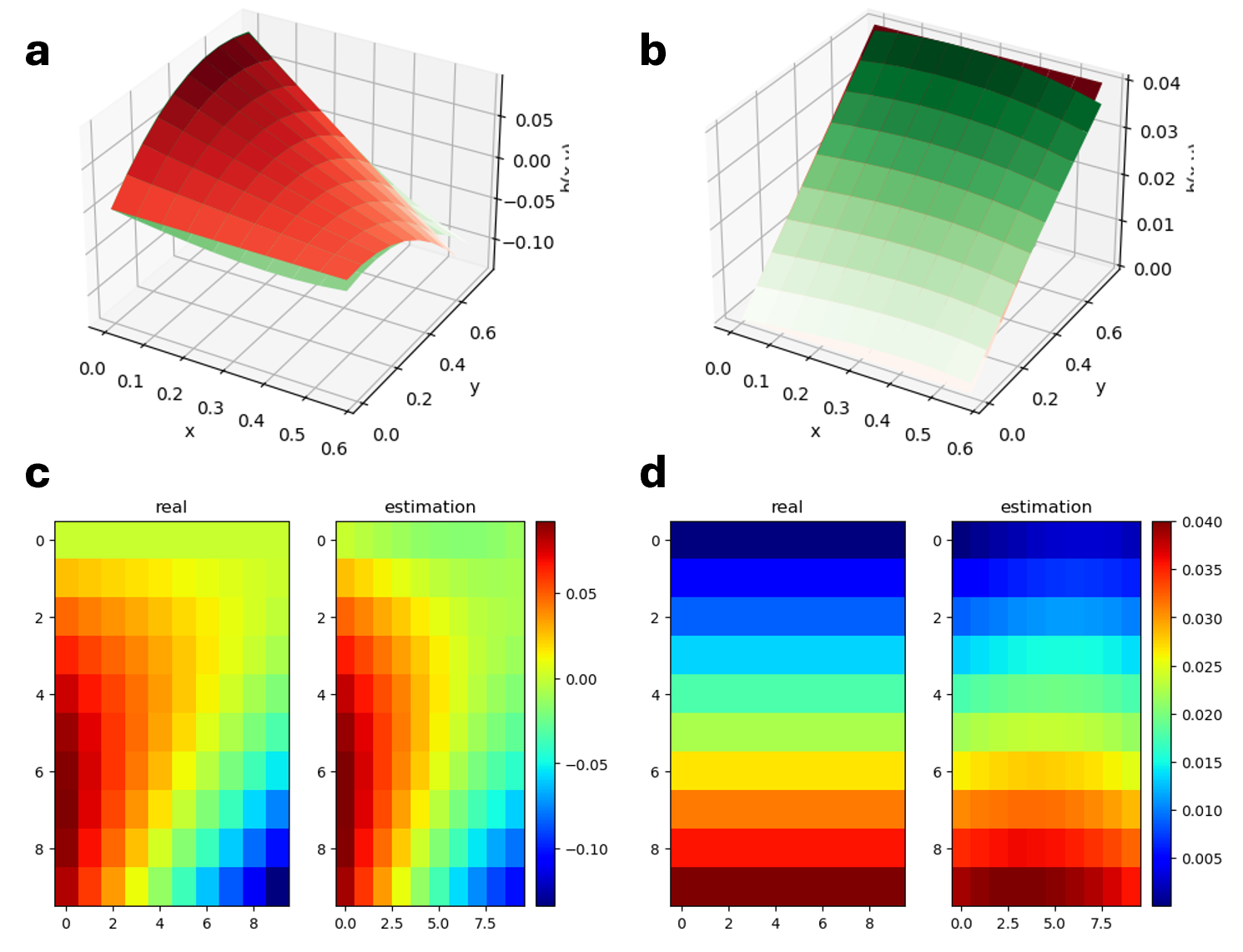}
	\caption{Estimation of SIR model with t-prior. }
	\label{fig:sparse-sir}
\end{figure}

\setcounter{section}{0}
\setcounter{theorem}{0}
\setcounter{equation}{0}
\renewcommand{\theequation}{\thesection.\arabic{equation}}

\appendix
\section{Appendix}
\label{sec:app}
\subsection{RKHS of vector-valued functions}
The RKHS of vector-valued functions is defined similarly to the scalar-valued case, with the main difference being that the associated kernel $\knl$ is matrix-valued (for details, see \cite{MiPo05, AlRoLa12}).

\begin{definition} \label{def:matRK}
	Let $\meU$ be an arbitrary space.
	A symmetric function $\knl: \meU\times \meU \rt \R^{n\times n}$ is called a {\em reproducing kernel} if for any $u, u' \in \meU$, $\knl(u,u')$ is a $n\times n$ p.s.d. matrix.
	
	The RKHS associated with $\knl$ is the Hilbert space $\Hsp_{\knl}$ consisting of functions $h: \meU \rt \R^n$ such that, for every $u \in \meU$ and every (column) vector $z \in \R^n$, the following two properties hold: (i) the mapping $u' \rt \knl(u',u)z$ is an element of $\Hsp_{\knl}$, and (ii) $\<h, \knl(\cdot, u)z\> = h(u)^\top z.$
\end{definition}
Property (ii) is the reproducing property of the kernel $\knl$ in the vector-valued setting. As in the scalar case, the Moore-Aronszajn theorem extends to this framework and shows that $\Hsp_{\knl} = \overline{\text{Span}}\{\knl(\cdot,u): u \in \meU\}$. The closure is taken with respect to the norm $\|\cdot\|_{\knl}$, which in this case is defined as follows: for $h = \sum_{j=1}^l \knl(\cdot, u_j)z_j, \ z_j \in \R^n$, 
$\|h\|^{2}_{\knl} \dfeq \sum_{i,j=1}^l z_i^\top \knl(u_i,u_j)z_j.$

Notice that for any $h \in \Hsp_{\knl}$ and $u,u' \in \R^d$, the following estimates hold:
\begin{align}\label{eq:rkhs-l2norm-est}
\begin{aligned}
\|h(u)\| \leq&\  \|h\|_{\Hsp_{\knl}}\|\knl(u,u)\|^{1/2}_{\mrm{op}}\\
\|h(u)-h(u')\| \leq &\ \|h\|_{\Hsp_{\knl}} \|\knl(u,u)-2\knl(u,u')+\knl(u',u')\|^{1/2}_{\mrm{op}}
\end{aligned}
\end{align}
where recall for a p.s.d matrix $A$, $\|A\|_{\mrm{op}} = \sup_{z \in \R^d:\|z\|=1} z^\top Az=$ max
 eigenvalue of $A$. Indeed, 
\begin{align*}
\|h(u)-h(u')\| =&\ \sup_{z \in \R^d:\|z\|=1} z^\top(h(u)-h(u')) = \sup_{z \in \R^d:\|z\|=1} \<h, (\knl(\cdot,u)-\knl(\cdot,u'))z\>\\
\leq &\ \|h\|_{\Hsp_{\knl}} \sup_{z \in \R^d:\|z\|=1}\|(\knl(\cdot,u)-\knl(\cdot,u'))z\|_{\Hsp_{\knl}} \\
 =&\ \|h\|_{\Hsp_{\knl}} \lf(\sup_{z \in \R^d:\|z\|=1} z^\top\big(\knl(u,u)-2\knl(u,u')+\knl(u',u')\big)z\ri)^{1/2} \\
=&\ \|h\|_{\Hsp_{\knl}} \|\knl(u,u)-2\knl(u,u')+\knl(u',u')\|^{1/2}_{\mrm{op}}
\end{align*}
The first bound is easier.

\subsection{Auxiliary Results}

\begin{lemma}\label{lem:empty-int}
Let $\Hsp$ be an infinite-dimensional Hilbert space and  $\mathbb{A} \subset \Hsp$ a subset.  Suppose $\mathbb{A}$ has the property that  weak and strong convergences are equivalent on $\mathbb{A}$. Then $\mathbb{A}$ has empty interior in the strong topology of $\Hsp$.
\end{lemma}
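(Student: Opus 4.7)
My plan is to argue by contradiction: assume $\mathbb{A}$ has nonempty strong interior and exhibit a sequence in $\mathbb{A}$ that converges weakly but not strongly to a point of $\mathbb{A}$, directly violating the hypothesis.

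Concretely, suppose there exist $x_0 \in \mathbb{A}$ and $r>0$ with the open ball $B(x_0,r) \subset \mathbb{A}$. Since $\Hsp$ is infinite-dimensional, I can fix an infinite orthonormal sequence $\{e_n\} \subset \Hsp$ (take any countable linearly independent set and apply Gram--Schmidt). Define
\begin{equation*}
x_n \dfeq x_0 + \tfrac{r}{2}\, e_n, \qquad n \geq 1.
\end{equation*}
Each $x_n$ satisfies $\|x_n - x_0\| = r/2 < r$, so $x_n \in B(x_0,r) \subset \mathbb{A}$, and likewise $x_0 \in \mathbb{A}$.

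The key step is that $e_n \stackrel{w}{\Rt} 0$ in $\Hsp$: for any $y \in \Hsp$, Bessel's inequality gives $\sum_n |\langle y, e_n\rangle|^2 \leq \|y\|^2 < \infty$, hence $\langle y, e_n\rangle \to 0$. Consequently $x_n \stackrel{w}{\Rt} x_0$. By the assumed equivalence of weak and strong convergence on $\mathbb{A}$, we must then have $x_n \stackrel{s}{\Rt} x_0$, i.e. $\|x_n - x_0\| \to 0$. But $\|x_n - x_0\| = r/2$ for every $n$, a contradiction.

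There is essentially no obstacle here; the only minor care needed is to ensure the witness sequence lies in $\mathbb{A}$ (which is why I take the ball of radius $r/2$ rather than $r$) and to record that $x_0 \in \mathbb{A}$ so the equivalence hypothesis applies to the pair $(\{x_n\}, x_0)$. The argument uses only the existence of an infinite orthonormal set and Bessel's inequality, both of which are immediate in any infinite-dimensional Hilbert space.
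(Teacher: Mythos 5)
Your proof is correct and is essentially identical to the paper's: both argue by contradiction, place a ball $B(x_0,r)\subset\mathbb{A}$, and use the sequence $x_0+\tfrac{r}{2}e_n$ for an orthonormal sequence $\{e_n\}$, which converges weakly but not strongly to $x_0$. Nothing further is needed.
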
	

\begin{proof}
	Suppose that $\mathbb{A}^\circ \neq \emptyset$. Then there exists an $h_0 \in \mathbb{A}$ and $r_0>0$ such that $B(h_0,r_0) \subset \mathbb{A}.$ But every ball in an infinite-dimensional Hilbert space contains a  weakly convergent sequence which is not strongly convergent. For example, if $\{e_n\}$ is an orthonormal sequence of $\Hsp$, then $h_0+r_0e_n/2 \in B(h_0,r_0)$, and $h_0+r_0e_n/2 \stackrel{w}\Rt h_0$, as $n\rt \infty$. But clearly,   $\{h_0+r_0e_n/2\}$ does not converge in norm to $h_0$.	This violates the property of $A$ leading to a contradiction.

\end{proof}

\begin{lemma}\label{lem:mmt-bd-X}	
Let $\{\drft^{(L)}\} \subset \Hsp_{\knl} $ be a family such that $\sup_{L}\|\drft^{(L)}\|_{\Hsp_{\knl}} < \infty$. Suppose that \Cref{assum:sde-knl}-\ref{item-assum-knl} \& \ref{item-assum-dffun} hold. Then there exist a constant $\const_{p,1} \equiv \const_{p,1}(N_0)$ such that for all $0\leq n\leq N_0,$
$\sup_{L}\EE_{\drft^{(L)}}|X(s_{n})|^p  \leq \const_{p,1}.$
\end{lemma}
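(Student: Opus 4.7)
The plan is to prove the bound by induction on $n$, but with a stronger inductive hypothesis, namely that for every exponent $q > 0$, $\sup_{L} \EE_{\drft^{(L)}}\|X(s_n)\|^q \leq \const_{q,n}$ for some finite constant $\const_{q,n}$. The stronger statement is needed because the polynomial-growth bounds for $\drft^{(L)}$ and $\dffun$ force moments of order higher than $p$ into the recursion, so a single-$p$ induction is not self-contained.

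The base case $n=0$ is immediate: $X(s_0)$ has density $f_0$, which is taken to have finite moments of all orders (or is deterministic, if $x_0$ is fixed). For the inductive step, first combine the RKHS pointwise bound \eqref{eq:rkhs-l2norm-est} with \Cref{assum:sde-knl}-\ref{item-assum-knl} and the uniform bound $C_0 \dfeq \sup_L\|\drft^{(L)}\|_{\Hsp_{\knl}}<\infty$ to obtain a uniform growth estimate of the form
\begin{equation*}
\sup_L \|\drft^{(L)}(u)\| \leq C_0\,\|\knl(u,u)\|_{\mrm{op}}^{1/2} \leq \const_1\bigl(1+\|u\|^{\kexp/2}\bigr),\qquad u\in\R^d,
\end{equation*}
and combine \Cref{assum:sde-knl}-\ref{item-assum-dffun} with the bound $\|\dffun(u)\|\leq \const_{\dffun}(1+\|u\|^{\sexp})$.

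Next, apply $|a+b+c|^q \leq 3^{q-1}(|a|^q+|b|^q+|c|^q)$ to the recursion \eqref{eq:sde-disc-0} and take expectation. Since $\xi_{n+1}$ is independent of $X(s_n)$ with all Gaussian moments finite, this yields
\begin{equation*}
\EE_{\drft^{(L)}}\|X(s_{n+1})\|^q \leq \const_2\Bigl(1+\EE_{\drft^{(L)}}\|X(s_n)\|^q + \EE_{\drft^{(L)}}\|X(s_n)\|^{q\kexp/2} + \EE_{\drft^{(L)}}\|X(s_n)\|^{q\sexp}\Bigr),
\end{equation*}
where $\const_2$ depends on $q,\Delta,\const_0,\const_1,\const_{\dffun}$ but not on $L$. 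The inductive hypothesis (applied at $n$ with the three exponents $q$, $q\kexp/2$, $q\sexp$) bounds each of these expectations uniformly in $L$, completing the induction.

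The main subtlety is precisely the point noted above: because the RKHS-induced bound on $\drft^{(L)}$ and the growth bound on $\dffun$ are both genuinely polynomial, any attempt to run the induction for a single exponent $p$ fails to close. Strengthening the induction to all exponents $q>0$ is the key and makes the argument routine; no Gronwall-type device is needed since $N_0$ is finite and the constants may freely depend on $n$ (and hence on $N_0$).
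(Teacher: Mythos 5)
Your proof is correct and follows essentially the same route as the paper: apply the power-mean inequality to the recursion \eqref{eq:sde-disc-0}, bound $\|\drft^{(L)}(u)\|$ via \eqref{eq:rkhs-l2norm-est} together with \Cref{assum:sde-knl}-\ref{item-assum-knl}, bound $\|\dffun(u)\|_{\mrm{op}}$ via \Cref{assum:sde-knl}-\ref{item-assum-dffun}, and iterate over the finitely many steps $n\leq N_0$. Your explicit strengthening of the induction hypothesis to all exponents $q>0$ is exactly what the paper's terser ``iterate the inequality, using $\EE\|X(0)\|^q<\infty$ for all $q$'' step does implicitly, since the exponent inflates at each iteration.
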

\begin{proof}
Notice that \eqref{eq:sde-disc-0} implies 
	\begin{align*}
		\|X(s_n)\|^p \leq& 3^{p-1}\lf( \|X(s_{n-1})\|^p + \|\drft^{(L)}\|_{\Hsp_{\knl}}\|\knl(X(s_{n-1}),X(s_{n-1}))\|^p_{\mrm{op}}\Delta^p+\|\s(X(s_{n-1}))\|^p_{\mrm{op}}\|\xi_{n}\|^p\ri).
	\end{align*}
Taking expectation and applying assumptions on $\drft^{(L)}$ and $\dffun$ gives $$\EE_{\drft^{(L)}}\|X(s_n)\|^p \leq \const_{p,0}\lf(1+ \EE_{\drft^{(L)}}\|X(s_{n-1})\|^{p(1\vee\sexp \vee \kexp)}\ri).$$
The assertion follows by iterating the above inequality and using the assumption that $\EE\|X(0)\|^q < \infty$ for any $q>0$.
\end{proof}

The following lemma provides a slight generalization of the convergence of integrals with respect to probability measures under a uniform integrability type condition. The proof is included for completeness.
\begin{lemma}\label{lem:uni-int}
	Let $\{\gmeas^{(L)}\}$ a sequence of probability measures on $\R^{d_0}$ and  $\gmeas^{(L)} \RT \gmeas$ as $L\rt \infty$. Let $ \Lambda: \R^{d_0} \rt [0,\infty)$ be a lower-semicontinuous function such that
	\begin{align}\label{assum-1}
		\cnst_0 \doteq \sup_L \int_{\R^{d_0}} \Lambda(u)\ \gmeas^{(L)}(du) <\infty.
	\end{align}
	Suppose that $\{h^{(L)}: \R^{d_0} \rt \R^{d_1}\} \subset C(\R^{d_0},\R^{d_1})$  is a sequence of continuous functions satisfying the following conditions:
	\begin{enumerate}[label=(\alph*), ref=(\alph*)]
		\item \label{item:uni-cmpt} $h^{(L)} \stackrel{L \rt \infty}\Rt h \in C(\R^{d_0},\R^{d_1})$ uniformly on compact set of $\R^{d_0}$; specifically, for any compact sets $\cmpt \subset \R^{d_0}$, 
		$\dst \sup_{u \in \cmpt}\|h^{(L)}(u) -h(u)\|  \stackrel{L\rt \infty}\Rt  0;$ 
		
		\item \label{item:uni-int} $\sup_{L}\|h^{(L)}(u)\| / \Lambda(u)  \rt 0$, $|h(u)|/\Lambda(u)\rt 0$  as $\|u\| \rt \infty$.
	\end{enumerate}
	Then as $L \rt \infty$, $\dst\int_{\R^{d_0}} h^{(L)}(u) \gmeas^{(L)}(du) \rt \int_{\R^{d_0}} h(u) \gmeas^{(L)}(du)$.
\end{lemma}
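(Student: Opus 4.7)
The plan is a standard truncate–approximate argument: use weak convergence on a ball where the functions are well-controlled, and use the tail domination by $\Lambda$ (uniformly in $L$) to kill the integrals outside the ball. First, I would fix a continuous cutoff: for each $R>0$ choose $\phi_R: \R^{d_0} \rt [0,1]$ continuous with $\phi_R \equiv 1$ on $\overline{B_R}$ and $\phi_R \equiv 0$ outside $\overline{B_{R+1}}$ (e.g., a piecewise linear function of $\|u\|$). Then write the telescoping decomposition
\begin{align*}
\int h^{(L)}\, d\gmeas^{(L)} - \int h\, d\gmeas
=&\ \int (1-\phi_R) h^{(L)}\, d\gmeas^{(L)} + \int \phi_R (h^{(L)} - h)\, d\gmeas^{(L)} \\
&\ + \Big(\int \phi_R h\, d\gmeas^{(L)} - \int \phi_R h\, d\gmeas\Big) - \int (1-\phi_R) h\, d\gmeas.
\end{align*}

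Second, I would bound the two tail terms using hypothesis \ref{item:uni-int}: given $\vep>0$, pick $R_0$ so large that $\sup_L \|h^{(L)}(u)\|/\Lambda(u) \leq \vep$ and $\|h(u)\|/\Lambda(u) \leq \vep$ whenever $\|u\|\geq R_0$. Since $1-\phi_R$ is supported in $\{\|u\|\geq R\}$, for any $R \geq R_0$ and all $L$,
$$\Big\|\int (1-\phi_R) h^{(L)}\, d\gmeas^{(L)}\Big\| \leq \vep \int \Lambda\, d\gmeas^{(L)} \leq \vep\, \cnst_0,$$
and similarly $\|\int (1-\phi_R) h\, d\gmeas\| \leq \vep \int \Lambda\, d\gmeas$. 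The key auxiliary fact needed here is $\int \Lambda\, d\gmeas \leq \cnst_0$, which follows from the lower semicontinuity of $\Lambda$ together with the Portmanteau theorem: $\int \Lambda\, d\gmeas \leq \liminf_L \int \Lambda\, d\gmeas^{(L)} \leq \cnst_0$.

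Third, for fixed $R$, I would dispose of the remaining two terms as $L \rt \infty$. The middle term is bounded by
$$\Big\|\int \phi_R(h^{(L)}-h)\, d\gmeas^{(L)}\Big\| \leq \sup_{u \in \overline{B_{R+1}}} \|h^{(L)}(u)-h(u)\| \stackrel{L\rt \infty}\Rt 0$$
by the uniform-on-compacts convergence \ref{item:uni-cmpt}. The difference $\int \phi_R h\, d\gmeas^{(L)} - \int \phi_R h\, d\gmeas \rt 0$ because $\phi_R h$ is continuous and bounded (it is compactly supported and continuous), so weak convergence $\gmeas^{(L)} \RT \gmeas$ applies directly.

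Combining these, for any $\vep>0$ we first choose $R$ large so the tail contributions are at most $2\vep \cnst_0$ in norm, then let $L\rt \infty$ to kill the middle two terms, yielding $\limsup_L \|\int h^{(L)}\, d\gmeas^{(L)} - \int h\, d\gmeas\| \leq 2\vep \cnst_0$, and $\vep \rt 0$ finishes the proof. There is no real obstacle here; the only subtlety worth flagging is the Portmanteau-type bound $\int \Lambda\, d\gmeas \leq \liminf_L \int \Lambda\, d\gmeas^{(L)}$, which is valid precisely because $\Lambda$ is nonnegative and lower semicontinuous, allowing the monotone approximation of $\Lambda$ by bounded continuous functions from below.
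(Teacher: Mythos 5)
Your proof is correct, and it follows the same basic truncate-and-control-the-tails strategy as the paper, but the two arguments diverge in how the weak-convergence step is executed. The paper splits $\int h^{(L)}\,d\gmeas^{(L)} = \int h\,d\gmeas^{(L)} + \SC{R}_L$, kills $\SC{R}_L$ exactly as you kill your middle term (uniform convergence on a ball plus the $\Lambda$-domination off the ball), and then handles $\int h\,d\gmeas^{(L)} \rt \int h\,d\gmeas$ by appealing to a standard uniform-integrability result, made rigorous via the Skorohod representation theorem. Your four-term telescoping with a continuous cutoff $\phi_R$ avoids that last appeal entirely: since $\phi_R h$ is bounded and continuous, the convergence $\int \phi_R h\,d\gmeas^{(L)} \rt \int \phi_R h\,d\gmeas$ is just the definition of weak convergence, and the remaining tail $\int (1-\phi_R)h\,d\gmeas$ is controlled by the Portmanteau-type bound $\int \Lambda\,d\gmeas \leq \liminf_L \int \Lambda\,d\gmeas^{(L)} \leq \cnst_0$ — the same fact the paper derives from lower semicontinuity and Fatou's lemma. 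Your version is therefore slightly more self-contained and elementary; the paper's is slightly shorter at the cost of invoking Skorohod. (Two small remarks: the paper also records a tightness bound $\sup_L \gmeas^{(L)}\{\|u\|>R_0\}\leq \vep$ that its own estimate never actually uses, so your omission of it is harmless; and the displayed conclusion of the lemma contains a typo — the right-hand side should be $\int h\,d\gmeas$, which is what both you and the paper's proof actually establish.)
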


\begin{proof}
	First observe that from \eqref{assum-1}, by  lower semicontinuity of $\Lambda$ and Fatou's lemma 
	$ \int_{\R^{d_0}} \Lambda(u) \gmeas(du) \leq \cnst_0.$
	Fix an $\vep>0$. Since $\{\gmeas^{(L)}\}$ is tight and \ref{item:uni-int} holds, there exists an $R_0>0$ such that 
	\begin{align*}
		\sup_L\gmeas^{(L)}\{u \in \R^d: \|u\| > R_0\} \leq \vep, \quad \sup_{\|u\| > R_0} \sup_{L}\|h^{(L)}(u)\|\big/\Lambda(u) \leq \vep, \quad \sup_{\|u\| > R_0}|h(u)|/\Lambda(u) \leq \vep.
	\end{align*}		
	Writing $\int_{\R^d} h^{(L)}(u) \gmeas^{(L)}(du) = \int_{\R^d} h(u) \gmeas^{(L)}(du) + \SC{R}_n$, we first show that the term
	$$\SC{R}_n \equiv \int_{\R^d} \lf(h^{(L)}(u) - h(u)\ri) \gmeas^{(L)}(du) \stackrel{L\rt \infty}\Rt 0.$$  To see this, notice that
	\begin{align*}
		|\SC{R}_n| \leq &\ \sup_{\|u\| \leq R_0}\|h^{(L)}(u) - h(u)\| + \int_{\|u\| >R_0} \lf(|h^{(L)}(u)| + |h(u)|\ri) \gmeas^{(L)}(du)\\
		\leq &\ \sup_{\|u\| \leq R_0}\|h^{(L)}(u) - h(u)\|  + \vep \int_{\|x\| >R_0} \Lambda(u) \gmeas^{(L)}(du)\\
		\leq &\ \sup_{\|u\| \leq R_0}\|h^{(L)}(u) - h(u)\| + \vep \cnst_0 \stackrel{L\rt \infty}\rt  \vep \cnst_0.
	\end{align*} 
Since $\vep$ is arbitrary, it follows that $\SC{R}_n \stackrel{n\rt \infty}\rt 0$. The convergence,  $\int_{\R^d} h(u) \gmeas^{(L)}(du) \stackrel{L\rt \infty}\Rt \int_{\R^d} h(u) \gmeas(du)$, follows from standard result on uniform integrability. An easy way to see this is to invoke Skorohod representation theorem to get a probability space $(\tilde \Omega, \tilde{\SC{F}}, \tilde \PP)$, and random variables $U, U^{(L)}, L\geq 1$ on $(\tilde \Omega, \tilde{\SC{F}}, \tilde \PP)$ such that $U^{(L)} \sim \gmeas^{(L)}, \ U \sim \gmeas$ and $U^{(L)}  \rt \tilde U^{(L)} $ $\tilde \PP$-a.s. The conditions on $h$ then imply that 
	the sequence $\{h(U^{(L)})\}$ is uniformly integrable.
\end{proof}	

\begin{lemma}\label{lem:normal}
	  Suppose $V$ and $Z$ are respectively $\R^{d_0}$ and $\R^{d_1}$-valued random variables satisfying the following
	\begin{align*}
		V|Z=z \sim \No_{d_0}(\omat z+ \mfk{g}, Q), \quad Z \sim \No_{d_1}(\mfk{f}, P),
	\end{align*}
	where $\omat \in \R^{d_0\times d_1}, Q \in \R^{d_0\times d_0}, P \in \R^{d_1\times d_1}, \mfk{g} \in \R^{d_0}, \mfk{f} \in R^{d_1}$. Assume that $P$ and $Q$ are non-singular, and the parameters $\mfk{g}$ and $Q$ do not depend on $z$. Then
	$Z|V=v \sim \No_{d_1}(\mfk{m}, S)$, where
	\begin{align*}
		S =&\ (\omat^\top Q^{-1}\omat +P)^{-1} = P- P\omat^\top(Q+\omat P \omat^\top)^{-1}\omat P\\
		\mfk{m} =&\ \mfk{f}+S\omat^\top Q^{-1}(v-\mfk{g}-\omat \mfk{f})=\mfk{f}+P\omat^\top(Q+\omat P\omat^\top)^{-1}(v-\mfk{g}-\omat \mfk{f}).
	\end{align*}
	
\end{lemma}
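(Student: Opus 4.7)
The plan is to compute the conditional density $p_{Z|V}(z|v)$ directly from Bayes' rule, $p_{Z|V}(z|v) \propto p_{V|Z}(v|z)\, p_Z(z)$, viewing the right-hand side as a function of $z$ with $v$ fixed, and then to recognize the result as a Gaussian density by completing the square in $z$. The two equivalent expressions for $S$ and $\mathfrak m$ stated in the lemma will then follow from the Woodbury matrix identity.

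First, I would write
\[
-2\log p_{Z|V}(z|v) = (v-\omat z-\mathfrak g)^\top Q^{-1}(v-\omat z-\mathfrak g) + (z-\mathfrak f)^\top P^{-1}(z-\mathfrak f) + C(v),
\]
where $C(v)$ collects all terms independent of $z$. Expanding both quadratic forms and grouping by powers of $z$ gives
\[
-2\log p_{Z|V}(z|v) = z^\top\bigl(\omat^\top Q^{-1}\omat + P^{-1}\bigr) z \;-\; 2 z^\top \bigl(\omat^\top Q^{-1}(v-\mathfrak g) + P^{-1}\mathfrak f\bigr) \;+\; \tilde C(v).
\]
Since $P$ and $Q$ are nonsingular and positive definite, the matrix $\omat^\top Q^{-1}\omat + P^{-1}$ is positive definite; hence completing the square identifies $p_{Z|V}(\cdot|v)$ as a Gaussian density with precision matrix $S^{-1} = \omat^\top Q^{-1}\omat + P^{-1}$ and mean
\[
\mathfrak m = S\bigl(\omat^\top Q^{-1}(v-\mathfrak g) + P^{-1}\mathfrak f\bigr) = \mathfrak f + S\omat^\top Q^{-1}(v-\mathfrak g - \omat \mathfrak f),
\]
where the last equality uses $S P^{-1} \mathfrak f = \mathfrak f - S\omat^\top Q^{-1}\omat \mathfrak f$, which is immediate from $S^{-1} = \omat^\top Q^{-1}\omat + P^{-1}$.

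The second equivalent expression $S = P - P\omat^\top(Q+\omat P\omat^\top)^{-1}\omat P$ is the Woodbury (Sherman-Morrison-Woodbury) identity applied to $(P^{-1} + \omat^\top Q^{-1}\omat)^{-1}$. Substituting this form of $S$ into $\mathfrak f + S\omat^\top Q^{-1}(v-\mathfrak g - \omat \mathfrak f)$ and applying the standard push-through identity $S\omat^\top Q^{-1} = P\omat^\top(Q+\omat P\omat^\top)^{-1}$ (which is verified by multiplying both sides by $\omat P \omat^\top + Q$ and using $S^{-1} = \omat^\top Q^{-1}\omat + P^{-1}$) yields the second expression for $\mathfrak m$.

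There is no real obstacle: the content is completion of the square plus the Woodbury identity, both routine. The only step requiring care is bookkeeping the two equivalent forms and verifying the push-through identity, which is why I would state it separately rather than absorb it into the completion-of-square step.
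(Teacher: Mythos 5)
Your proof is correct and complete; the paper itself states this lemma without proof (it is presented as a standard fact about Gaussian conditionals), so there is nothing to compare against — your Bayes-plus-completion-of-the-square argument, with Woodbury and the push-through identity supplying the two equivalent forms, is exactly the standard route one would take. One point worth flagging: your derivation yields the precision matrix $S^{-1} = \omat^\top Q^{-1}\omat + P^{-1}$, whereas the lemma's first displayed expression reads $(\omat^\top Q^{-1}\omat + P)^{-1}$; the latter is evidently a typo in the paper (it is inconsistent with the Woodbury form $P - P\omat^\top(Q+\omat P\omat^\top)^{-1}\omat P$ given alongside it), and your version is the correct one.
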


\np
{\bf Linear SDE:} 
Consider the possibly time-inhomogeneous linear SDE:
\begin{align*}
	dX(t) = \big(B(t)X(t)+v(t)\big)dt + \dffun(t) dW(t), \quad X(0) \sim \No (x_0, S_0).
\end{align*}
Then $X(t) \sim \No(\mfk{m}(t), S(t))$ where $\mfk{m}$ and $S$ satisfy the ODE system
\begin{align*}
	\f{d\mfk{m}(t)}{dt} =&\ B(t) \mfk{m}(t) + v(t), \quad \mfk{m}(0) = x_0\\
	\f{dS(t)}{dt}=&\ B(t) S(t)+S(t)B^\top(t)+\s\s^\top(t), \quad S(0) = S_0.
\end{align*}
The last equation is an example of differential Sylvester equation --- a special case of differential Lyapunov equation \cite{BeBeHe19}. In general, we need a numerical ODE solver to solve this. But in the case of time homogeneous SDE, that is, when $B, v$ and $\s$ do not depend on $t$, and when $B$ is invertible, we can get almost a closed form expression of the solution given by
\begin{align}\label{eq:lsde-mv}
	\begin{aligned}
		\mfk{m}(t) =&\ -B^{-1}v+e^{Bt}(x_0+B^{-1}v) = e^{Bt}x_0+ \lf(e^{Bt}-I\ri)B^{-1}v\\
		S(t)=&\ e^{Bt}S_0e^{B^\top t}+\int_0^t e^{B(t-s)}\s\s^\top e^{B^\top(t-s)}dt.
	\end{aligned}
\end{align}

\bibliographystyle{plainnat}
\bibliography{Ref-ML}

\end{document}